\newcommand{\cS}{{\mathcal S}}
\newcommand{\cA}{{\mathcal A}}
\newcommand{\cB}{{\mathcal B}}
\newcommand{\cP}{{\mathcal P}}
\newcommand{\cR}{{\mathcal R}}
\newcommand{\cI}{{\mathcal I}}
\newcommand{\cE}{{\mathcal E}}
\newcommand{\cT}{{\mathcal T}}
\newcommand{\NoSpaceET}{{Event Tables}}
\newcommand{\NoSpaceETAlg}{{SSET}}
\newcommand{\ET}{{Event Tables }}
\newcommand{\ETAlg}{{SSET }}
\newcommand{\ETAlgLong}{{Stratified Sampling from Event Tables }}
\def\app#1#2{%
  \mathrel{%
    \setbox0=\hbox{$#1\sim$}%
    \setbox2=\hbox{%
      \rlap{\hbox{$#1\propto$}}%
      \lower1.1\ht0\box0%
    }%
    \raise0.25\ht2\box2%
  }%
}
\def\eCon{\omega}
\def\eHist{\tau}
\def\eSpec{\nu}
\newcommand{\suchthat}{\;\ifnum\currentgrouptype=16 \middle\fi|\;}
\DeclareMathOperator{\E}{\mathbb{E}}
\DeclareMathOperator{\R}{\mathbb{R}}
\DeclareMathOperator{\Z}{\mathbb{Z}}
\DeclareMathOperator{\I}{\mathbb{I}}
\newtheorem{lem}{Lemma}
\newtheorem{thm}{Theorem}
\newtheorem{thm_app}{Theorem}
\newtheorem{mydef}{Definition}
\newtheorem{prop}{Proposition}
\newtheorem{rem}{Remark}
\newcommand{\assign}{$\leftarrow$ }
\newcommand{\eg}{\textit{e.\@g.\@}}
\newcommand{\st}{\textit{s.\@t.\@}}
\newcommand{\ie}{\textit{i.\@e.\@}}
\newcommand{\wrt}{\textit{w.\@r.\@t.\@}}
\title{Event Tables for Efficient Experience Replay}
\author{Varun Kompella \email varun.kompella@sony.com\\
       \addr Sony AI
       \AND
       Thomas J. Walsh \email thomas.walsh@sony.com\\
       \addr Sony AI
       \AND
       Samuel Barrett \email samuel.barrett@sony.com\\
       \addr Sony AI
       \AND
       Peter R. Wurman \email peter.wurman@sony.com\\
       \addr Sony AI
       \AND
       \name Peter Stone \email pstone@cs.utexas.edu\\
       \addr Sony AI\\
       The University of Texas at Austin, Austin, TX 78712, USA
}
\begin{document}
\maketitle

\begin{abstract}
Experience replay (ER) is a crucial component of many deep reinforcement learning (RL) systems.
However, uniform sampling from an ER buffer can lead to slow convergence and unstable asymptotic behaviors. This paper introduces \ETAlgLong (\NoSpaceETAlg), which partitions an ER buffer into \NoSpaceET, each capturing important subsequences of optimal behavior. We prove a theoretical advantage over the traditional monolithic buffer approach and combine  \ETAlg with an existing prioritized sampling strategy to further improve learning speed and stability.  Empirical results in challenging MiniGrid domains, benchmark RL environments, and a high-fidelity car racing simulator demonstrate the advantages and versatility of \ETAlg over existing ER buffer sampling approaches.
\end{abstract}

\section{Introduction}
Many recent deep reinforcement learning (RL) breakthroughs \citep{Mnih2013Playing,Silver2016Mastering,Wurman2022Outracing} rely on Experience Replay (ER) and the corresponding buffer (an ERB) to store massive amounts of data that is re-sampled during training. 
Consider, however, a high-frequency car-racing simulator where an agent takes thousands of steps to complete a lap and where crucial \emph{events}, such as passing another car, may occur on just a few of those steps.  Uniform random sampling from an ERB populated with all the lap data is highly unlikely to focus on this key event. Prioritized Experience Replay (PER)~\citep{Schaul2016Prioritized}, which skews sampling based on TD Errors, might do better, but may also focus on states unlikely to be reached by the optimal policy~\citep{Oh2021Model}.   To address these limitations of existing ER methods, this paper introduces \NoSpaceET, ERB partitions that hold sub-trajectories leading to events, and a corresponding wrapper algorithm, \ETAlgLong (\NoSpaceETAlg), to build training samples for off-policy RL.  
%When events are correlated with optimal behavior, \ETAlg improves convergence and stability versus monolithic ERB implementations.

In large domains, simply over-sampling the small number of disconnected state / actions where events occur is unlikely to be beneficial since initial state values would still rely on uniform sampling of the states between event occurrences.  Instead, we take a lesson from previous works on trajectory-based backups \citep{Barto1995Learning,Karimpanal2018Experience} and store the finite-length history that preceded the event in the corresponding event table. Intuitively, this data forms a ``fast lane'' for backups between event occurrences that chains back to the initial state(s). And by sampling individual steps from each table i.i.d., \ETAlg avoids the instability~\citep{Bruin2015Importance,Bruin2016Improved} of using temporally correlated data in mini-batches. 

 We develop a theoretical underpinning for the fast-lane intuition and show that, if the events are correlated with optimal behavior and histories are sufficiently long, \ETAlg can dramatically speed up the convergence of off-policy learning compared to using uniform sampling or even PER.  Even if those conditions fail, a bias correction term preserves the Bellman target, although convergence may be slowed.  From our empirical results, these properties translate to different off-policy RL base learners including DDQN~\citep{Van2016DDQN}, SAC~\citep{Haarnoja2018Soft}, and the distributional QR-SAC~\citep{Wurman2022Outracing} algorithm.

%Specifically, PER focuses on states with high Bellman errors, which can be beneficial but (1) these states may not be on the optimal path and, (2) when used with function approximation, PER can prematurely stop sampling certain states.  

While \ETAlg is a new way to optimize sampling from an ERB, it is complementary to many existing prioritization  approaches or behavior shaping techniques.  
Specifically, \ETAlg can be applied based on known events with TD-error PER used within each table, thereby focusing on crucial states that also need value updates. We apply this “best of both worlds” approach in many of our experiments and show that it performs better than using only one of the techniques.
Similarly, \ETAlg outperforms potential-based reward shaping in our empirical experiments, but the combination of two provides both better agent exploration and more efficient backups. 
Finally, viewing each event table as a data set for a particular skill, \ETAlg can mitigate catastrophic forgetting \citep{Goodfellow2014Empirical,Kirkpatrick2017Overcoming} in RL.  Our experiments show this advantage in acquiring multiple skills (Section~\ref{sec:obstacle}) and retaining skills over time (Section~\ref{sec:oncourse}).

%and also demonstrate that coupling \ETAlg with the recently proposed CAGrad~\citep{Liu2021Conflict} method to further ensure skill stability by preventing neural network updates from “unlearning” any one particular skill.

%In domains where function approximation is employed \ETAlg can also stabilize asymptotic behavior because each mini-batch contains a minimal proportion of data from crucial portions of optimal trajectories.  Thus, \ETAlg can be seen as a mitigation strategy for catastrophic forgetting \citep{Goodfellow2014Empirical,Kirkpatrick2017Overcoming} in RL where each event table contains data needed to learn a specific skill (e.g. opening a door or jumping over a pit) and the stable mixtures in each mini-batch promotes the retention of all these skills.  We show empirically that \ETAlg can be combined with the recently proposed CAGrad~\citep{Liu2021Conflict} method to further ensure skill stability by preventing neural network updates from “unlearning” any one particular skill.

This paper makes several contributions for ER using \NoSpaceET.  (1) We introduce \ET and the \ETAlg framework.  (2) We derive theoretical guarantees quantifying the sample complexity improvement with properly designed events and provide a bias correction that ensures the Bellman target is preserved.  (3) We empirically demonstrate the advantages of \ETAlg over uniform sampling or PER in challenging MiniGrid environments and continuous RL benchmarks (MuJoCo and Lunar Lander), and find that combining \ETAlg with TD-error PER or potential-based reward shaping can further improve learning speed. (4) We also provide results in the  highly-realistic  \emph{Gran Turismo Sport} race-car simulator where \ETAlg improves learning speed and policy stability.

%\section{Background and Related Work}

%\subsection{RL and Experience Replay}\label{sec:background}
\section{Terminology}\label{sec:background}
Following standard definitions \citep{Sutton2018Reinforcement}, we consider a reinforcement learning agent acting in an episodic Markov Decision Process $M = \langle S, A, R, \cP, \gamma, \cI, \beta \rangle$ with state space $S$, action space $A$, reward function $R: S, A \rightarrow Pr[\Re]$, transition kernel $\cP: S, A \rightarrow Pr[S]$, discount factor $\gamma \in [0, 1)$, initial state distribution $\cI: Pr[S]$, and episode termination function $\beta: S \rightarrow \{0,1\}$ .   At time step $t$, the agent uses its current behavior policy $\pi_t: S \rightarrow Pr[A]$ to select an action and then observes the reward $r_t \sim R(s_t, a_t)$ and next state $s' \sim \cP(s_t, a_t)$.  If $\beta(s') = 1$ or a horizon of $T$ is reached, then the episode ends.  The value function of a policy is defined by its long-term discounted return:  $Q^{\pi}(s,a) = R(s,a) + \gamma E_{s’ \sim \cP(s,a)}[V^{\pi}(s’)] $, where $V^{\pi}(s) = Q^{\pi}(s,\pi(s))$. The agent is tasked with finding an optimal policy $\pi^*$ and the corresponding $Q^*(s,a)$ that maximizes the expected discounted return.  In this paper we focus on model-free off-policy methods that learn $Q^*(s,a)$ directly from data through incremental updates, such as Q-learning’s~\citep{Watkins1992Q} gradient-style update to the current value function $Q_k(s,a)$: $Q_{k+1}(s,a) = (1- \alpha) Q_k(s,a) + \alpha \delta $ with learning rate $\alpha \in (0, 1]$ and temporal difference (TD)-error $\delta = r(s,a) + \gamma V_k(s') - Q_k(s,a)$ for $ V_k(s') = \max_{a'}{Q_k(s’, a')}$.

In deep reinforcement learning, $S$ is typically continuous and high dimensional, so the value function and (sometimes) policy are represented by neural networks with parameters $\theta_{ik}$ for each network $i$.  To update the value networks, model-free deep RL algorithms like  DDQN~\citep{Van2016DDQN} make updates to $Q(s,a|\theta_k)$ along the gradient of the TD-error.
To improve stability, deep RL methods typically utilize a fixed \emph{target network} for computing $V_k(s')$ that is only updated after a batch of updates.

Experience replay~\citep{Lin1992Self} is a technique used in off-policy RL to improve sample efficiency by performing gradient updates based on many experience tuples $\langle s,a,r,s’\rangle$ stored in an ERB (see Algorithm~\ref{alg:targetq} in the Appendix as an example).  In the deep learning case, the ERB often stores tens or hundreds of millions of tuples with mini-batches sampled from the buffer and then used in gradient updates to $\theta_{ik}$.

Formally, we define an \emph{event specification} (\emph{event spec}): $\eSpec = \langle\eCon,\eHist\rangle$, composed of a (Boolean) \emph{event condition} over states $\eCon: S \rightarrow \{0, 1\}$ and a history length $\eHist$. We say an event \emph{occurs} in state $s$ if $\eCon(s)$ is true.  In this paper, we assume event conditions are specified by domain experts or RL practitioners; we provide guidance on selecting a default set of useful event conditions (refer to Section~\ref{sec:pick_events_appendix}). Section~\ref{sec:theory_appendix} proves that event conditions that are true only in states that are visited more often by the optimal policy than the behavior collection policy (see Definitions~\ref{def:density}, \ref{def:disparity}, and \ref{def:event_cond} in the Appendix) yield sample efficiency gains when paired with sufficiently long histories.   Terminal goal states, high reward states, bottleneck states, or important rare states (e.g. passing another car) are all strong candidates.  To avoid under-sampling any crucial states, histories $\tau$ must be long enough (in expectation) to reach back to a previous event occurrence, an initial state, or the horizon and chain together from $\cI$ to the optimal policy's final state(s) (see Figure~\ref{fig:theory}). Finally, outside of the core theory, when using function approximation, negatively rewarding states that are not often encountered by the optimal policy may be useful event conditions to avoid catastrophic forgetting of these possible outcomes from nearby states.

\section{Related Work}\label{sec:related}
Many approaches have been proposed for prioritized ERB sampling.  The most widely used is Prioritized Experience Replay (PER)~\citep{Schaul2016Prioritized}, which prioritizes state/actions with the largest TD errors.  However, PER does not specifically focus on states aligned with the optimal policy: indeed experiences that have zero TD error under one policy may never be sampled again even after the behavior policy has changed. 
In addition to empirical comparisons against PER, we show that \ETAlg can be used with PER to leverage the benefits of both approaches: focusing leaning on high-value event trajectories aligned with the optimal policy, but also prioritizing states along those trajectories with high Bellman error.  
Other methods that augment vanilla PER with prioritization based on model error~\citep{Oh2021Model} or a meta-learning process~\citep{Zha2019Experience} could similarly be used in conjunction with \ETAlg.

The importance of sampling along ``good'' trajectories was explored in classical RL through RTDP~\citep{Barto1995Learning} and in deep RL~\citep{Karimpanal2018Experience}.  The latter can cause unwanted data correlation in mini-batches~\citep{Bruin2015Importance, Bruin2016Improved, Huang2019Efficient}.  By contrast, \ETAlg does not attempt to use data from the same trajectory in a mini-batch, instead relying on sampling to spread trajectories across many mini-batches, providing both stability and backups along a trajectory.
Variants of Topological Experience Replay~\citep{Hong2021Topological, Lee2019Sample} also attempted to prioritize backups along a trajectory using a graph embedding originating from goal states.  By contrast, \ETAlg does not require goal states or a discrete state embedding.

% Narasimhan2015Language - two tables, 1 for high rewards, 1 for other, history of 1, stratified sampling
% Sharma2020Stratified - three tables, 1 for high rewards, 1 for low, 1 for other, history of 1, stratified sampling
%  Du2022Lucid - two tables, 1 for trajetcories that did better in full disounted returns, 1 for regular, full trajectory histories, sampling prioritized by advantage (MC return - V), requires generative access to the environment
\ET generalize the ideas explored in several “multi-table” partitioning schemes for ERBs.  
In~\citep{Narasimhan2015Language} and \citep{Sharma2020Stratified} different tables are used to store high (or high and low) reward transitions separately from common transitions and stratified sampling is used to construct mini-batches.  By contrast, \ETAlg allows for any state-based event to partition the ERB, and more importantly stores trajectories that led to events, not just the events themselves, which is essential to ensure the sample complexity guarantees.  Empirical comparisons to a reward-based event approach without histories are provided in Section~\ref{sec:ser_appendix}.
Lucid Dreaming \citep{Du2022Lucid} stores trajectories that produce better Monte Carlo returns than the current value estimates in a separate table, but relies on generative access to the domain to create them, which is not a requirement for \NoSpaceETAlg. 
\citep{Bruin2016Improved} keeps two buffers, one for on-policy data and another more “uniform” buffer, but requires heavy kernel computation to maintain them.

%\ETAlg also embodies lessons from~\citep{Bruin2016Improved}, which keeps two buffers, one for on-policy data and another more “uniform” buffer.  However, that method is kernelized, leading to inefficiencies in buffer maintenance , and is focused on data balance instead of learning efficiency.
A closer comparison is the use of multi-tables in the training of a simulated race car~\citep{Wurman2022Outracing}.  There, the ERB is partitioned into multiple tables for different skill training scenarios based on initial states (e.g. tables for driving alone, driving in traffic, etc.)  \ET are a generalization of this idea where data is dynamically mapped to tables rather than partitioning on an episode’s initial state.  

\ETAlg also bears resemblance to multi-task and lifelong learning RL algorithms that balance the amount of data used from different tasks.  Some of these approaches (e.g. \citet{Lin2019Adaptive}) sacrifice general performance for success on a ``main'' task, or mitigate catastrophic forgetting across a sequence of tasks~\citep{Isele2018Selective,Rolnick2019Experience,Yin2017knowledge} which are not applicable in our single-task setting.  
Hindsight Experience Replay~\citep{Andrychowicz2017Hindsight} augments the monolithic ERB in a multi-task setting by imagining goals that the trajectories could have achieved.  \ETAlg is not restricted to the multi-task setting but could be combined with HER in such cases. 
CAGrad~\citep{Liu2021Conflict} constructs gradients in a way that minimizes policy degradation on the worst affected task, and is a natural pair with \NoSpaceETAlg, where updates to the model can be performed in a way that preserves performance in each of the ERB partitions (see experiments in Section~\ref{sec:cagrad}).  
%In lifelong learning, ERB partitioning or weighting strategies are sometimes employed to mitigate \emph{catastrophic forgetting} when moving from one task to another~\citep{Isele2018Selective,Rolnick2019Experience,Yin2017knowledge}, but this is largely orthogonal to our focus on a single learning task.

%Examples include methods with buffers for the current task versus older tasks~\citep{Rolnick2019Experience}, reward or task based sampling weights \citep{Isele2018Selective} or using task-specific buffers during policy distillation~\citep{Yin2017knowledge}.  These problems are largely orthogonal to our single-task learning task.  In addition, \ETAlg is designed to speed up learning, not just to prevent catastrophic forgetting.

%\ET draw important lessons from other works supporting the key motivations of efficient yet diverse backups and policy stability.  \ET focus backups along ``good’’ trajectories similar to the backup strategy of .  However, unlike RTDP, \ET must construct mini-batches, requiring sample diversity~\citep{Huang2019Efficient, Zhdanov2019Diverse} not found in highly correlated single-trajectory steps~\citep{Bruin2015Importance}.

\ETAlg uses prior knowledge to ensure a focus on key areas of the state space and therefore has  connections to initial state selection~\citep{Ivanovic2019Barc} and potential-based reward shaping~\citep{Ng1999Policy, Grzes2017Reward}.   
Section~\ref{sec:shaping} shows \ETAlg outperforming reward shaping on comparable states, but also demonstrates that the two can be used together to improve exploration (shaping) and focus value function backups (\NoSpaceET).
Because \ET often align with bottleneck states, they have a relationship to options \citep{Sutton1999Between}, but the mechanism in our work is through ER.  %, which often subsumes the benefits of options without adding extra exploration burdens~\citep{Jong2008Utility}.  
While events in our case come from domain knowledge, future work could utilize subgoal discovery techniques \citep{Kulkarni2016Deep,McGovern2001Automatic} to identify potential events.

\begin{figure}[t]
  \centering
  \includegraphics[width=0.65\columnwidth]{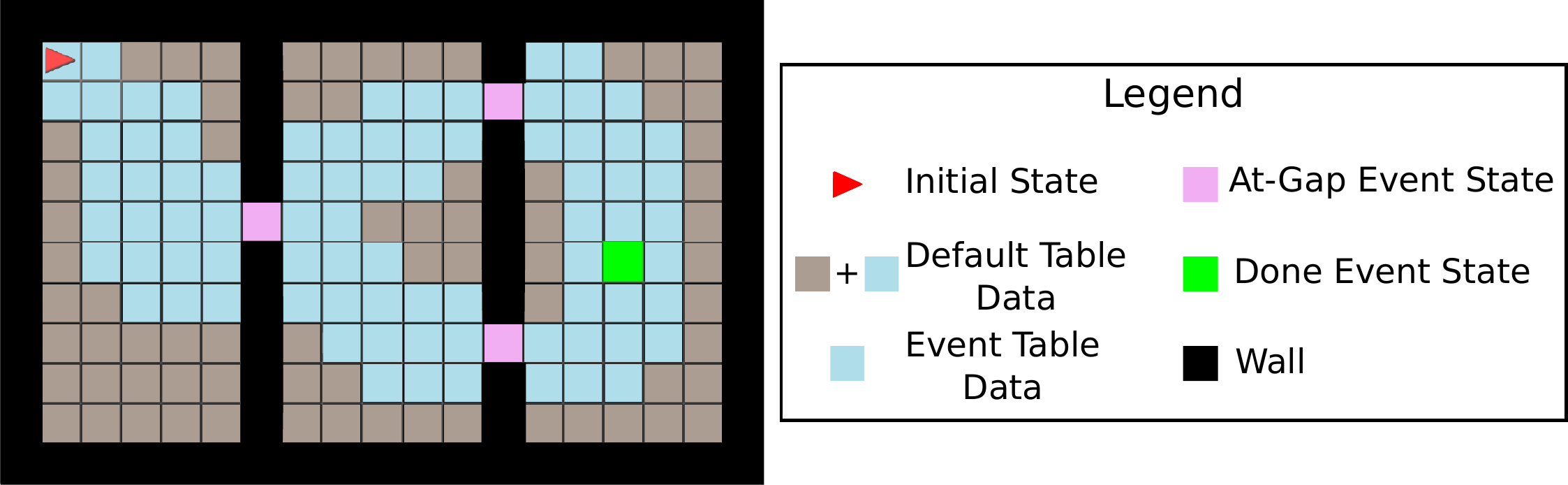}
  \caption{An example MiniGrid domain with event conditions for reaching the goal or a gap between rooms.  Blue squares indicate the ``fast lane'' of states that can be over-sampled because they appear in both the event tables and the default table.  Grey states appear only in the default table.
  }
  \label{fig:theory}
\end{figure}

\begin{algorithm}[!ht]
\setcounter{AlgoLine}{0}
\caption{\NoSpaceETAlg: \ETAlgLong}
\label{Alg:EvER}
\dbc*[l]{\begin{small}input parameters: \end{small}}
\hspace{2mm}\textbullet~$\eSpec_i = (\eCon_i, \eHist_i)~\forall i \in [1,n]$: event conditions and their corresponding history lengths \\
\hspace{2mm}\textbullet~$(\eta_i,\kappa_i, d_i)~\forall i \in [0,n]$: sampling probabilities, capacity sizes, and minimum data requirements \\
\hspace{2mm}\textbullet~$\mathbb{A}, \pi^b, T$ - Off-policy RL algorithm, behavior policy and episode length \\
$\cB^0$ \assign []$(\kappa_{0}), \cB^{\nu_i}$ \assign  []$(\kappa_{i})~\forall i \in [1,n]$, \\ \label{line:init}
\For{episode $k = 1$ \KwTo $\infty$}
{
  $E$ \assign  [] \dbc*[l]{\begin{small} init episode buffer \end{small}}
  \For{$t = 0$ \KwTo $T - 1$ (or episode termination)}
  {

    $s$ \assign current state observation \\
    execute action $a$ sampled from $\pi^b(s)$ \\
    observe  $r$ and next state $s'$ \\
    store transition $(s, a, r, s')$ in $\cB^0$ and $E$\\ \label{line:store_default}

% \commentsam{Should this be for i = 0 to N}
    \For{$i = 1$  \KwTo $n$ } 
    {
     \dbc*[l]{\begin{small}update event tables\end{small}}
      \If{$\eCon_i(s')$ \label{line:event}} 
      {
      store last $\eHist_i$ transitions $[E_{t-\eHist_i+1},...,E_{t}]$ in $\cB^{\nu_i}$ \label{line:event_store}
      } 
    }
  }
  $D \sim \cB^0 \underset{i}{\cup} \cB^{\nu_i},~ \forall i~ s.t.~ |\cB^{\nu_i}| \ge d_i $ \label{line:sample} \dbc*[l]{\begin{small}sample \& concat i.i.d.~minibatches\end{small}}
  Update weights for bias correction using $E$ \label{line:bias} \dbc*[l]{\begin{small} (Lemma~\ref{lem:bias_appendix} in Appendix) \end{small}}
  Update critic and policy networks using $\mathbb{A}$ and $D$
}
\end{algorithm}

\section{\ETAlgLong}
\label{sec:event_tables}

%We define an \emph{event specification} (event spec): $\eSpec = \langle\eCon,\eHist\rangle$, composed of a (Boolean) \emph{event condition} over states $\eCon: S \rightarrow \{0, 1\}$ and a history length $\eHist$. We say an event ``occurs'' in state $s$ if $\eCon(s)$ is true.  
%Section~\ref{sec:theory_appendix} proves that event conditions that are true only in states that are visited more often by the optimal policy than the behavior collection policy (see Definitions~\ref{def:density}, \ref{def:disparity}, and \ref{def:event_cond}) yield sample efficiency gains when paired with sufficiently long histories.  Terminal goal states, high reward states, bottleneck states, or important rare states (e.g. passing another car) are all strong candidates.  The other condition is that histories $\tau$ are long enough to (in expectation) reach back to a previous event occurrence, an initial state, or the effective horizon, thereby chaining together as illustrated in Figure~\ref{fig:theory}, and formally defined in Definition~\ref{def:event_sec}.  Finally, outside of the core theoretical result, when using function approximation, negatively rewarding states that are not often encountered by the optimal policy may be useful event conditions to avoid catastrophic forgetting of these possible outcomes from nearby states.

We now formally define \ETAlgLong (\NoSpaceETAlg; Algorithm~\ref{Alg:EvER}) and state our main theoretical result.  The full proof appears in Section~\ref{sec:theory_appendix}.  
%\commentp{Was introduced in Sec. 2 as "event specifications" - at least the first time we abbreviate we should note it.}
% TJW: The abbreviation is introduced where the full name is defined.
Given $n$ event specs $\{\eSpec_i | i \in [1, n]\}$, the ERB is partitioned into $n$ \emph{event tables}, $\cB^{\nu_i}$ and a ``default'' table $\cB^0$.  Each $\cB^{\nu_i}$ holds only time steps where $\eCon_i(s')$ was true or steps preceding the event occurrence in the $\eHist_i$-length history.
The default table $\cB^0$ holds all time steps, including those with event occurrences.  Data is inserted into each table in a FIFO manner with table capacities governed by parameter $\kappa_i$ and later used to construct training data for an off-policy RL algorithm $\mathbb{A}$.  

 On each agent step in \NoSpaceETAlg, experience $\langle s_{t}, a_{t}, r_{t}, s_{t+1} \rangle$ is stored in $\cB^0$ as well as an ephemeral buffer $E$ for the current episode (line \ref{line:store_default}).  If $\eCon_i(s_{t+1})$ is true (line \ref{line:event}), the experience and all the $\eHist_i$ steps preceding the event that were not already sent to $\cB^{\nu_i}$ are added there as well.  
 Intuitively, each table contains data necessary to train the value function in the area approaching an event occurrence and chain together to form a ``fast lane’’ (Figure~\ref{fig:theory}) for backups that will be over-sampled compared to monolithic ER.
 For simplicity Algorithm~\ref{Alg:EvER} assumes each event spec maps to a unique table (lines~\ref{line:init} and ~\ref{line:event_store}) but the mapping could also be surjective.
 Note that the (potentially overlapping) data stored in these tables can be managed efficiently by ERB implementations like Reverb~\citep{Cassirer2021Reverb}.
 
 When mini-batches are constructed (line \ref{line:sample}), fixed i.i.d.~proportions are collected from each table using probabilities $\eta_i$, for $i \in [0, n]$.  To prevent early over-fitting, minimal data requirements (based on the mini-batch size) can be applied, with proportions  normalized appropriately if one or more tables has insufficient data.  

Like PER and other ERB prioritization schemes, \ETAlg can introduce bias in stochastic environments.  In the extreme case, if $\cP(s,a)$ has equal probability for two terminal states $s_1$ and $s_2$ but only $\eCon(s_1)$ is true, then sampling from the event table will skew the data distribution higher for $s_1$ outcomes.  To alleviate this error, line \ref{line:bias} applies a weighted correction derived in Lemma~\ref{lem:bias_appendix} (Appendix).  The full correction term applies a weight based on the probability of $\langle s, a \rangle$ \emph{not} being in each event table, which can be computed in discrete domains by counting the number of times a $(s, a)$ pair was not part of any event history. For continuous domains, a priority sum tree data-structure could be used similarly to PER~\citep{Schaul2016Prioritized} by setting priorities for samples in the event buffer equal to $(1+\eta)$ and samples outside to $(1-\eta)$. The correction weights for each transition inside the event tables would be $\frac{\text{priority-sum}}{1 + \eta}$. However, this approach is aggressive and might slow down learning, as seen in PER.  Note, for deterministic MDPs the correction is not needed.  In environments without non-goal terminal states, increasing $\tau$ may mitigate some bias by storing more non-event outcomes in the event table.  In our mildly stochastic empirical studies we use this longer $\tau$ approach. 
%\commentsam{For space, we might be able to move some of this to the appendix and just say there's a importance sampling correction}
% TJW: We talked about some of this but decided we needed more technical depth in the main paper

We now state the main theoretical result of the paper, with the full proof in Section~\ref{sec:theory_appendix}. Intuitively, the theorem states that \ETAlg using event conditions that are correlated with an optimal policy and histories that are sufficiently long, will have sample complexity $N^{\cB^{\nu}\cup\cB^0, K}$ for learning $Q^*(s,a)$ in the necessary part of the state space ($\cS^f$), and $N^{\cB^{\nu}\cup\cB^0, K}$ is (with high probability) smaller than the sample complexity of learning with a monolithic buffer of the same size.
%\commentp{improve what?  (The bottom line of the theorem - the inequality - doesn't make it obvious).  Make sure all the notation used in that inequality has been previously introduced.}
%TJW: reworded above and defined terms with at least references to definitions below
\begin{thm}
\label{thm:main_in_paper}
Let $\cS^f = \{s \suchthat P(\Gamma^{\pi^*}_{s, \cdot} \subset \cB^{\nu}) \geq \bar{p}\}$ denote the set of states \st~the sampled optimal trajectories ($\Gamma^{\pi^*}$; Def.~\ref{def:optimal}) starting from those states are contained in the combined event-buffer with a probability greater than $\bar{p} \in (0, 1]$, and $\eta=\sum_{i=1}^n\eta_i$. Under the conditions of Prop.~\ref{prop:complexity} and using $\mu$ as defined in Lemma~\ref{lem:prob_appendix} if ${ \eHist_{\forall i \in [1, n]} \leq \frac{(1 - \eta)^m}{(m+1)n\mu}}$, then at iteration $K$ of Q-learning with a target function:
\begin{gather*}
P\left(N^{\cB^{\nu}\cup\cB^0, K} \leq (1-\eta)^{2m} N^{\cB, K}\right) \geq \bar{p}, \forall s \in \cS^f, m \in \Z^{0+}
\end{gather*}
\end{thm}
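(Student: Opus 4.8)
The plan is to reduce the whole comparison to a per-transition sampling-probability argument and then feed the improved probabilities into the generic Q-learning sample-complexity bound of Prop.~\ref{prop:complexity}. I would first fix a state $s\in\cS^f$ and condition on the event $\Gamma^{\pi^*}_{s,\cdot}\subset\cB^{\nu}$, which by the definition of $\cS^f$ holds with probability at least $\bar p$; this conditioning is exactly the source of the ``$\ge\bar p$'' in the conclusion, since the sampling advantage is only claimed when the optimal trajectory from $s$ is actually captured by the event tables. On this event, every transition of the sampled optimal trajectory lies in some event table $\cB^{\nu_i}$ as well as in $\cB^0$, which is the overlap that forms the ``fast lane.''

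The core step is to lower-bound the combined probability of drawing such a fast-lane transition into a mini-batch. Sampling from $\cB^0$ contributes a baseline proportional to $1-\eta$, while each $\cB^{\nu_i}$ contributes $\eta_i$ times the transition's occupancy fraction inside that table; because an event table holds only short $\eHist_i$-length histories, a single fast-lane transition occupies a large fraction of it. Lemma~\ref{lem:prob_appendix} (through $\mu$) quantifies how this occupancy fraction degrades as histories lengthen and the tables fill with non-event states, and imposing the hypothesis $\eHist_i\le (1-\eta)^m/((m+1)n\mu)$ is precisely what keeps the event-table contribution dominant and uniform across all $n$ tables and all $m+1$ states of the backup chain. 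The resulting per-transition draw probability is then boosted, relative to its probability under the monolithic buffer, by a factor carrying $(1-\eta)^{-m}$.

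I would then substitute this boosted probability into Prop.~\ref{prop:complexity}. Learning $Q^*(s,\cdot)$ requires value to propagate back along the $m$-step chain, and the complexity bound depends on the weakest sampling probability along it, so the per-step boost contributes one factor of $(1-\eta)^m$ to the probability ratio between the two buffers; the super-linear dependence of $N$ on that probability in Prop.~\ref{prop:complexity} then promotes this to the stated $(1-\eta)^{2m}$ ratio between $N^{\cB^{\nu}\cup\cB^0,K}$ and $N^{\cB,K}$. The claim holds for every $m\in\Z^{0+}$ because $m$ only parametrizes a trade-off: larger $m$ gives a stronger complexity gain but demands a tighter cap on admissible history lengths, and the two sides are matched by the hypothesis on $\eHist_i$.

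The hardest part will be the second and third steps taken together: tracking how the history-length cap controls each event table's dilution, and then converting the per-transition probability boost into the compounded $(1-\eta)^{2m}$ sample-complexity ratio without losing uniformity over the $n$ tables, the $m+1$ chain states, and all $m$. In particular I would need to confirm that the $\cB^0$--$\cB^{\nu_i}$ overlap does not violate the i.i.d.\ mini-batch assumption underpinning Prop.~\ref{prop:complexity}, and that the density term $\mu$ from Lemma~\ref{lem:prob_appendix} is applied consistently so that no fast-lane transition along the chain is ever under-represented relative to the monolithic baseline.
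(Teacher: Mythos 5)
Your high-level route is the same as the paper's: condition on the event $\Gamma^{\pi^*}_{s,\cdot}\subset\cB^{\nu}$ (which supplies the outer probability $\bar p$), invoke Lemma~\ref{lem:prob_appendix} for the per-transition over-sampling factor $(1-\eta)^{-m}$, and substitute the improved sampling probabilities into Prop.~\ref{prop:complexity}. However, there are two genuine gaps. First, you misread the role of $m$: you treat it as the length of a backup chain ("the $m+1$ states of the backup chain", "value propagates back along the $m$-step chain") and derive the exponent $2m$ from chain propagation plus an unspecified "super-linear dependence". In the paper $m$ is a free parameter of Lemma~\ref{lem:prob_appendix} indexing how aggressively one trades history length for over-sampling (the $(m+1)$ in the history cap comes from bounding $\sum_{k=0}^{m}(1-\eta)^k\leq m+1$, not from chain states), and the exponent $2m$ falls out of the explicit form $N^{\cB,K}\propto L_{\cB}/c_{\cB}^{3}$ in Prop.~\ref{prop:complexity}: on $\cS^f\times\cA$ both the minimum and maximum sampling probabilities are rescaled by $(1-\eta)^{-m}$, so the ratio scales as $(1-\eta)^{-m}/(1-\eta)^{-3m}=(1-\eta)^{2m}$. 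Your mechanism, as stated, would not reproduce this exponent, and your claim that the bound "depends on the weakest sampling probability along the chain" is not how Prop.~\ref{prop:complexity} is structured.

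Second, you do not address why the skewed sampling distribution still targets the correct $Q^*$. Prop.~\ref{prop:complexity} measures samples to reach $\varepsilon$-optimality with respect to the fixed point of the Bellman operator under the sampling distribution; over-weighting event-table transitions in a stochastic MDP shifts the effective transition kernel, so without correction the two buffers would be converging to different value functions and the sample-complexity comparison would be vacuous. The paper handles this by restricting to the reduced MDP $M^{\cS^f}$ (via Prop.~\ref{prop:event_aux}) and applying the importance weights of Lemma~\ref{lem:bias_appendix} to guarantee $\pi^{*,M^{\cS^f}}=\pi^{*,M}$ on $\cS^f$. You raise the adjacent i.i.d.-mini-batch concern but miss this distribution-shift issue, which is the step that actually licenses plugging the boosted constants $c^{\cB^{\nu}\cup\cB^{0}}$ and $L^{\cB^{\nu}\cup\cB^{0}}$ into Prop.~\ref{prop:complexity}.
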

\begin{proof}[Proof sketch]
We make use of lower bounds~\citep{Li2022Note} on the convergence rate of \emph{tabular} Q-learning  with a target function (Algorithm~\ref{alg:targetq} in the Appendix). We define the state probability distribution (density) following a given policy to a finite horizon from an initial state and use that to define the state density disparity to the optimal policy.  We then formally define \emph{event conditions} on the states with low optimal-policy disparity or final states of the optimal policy. We extend this definition to event sections and their corresponding tables that include sufficient history to (on expectation) reach back to a previous event or initial state.  We then quantify (Lemma~\ref{lem:prob_appendix}) the over-sampling of experience in the event tables and derive the convergence rate (Prop.~\ref{prop:convergence}) and bias correction procedure (Lemma~\ref{lem:bias_appendix}).  Finally, we show that the resulting convergence bound is an improvement over uniform sampling.
%\commentsam{We probably should move the alg up a bit closer to the first time it's mentioned}
\end{proof}

\section{MiniGrid Experiments}\label{sec:minigrid}
We now provide experimental validation of \ETAlg in environments from the MiniGrid~\citep{GymMinigrid} domain using the DDQN~\citep{Van2016DDQN} RL algorithm. We use a dense neural net architecture with hidden layers to encode the Q-functions and an $\epsilon$-greedy behavior policy for collecting data. 
In each experiment \ETAlg demonstrates improvements in sample complexity or decreased variance against uniform experience replay (Uniform ER), an off-policy version of an (on-policy) reverse sweep of updates from the goal (Reverse-sweep*; in the spirit of EBU~\citep{ Lee2019Sample}),
%an off-policy version of the reverse backup update (;~\citep{ Lee2019Sample}).
and TD-error prioritized experience replay (PER)~\citep{Schaul2016Prioritized}. 
% The Supplemental material (Section~\ref{sec:minigrid_sup}) reports further results showing \NoSpaceETAlg's resistance to catastrophic forgetting, a comparison against reward-based events with no histories, and a combination of \ETAlg with CAGrad.

\subsection{Proof of Concept: Three Room Grid World}\label{sec:threeroom}
\begin{figure}[t]
  \centering
  \includegraphics[width=0.9\columnwidth]{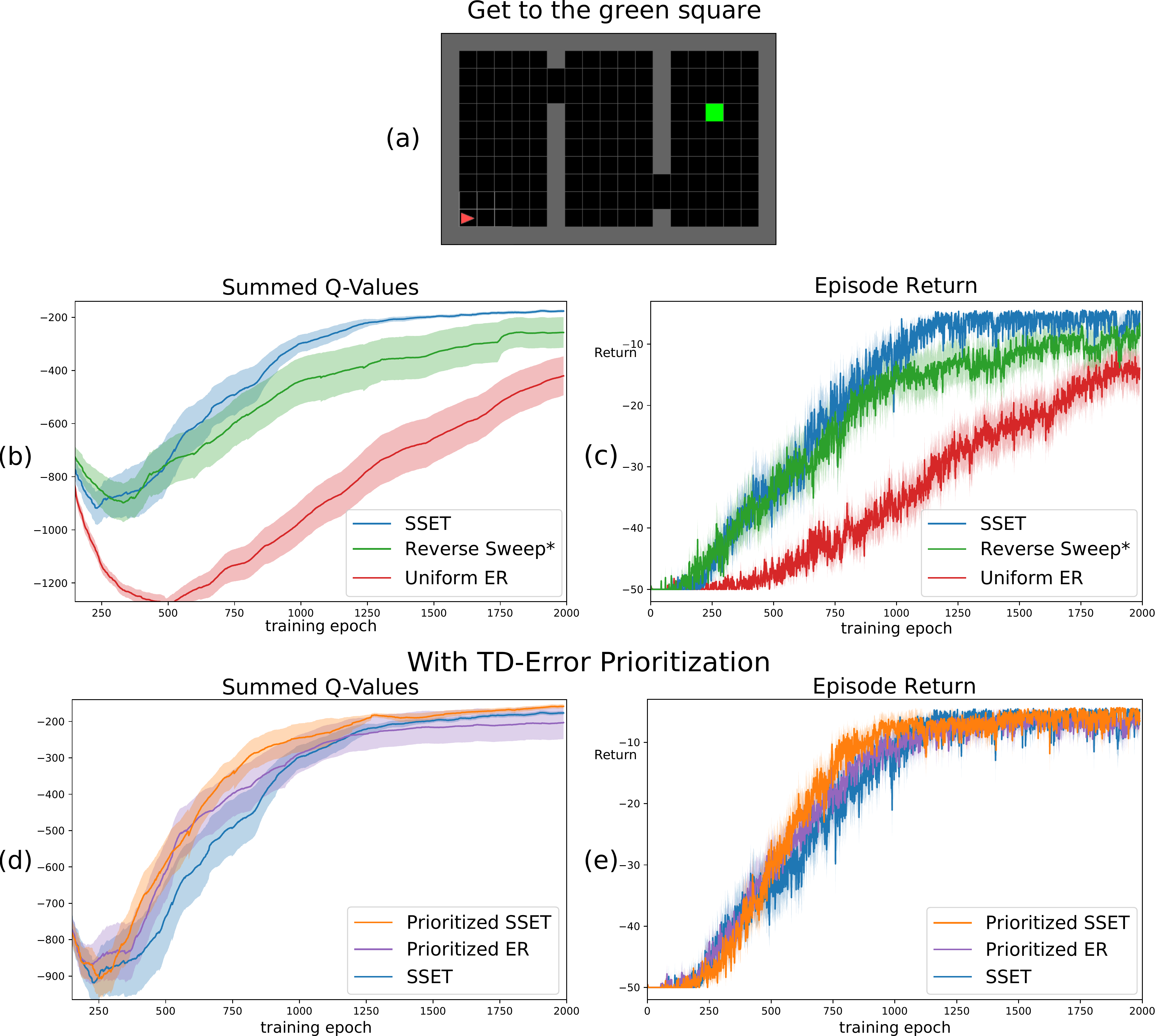}
  \caption{\textbf{Three Room Grid World} Statistical results from 30 randomly seeded runs (mean shown using solid lines and std-error using shaded regions) (a) Environment (b) Learned target Q-values summed across the entire state-action space vs training epoch. SSET Q-values converge with a significantly low std-error. (c) Episode return vs training epoch  (d)-(e) Results with TD-error prioritized sampling. 
  }
  \label{fig:three_room}
\end{figure}

We first demonstrate the sample complexity speedup of \ETAlg in a three-room world %where an agent must navigate through  gaps in the walls to reach the green square
(Figure~\ref{fig:three_room}(a)).
The agent's observation is its 3D grid position and orientation $(x, y, \theta)$ and its available actions are \textit{turn-left}, \textit{turn-right} and \textit{forward}. The rewards are  $+1$ at the green square and $-0.1$ otherwise. We use two event conditions for \ETAlg with a history length of $200$: one that occurs at any gap state between two rooms, and another that occurs at the green square. Refer to the Appendix for more details. Figures~\ref{fig:three_room}(b)-(c) show the mean and standard error (from 30 randomly initialized runs) of the learned Q-values summed across the entire state-action space and the resulting episodic return during the course of training. \ETAlg performs best in terms of both sample efficiency and learning stability (lower variance between runs). The result against a reverse-sweep approach demonstrates the utility of the intermediate gap events, which serve as waypoints for the backups.  Figure~\ref{fig:three_room}(d)-(e) compares \ETAlg to PER and a combination of the two. PER speeds up learning compared to uniform sampling, but with considerably more variance across seeds than \NoSpaceETAlg, possibly due to overestimation bias. Combining \ETAlg with TD-error prioritization yields the best of both worlds; providing reliable local Bellman target estimates and prioritizing samples with high error to those targets.

\subsection{Comparison with Shaping Rewards}\label{sec:shaping}
\begin{figure}[t]
  \centering
  \includegraphics[width=0.9\columnwidth]{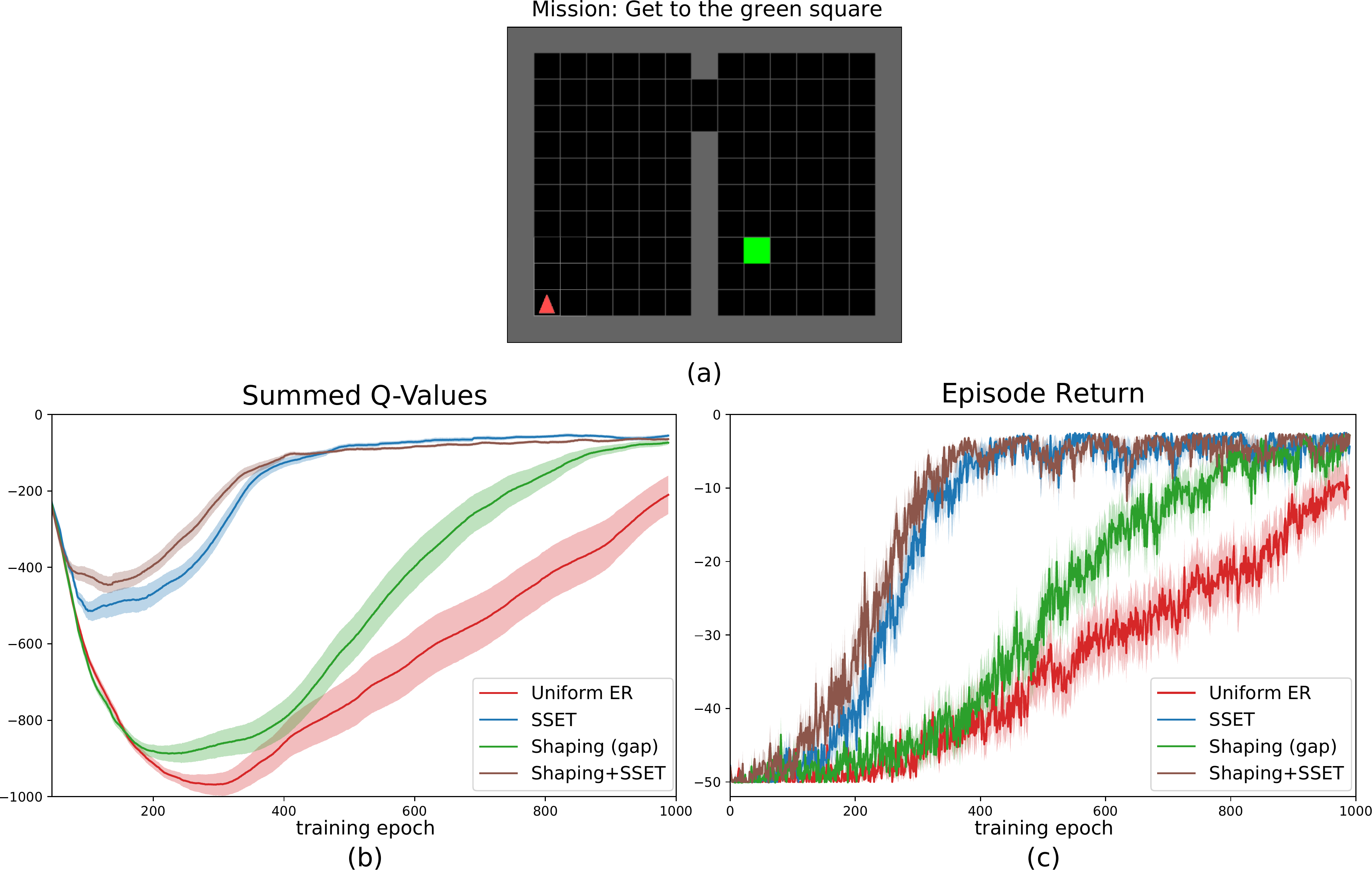}
  \caption{\textbf{Comparison with Intermediate Shaping Rewards}. Results from 30 randomly seeded runs comparing \ETAlg against using intermediate shaping reward at the gap. (a) Environment (b) Learned target Q-values summed across the entire state-action space vs training epoch. (c) Episode return vs training epoch. 
}
  \label{fig:reward_shaping}
\end{figure}
%Reward shaping is a popular approach for injecting prior domain knowledge to improve learning efficiency in RL tasks~\citep{Grzes2017Reward}. 
%However, care must be taken while designing the rewards as they can modify the optimal policy~\citep{randlov1998learning}. 
Potential-based shaping (PBS)~\citep{Ng1999Policy,Grzes2017Reward} provides a framework for adding rewards to speed up learning without affecting the optimal policy. Similar to PBS, \ETAlg preserves optimality (see Lemma~\ref{lem:bias_appendix} in Appendix) while providing an effective way to infuse domain knowledge. Here, we compare the performance of \ETAlg and reward shaping as well as their combination in a grid-world domain similar to the previous experiment (Figure~\ref{fig:reward_shaping}(a)). 
%We use the domain knowledge of the bottleneck in the environment to design 
A shaping potential with $\phi(s) =1$ at the gap and $0$ elsewhere is used to provide a reward component of ($\gamma\phi(s') - \phi(s)$) in addition to the environmental (goal) reward. 
\ETAlg uses events that occur at the gap and the goal. Figures~\ref{fig:reward_shaping}(b)-(c) show  \ETAlg outperforms both shaping and the no-shaping baseline. While reward shaping acts as a heuristic to speed up exploration, those rewards still need to be bootstrapped back to the initial states, which is the forte of \NoSpaceETAlg. Thus, when the two are combined (brown line), even better performance is realized as shaping guides exploration and \ETAlg provides a ``fast lane” to bootstrap the values.
Comparisons to less ideal potential shaping functions and ablations of the $\eta_{0}$ parameter for \ETAlg in this domain are provided in Sec.~\ref{sec:shaping_appendix}.  

\subsection{\ETAlg Performance with Badly Designed Event Conditions}
\label{sec:shaping_appendix}

\begin{figure}[t]
  \centering
  \includegraphics[width=0.9\columnwidth]{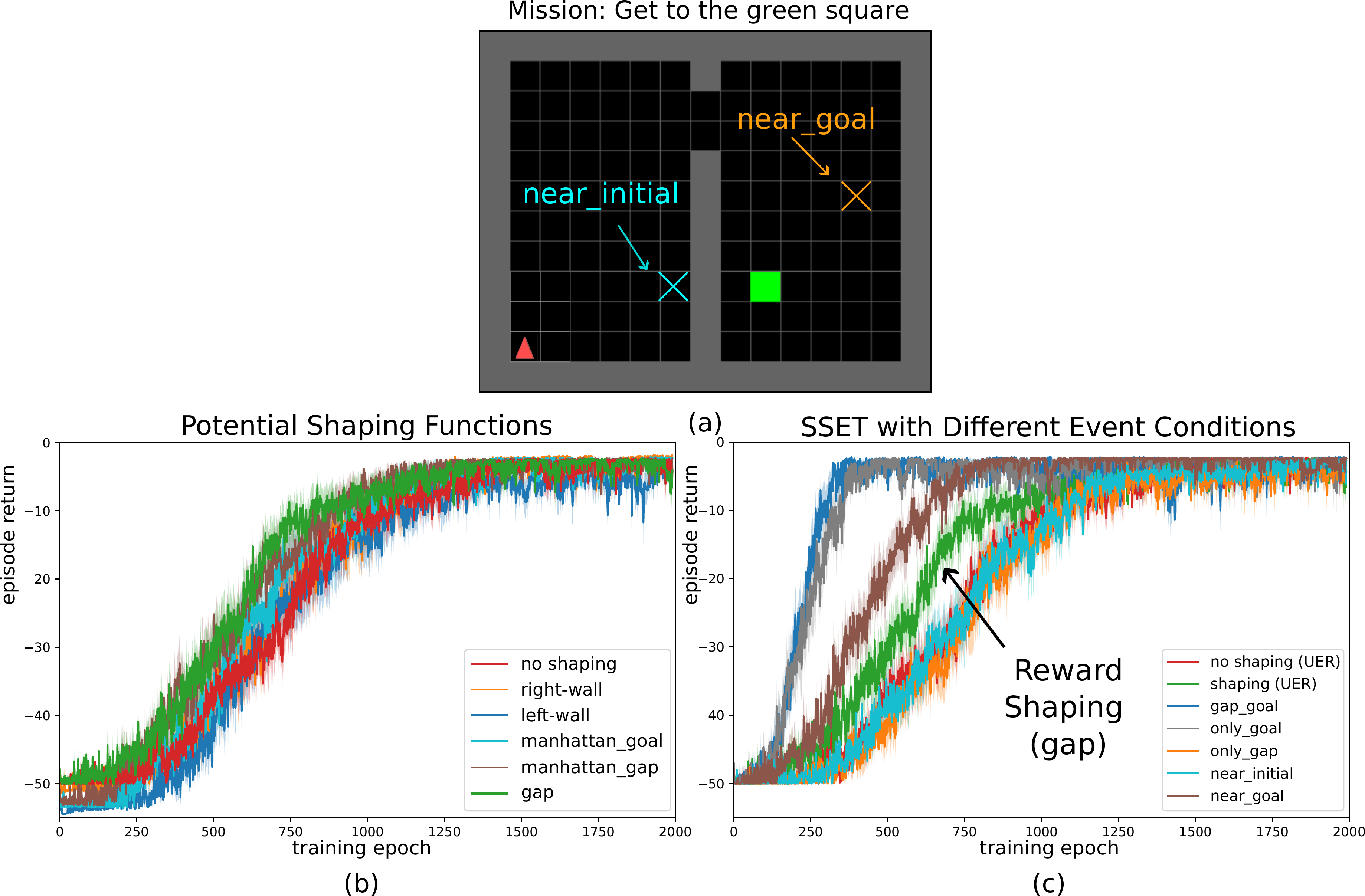}
  \caption{\textbf{\ETAlg performance with ideal and less-ideal event-conditions} (a) Environment (b) Comparisons of different potential-based reward shaping functions with uniform ER to use as baselines (c) \NoSpaceETAlg's performance of ideal and less-ideal event conditions along with baselines (Uniform ER with our best shaping function and with no shaping rewards).}
  \label{fig:shaping_events_appendix}
\end{figure}

\begin{figure}[t]
  \centering
  \includegraphics[width=0.9\columnwidth]{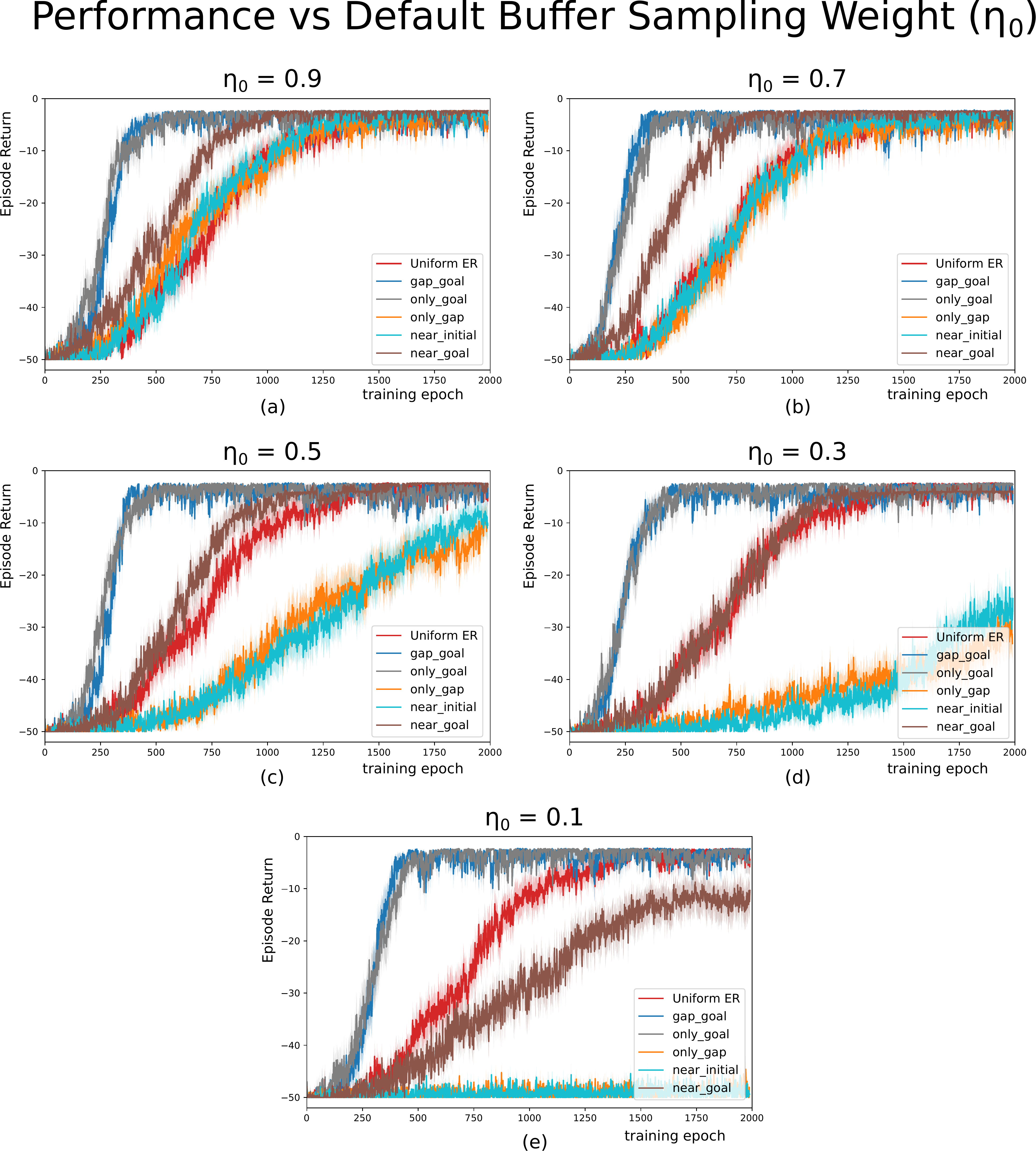}
  \caption{\textbf{\ETAlg performance vs default-buffer sampling weight} Statistical means of episodic-return from 30 randomly seeded runs for different values of $\eta_0$ and different event-conditions. Experiments with good conditions (gap\_goal, only\_goal) are least affected by the changes in $\eta_0$. The experiment with the mediocre condition (near\_goal) seems to benefit from fine-tuning (best at $0.7$), and the experiments with bad conditions (near\_initial, only\_gap) suffer due to under-sampling the default buffer.}
  \label{fig:events_eta}
\end{figure}

Based on the formal definition of event conditions (Def.~\ref{def:event_cond}), good events occur in states that are aligned along optimal trajectories. We assume that such event conditions are typically specified by domain experts or RL practitioners. In this section, we evaluate the performance of \ETAlg when the event conditions are badly designed and compare them against good conditions, along with other baselines such as uniform ER with different potential reward-shaping functions. Finally, we also study how performance varies with changes to the default buffer's sampling probability ($\eta_0$).

Figure~\ref{fig:shaping_events_appendix}(a) shows results from the 2D grid world domain used in the reward shaping comparisons (Section~\ref{sec:shaping}). The task is to get to the green square starting from an initial bottom-left grid position, which requires navigating through the gap between the rooms. We first compare several potential reward-shaping functions to pick a good baseline. Figure~\ref{fig:shaping_events_appendix}(b) shows the statistical means (computed from 30 randomly seeded runs) of episodic-return during training using different potential shaping functions: gap (+1 at the gap grid position and zero everywhere), manhattan\_goal (normalized manhattan distance between agent's position and the goal), manhattan\_gap (distance to the gap), right-wall (normalized x-distance to the right boundary wall), left-wall (normalized x-distance to the left boundary wall) and no-shaping. Not surprisingly, the best performing shaping function is the "gap" function, which motivates the agent to get to the gap first. Others perform either similar or subpar to the experiment with no-shaping rewards. 

Next, we compare \ETAlg performance for different event conditions and $\eta_0 = 0.7$: two conditions (based on Def.~\ref{def:event_cond}) that trigger at the goal and gap (gap\_goal), or goal alone (only\_goal) with a sufficient history length ($\tau$ = 30). We selected three less-ideal conditions that occur at positions near the goal (near\_goal), at the gap (only\_gap) and near the initial position (near\_initial), respectively. Figure~\ref{fig:shaping_events_appendix}(c) shows statistical means of the corresponding experiments along with the baselines of shaping (using the gap potential function) and uniform ER with no shaping. The plot shows that there is a significant improvement in learning efficiency for experiments using the well defined events. The performance of \ETAlg of near\_goal event condition is not ideal, but still much better than our best shaping baseline. Experiments with only\_gap and near\_initial conditions perform similar to the Uniform ER (no-shaping) as learning relies significantly on transitions in the default buffer to bootstrap the value from the rewarding states (goal reward in this case) to the states where the events occur. Based on these results, we conclude that both reward shaping and event tables show degradation when using poorly chosen shaping functions or event conditions but that even with poorly chosen events \ETAlg often outperforms reward shaping in this domain.

Next, we compare how \NoSpaceETAlg's performance varies with default-buffer's sampling probability (Figure~\ref{fig:events_eta}). The plots show that experiments with good conditions are least affected by the changes in $\eta_0$. The experiment with the mediocre condition (near\_goal) seems to benefit from fine-tuning (best at $0.7$), and the experiments with bad conditions suffer ($\eta_0 < 0.7$) due to  under-sampling the default buffer. For all practical purposes, we recommend using higher values for $\eta_0$ unless the event conditions are carefully designed. 

\subsection{Obstacle Course and Randomized Skill Environment}\label{sec:obstacle}

\begin{figure}[t]
  \centering
  \includegraphics[width=0.9\columnwidth]{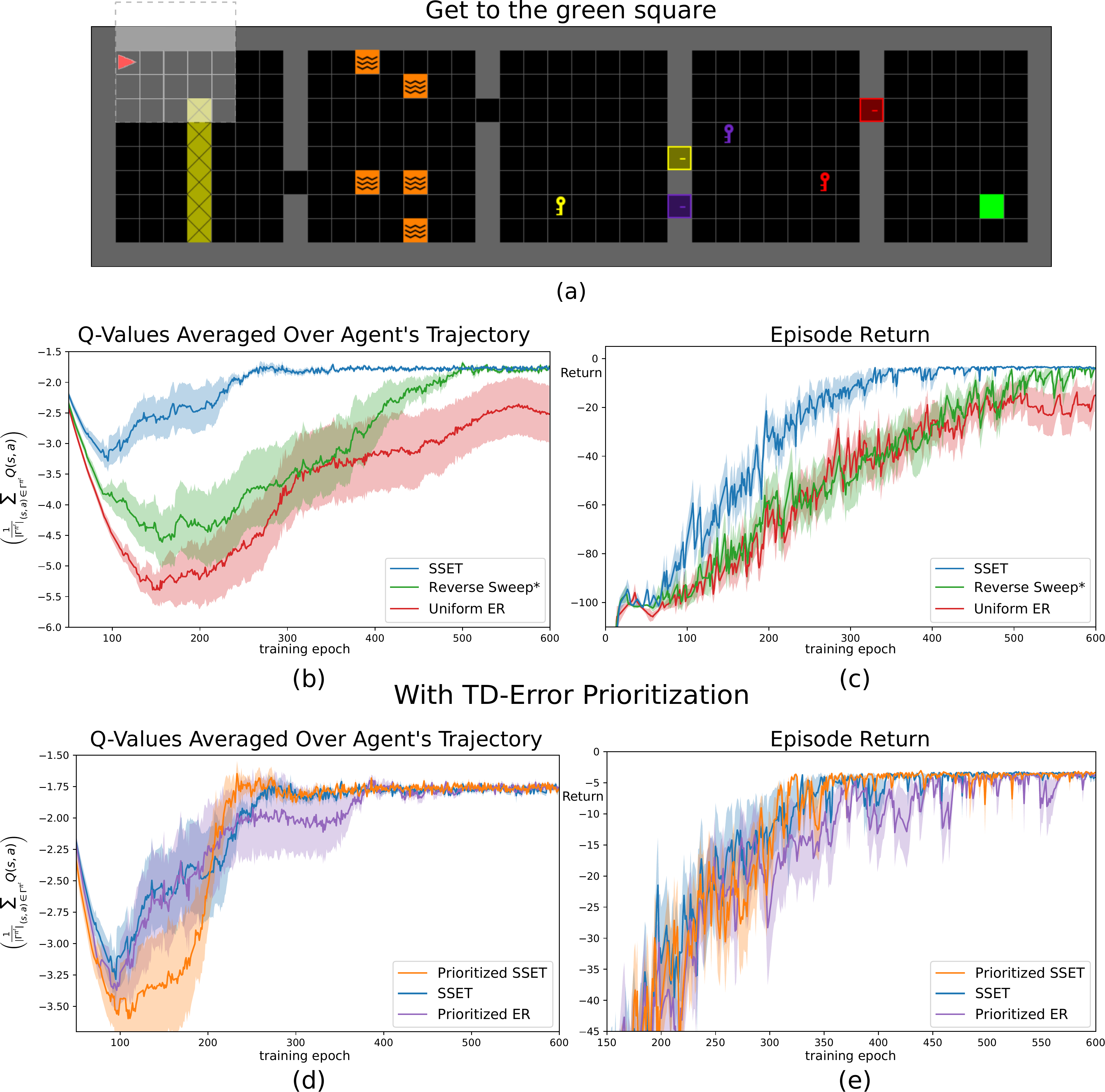}
  \caption{\textbf{Obstacle Course} Results from 30 randomly seeded runs (mean shown using solid lines and std-error using shaded regions) (a) An environment with spikes (yellow), lava (orange), colored keys and locked doors (b) Learned target Q-values averaged over agent trajectories vs training epoch (c) Episodic return vs training epoch (d)-(e) Results with TD-error Prioritized Sampling
%   \commentsam{Maybe we should add a legend to a instead of having it in the caption? We can probably resize the world to have space for it on the side.}
%   \commentsam{I would consider pulling the math formula out of the title and pushing it into the caption. It feels too complicated for the title. If so, I'd consider adding more to the title to differentiate b-c vs d-e.}
  }
  \label{fig:obs_course_env}
\end{figure}

% \begin{figure}[t]
%   \centering
%   \includegraphics[width=\columnwidth]{figures/dklg_fc_2.pdf}
%   \caption{\textbf{Obstacle Course Results with TD-error Prioritized Sampling}}
%   \label{fig:obs_course_env_per}
% \end{figure}

The more complex  \emph{obstacle course} environment consists of multiple sections  (Figure~\ref{fig:obs_course_env}(a)) containing spikes with -1 reward (yellow squares), lava with -1 reward and episode termination (orange squares), and colored keys to open doors. The exact positions of lava, gaps in the walls, keys, doors, and key/door colors are randomly set in each episode but the ordering of the rooms is fixed (e.g. the spikes are always in the first room). 
% \commentsam{Are the spike locations randomized too?} VR: Nope
%The colors of the keys and the doors are also randomly selected from a set of $6$ colors, while ensuring that there is a matching color for a key-door combination. 
The agent's task is to get to the green square starting from the top-left corner. The agent's observation consists of (a) an egocentric $5 \times 5$ localized forward-view image (highlighted region in Figure~\ref{fig:obs_course_env}(a) 
% \commentsam{The highlight in the image doesn't look like it's centered on the agent.}
% \commentvr{Yes it is a forward view image with agent at (0, 2) local grid position. I added "forward-view" in the text. Would this be okay?}
) that encodes a representation of obstacles around it, (b) a Boolean indicator for carrying an object,  (c) a 2D representation (category, color) of the object it is carrying, otherwise $(-1, -1)$, and (d) 3D grid position and orientation $(x, y, \theta)$. The agent's action space includes 3 additional actions (\textit{pickup key}, \textit{drop key}, \textit{toggle door}). We use event conditions with a history length of $50$ associated with each obstacle category (e.g. picked-up-a-key, opened-a-door, etc. see Appendix for more details). Figure~\ref{fig:obs_course_env}(b)-(c) clearly illustrates the improved efficiency and stability of using event-tables over reverse-sweep and uniform ER. In comparison to TD-error prioritization (Figure~\ref{fig:obs_course_env}(d)-(e)), again PER exhibits more variance across multiple seeded runs. 
% \commentsam{Nit: do we want a dash between the (b)(c) and (d)(e) here?} 
Interestingly, in this domain, \ETAlg performs equally well with or without additional TD-error prioritization.

%\commentsam{It feels like this is a 3rd experiment. Maybe it should go in a separate subsection, but no strong opinion here.}
% TJW: Didn't have room for another section, but changed the name to disambiguate.
Finally, we consider a randomized multi-skill setup where the agent must either avoid lava, go through a gap, or open the correct door to reach the goal (Figure~\ref{fig:dklg_sec}(a)-(c)). The scenario (lava, gap, or open-door) and object positions / colors are sampled randomly for each training episode. This is a more challenging task than the obstacle course above because object locations don't follow a fixed sequential pattern. A square that contains a door in one episode may contain lava in the next one.  %\commentsam{I find this unintuitive given that there's randomness in the sequential obstacle course too.} TJW: clarified
Using uniform ER, value function learning is dominated by the easier skills (lava, gap) and the agent fails to acquire the difficult open-door skill. Learning using \ETAlg performs much better, acquiring and maintaining all the skills compared to uniform sampling and even PER. 
\begin{figure}[t]
  \centering
  \includegraphics[width=\columnwidth]{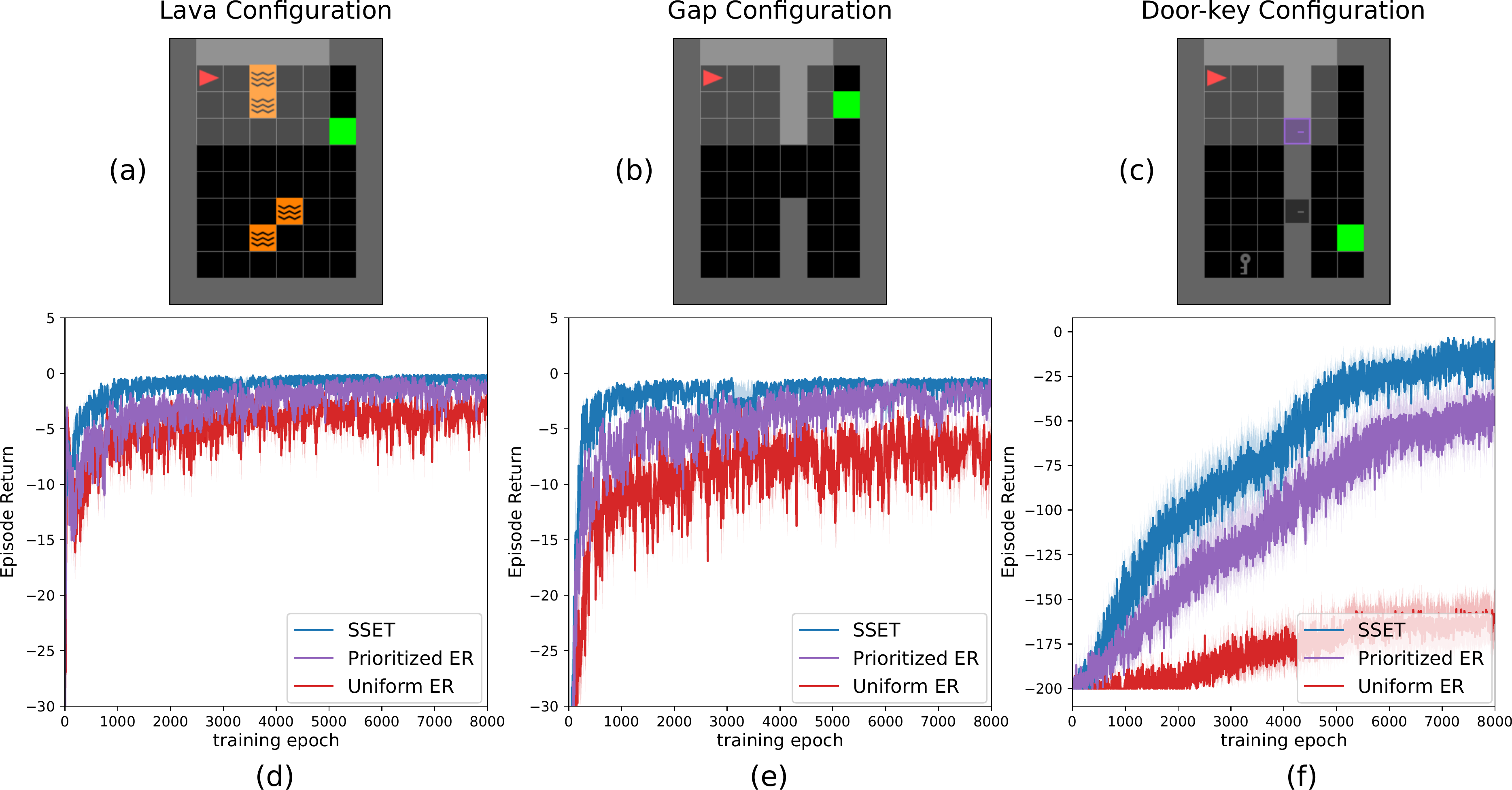}
  \caption{\textbf{Randomized Skill Environment} Statistical results from 30 randomly seeded runs (mean shown using solid lines and std-error using shaded regions) (a)-(c) Instances of randomly sampled object configurations. (d)-(e) Episodic return for each skill during the course of training.
%   \commentsam{I'd move ``Episode Return'' to the y-axis labels on d-f and use the ``Lava Skill'' part as the title.}
%   \commentsam{Also, the highlighted boxes don't seem to be centered on the red triangle.}
  }
  \label{fig:dklg_sec}
\end{figure}

\subsection{Catastrophic Forgetting in Obstacle Course}\label{sec:minigrid_forget}
Here, we illustrate how \ETAlg avoids catastrophic forgetting that hampers even PER  during extended learning in the obstacle course experiment presented in Section~\ref{sec:obstacle}.  %A smaller-scale version of the observations on catastrophic forgetting in Gran Turismo from Section~\ref{sec:oncourse}.

At the end of each episode during training, the agent is always reset to the top-left corner $(1, 1)$ of the environment. At a frequency of every 20 epochs during training, we evaluated the performance of the agent starting from a slightly shifted initial grid position $(1, 4)$ as shown in Figure~\ref{fig:et_eval2}(a). Figure~\ref{fig:et_eval2}(b) shows the number of successful episodes over the 30 seeded runs of the experiment for each evaluation checkpoint. An episode is considered successful if the agent is able to navigate through the obstacles and reach the green square. The plot shows that the agents with both uniform and TD-error prioritized experience replay learn how to complete the task early on, but forget the skill as the training continues. We hypothesize that due to changing epsilon-greedy behavior policy, the Q-function updating with samples from a standard FIFO uniform (and even prioritized) replay buffer, quickly begins to over-fit trajectories starting from $(1, 1)$ avoiding the nearby negative rewarding spikes. Over-fitting on these trajectories potentially leads to losing previously learned estimates for nearby transitions. On the other hand, with \ETAlg and sampling from the \textit{at-spike} event table that is unaffected by the updating behavior policy, the agent continues to improve its value estimates for those transitions over time resulting in a better behavior.

In the context of this particular experiment and similar RL domains, having robust performance against perturbations is not typically expected and therefore catastrophic forgetting may not be a serious issue. However, it becomes a challenging issue to tackle in realistic domains like Gran Turismo where the training and testing distributions can be different or the domain is simply so large that one can forget low probability events, like leaving the course in Section~\ref{sec:oncourse}.

\begin{figure}[t]
  \centering
  \includegraphics[width=0.7\columnwidth]{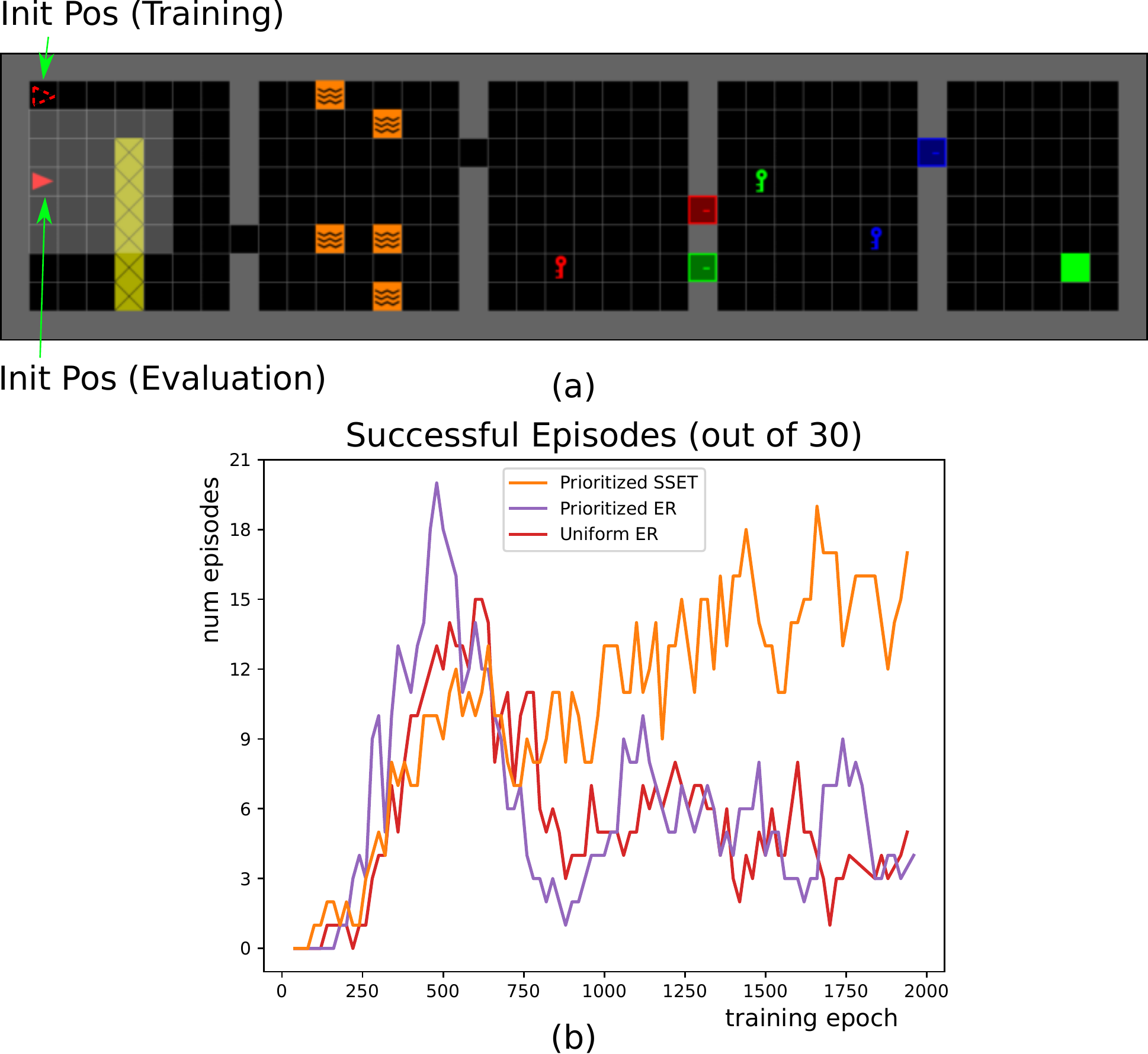}
  \caption{\textbf{Evaluation Result of a Slightly Shifted Initial Position} Test-time performance of the agent starting from a slightly shifted initial grid position $(1, 4)$. (a) Environment instance (b) Number of successful episodes out of 30 seeded runs of the experiment for each evaluation checkpoint.}
  \label{fig:et_eval2}
\end{figure}

\subsection{Intermediate Events, Histories, and Sampling Weights}\label{sec:ser_appendix}
\begin{figure}[t]
  \centering
  \includegraphics[width=0.5\columnwidth]{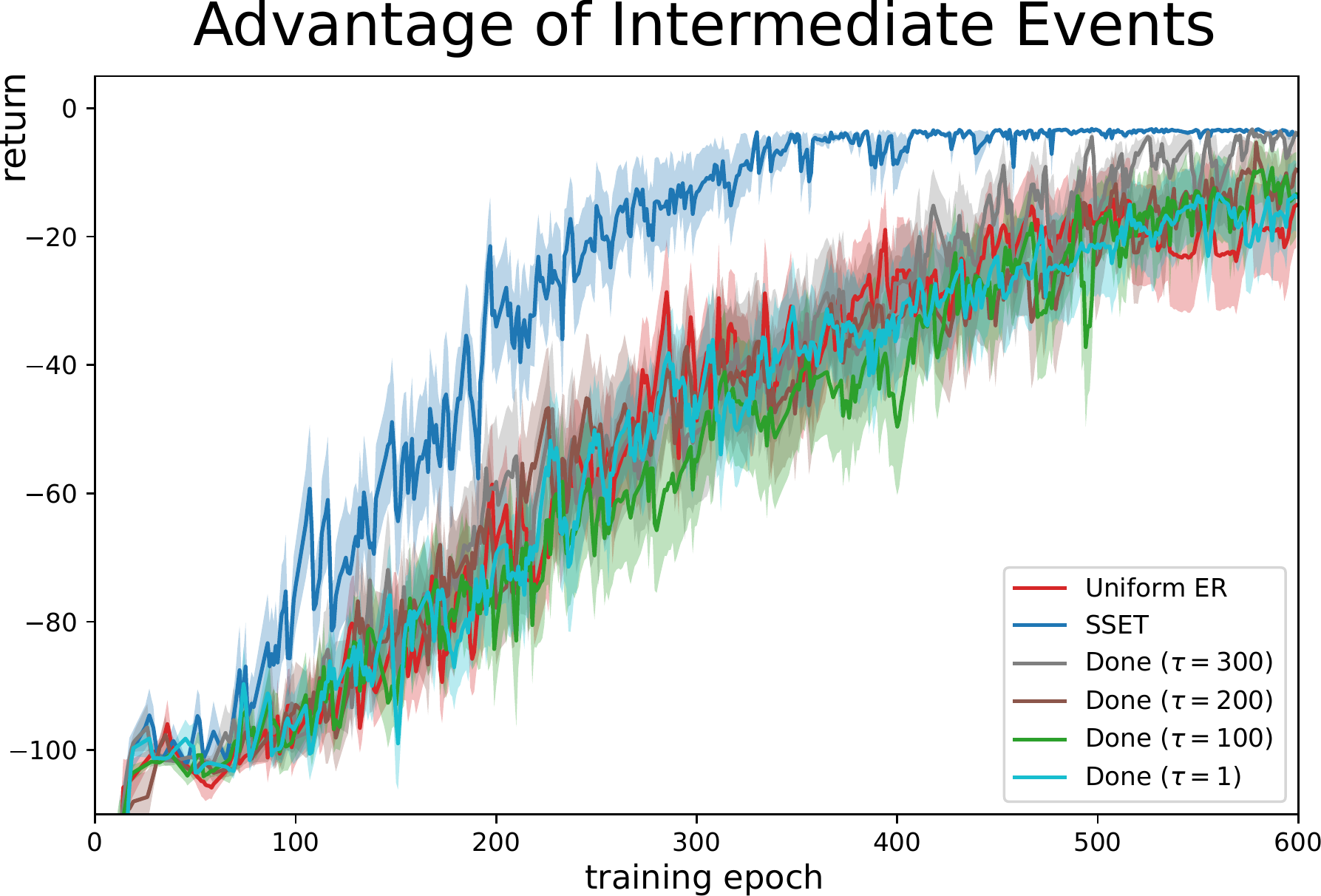}
  \caption{Episodic return from 30 randomly seeded runs (mean shown using solid lines and std-error using shaded regions) in the obstacle course environment comparing SSET with intermediate events like, pickup-key, at-door, etc. against only using a done-conditioned event with different history lengths.}
  \label{fig:int_events}
\end{figure}

\begin{figure}[t]
  \centering
  \includegraphics[width=0.5\columnwidth]{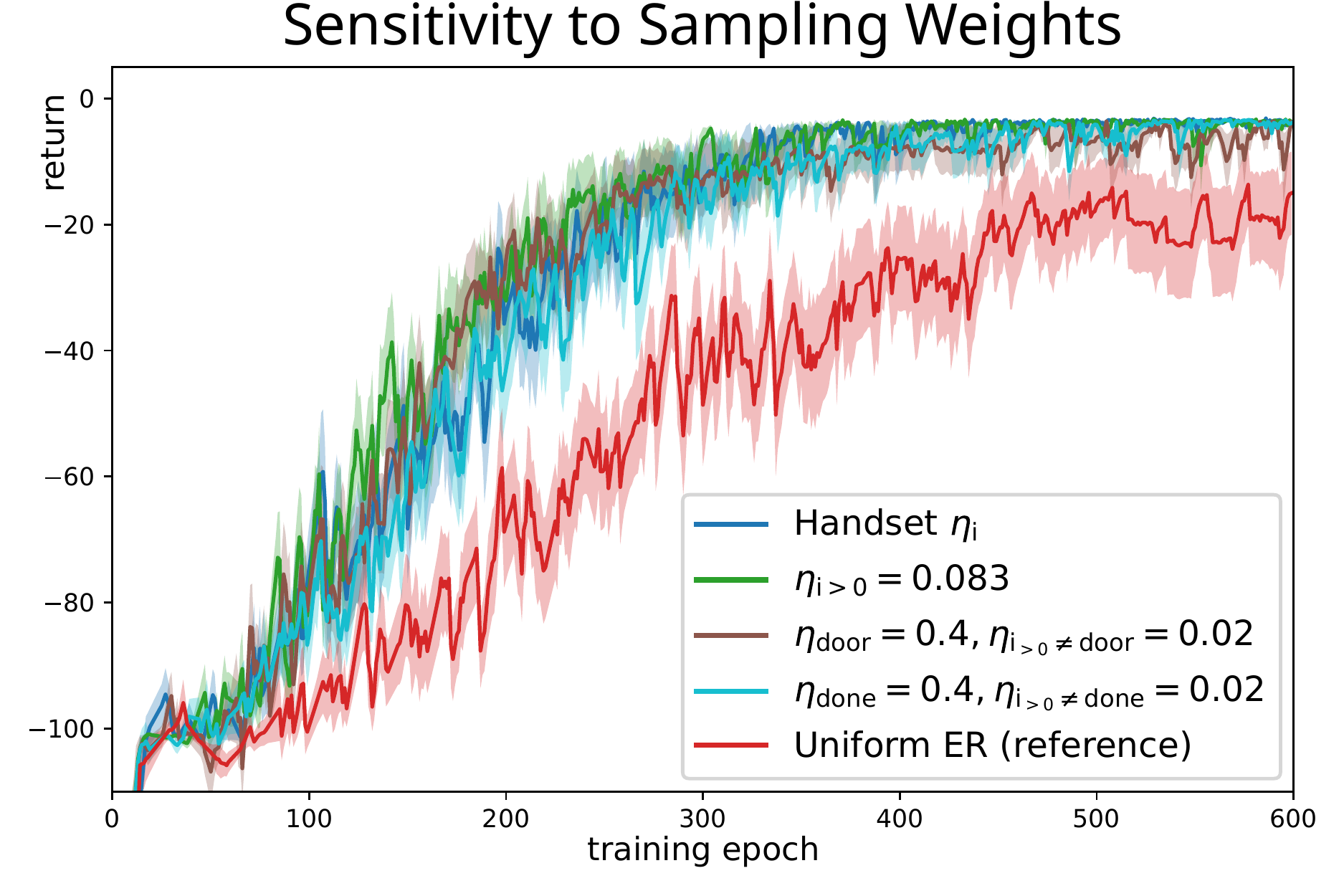}
  \caption{Episodic return from 30 randomly seeded runs (mean shown using solid lines and std-error using shaded regions) in the obstacle course environment comparing different sampling weights $\eta_{i}$ keeping the default table's sampling weight fixed $\eta_0 = 0.5$.}
  \label{fig:eta_ablation}
\end{figure}

 The proof of Theorem~\ref{thm:main_appendix} shows that \ETAlg improves sample complexity along transitions from the optimal trajectory that are, with high probability, stored in the \NoSpaceET. We can increase this probability by either increasing the number of event conditioned tables $n$ with short histories or having fewer tables with longer histories $\tau_i$. However, the proof of Lemma~\ref{lem:prob_appendix} shows the latter is not as effective, especially in early learning when the histories leading to a far-off event may contain many sub-optimal actions, diluting the impact of that table until the policy is better optimized. Instead, when possible, it is best to have many events acting as waypoints in the environment as shown in Figure~\ref{fig:theory}.   We now present an empirical test supporting this insight in the sparse-reward obstacle course environment presented in Section~\ref{sec:obstacle}.  The test doubles as a comparison to methods that partition an ERB based solely on reward conditions without corresponding histories~\citep{Sharma2020Stratified}.
 
We compare \ETAlg with the intermediate events used in Section~\ref{sec:obstacle} (pickup-key, at-door, etc.), each with a history length of 50 (see Table~\ref{table:mini_grid_params} for parameters) against only using a terminal rewarding event with different history lengths. Figure~\ref{fig:int_events} shows the statistical average curves of episodic return during the course of training computed from 30 random seeded runs. It is clear from the result that \ETAlg with intermediate events outperforms the rest.  Intuitively, intermediate events can be seen as waypoints on the fast lane for TD backups to the initial states, thereby improving the overall sample efficiency.  The plot also highlights that just sampling terminal goal states (as in SER~\citep{Sharma2020Stratified}, $\tau=1$ in the chart) or transition sequences leading to them (as in TER~\citep{Hong2021Topological}, EBU~\citep{ Lee2019Sample} or Reverse-Sweep*, higher $\tau$ values) may not be sufficient to achieve improved efficiency in a sparse reward setting like the Obstacle Course. 

Finally, in Figure~\ref{fig:eta_ablation} we compare \ETAlg performance for different sampling weights $\eta_i$ keeping the default table's sampling weight fixed ($\eta_0 = 0.5$). Handset $\eta_i$ are the same as the ones mentioned in Table~\ref{table:mini_grid_params} in Appendix, $\eta_{i>0} = 0.5/6 = 0.083$ are weights distributed equally across the six event tables, $\eta_{\text{door}}=0.4$ assigns a large weight to the door event table and $\eta_{\text{done}}=0.4$ assigns a large weight to the done table. The sampling is done such that there is at least one sample picked from each table. The results show that the performance is similar with the extreme sets showing some minor fluctuations at the end, but still performing better compared to using uniform experience replay. Our general recommendation is to start with using equal weights across all events tables and if needed tuning them further to get a peak performance.  

\subsection{Conflict Averse Gradient Descent with \ETAlg}
\label{sec:cagrad}
\begin{figure}[t]
  \centering
  \includegraphics[width=0.99\columnwidth]{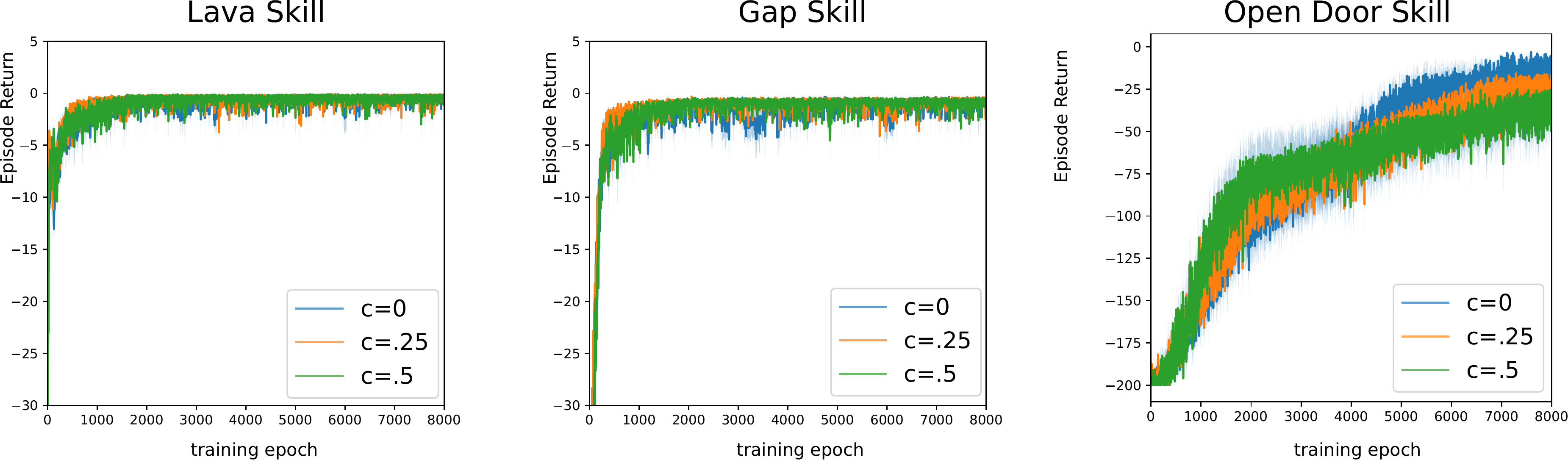}
  \caption{\textbf{CAGrad with \ETAlg} Statistical results for different values of CAGrad's coefficient of regularization $c \in [0, 1)$ from 30 randomly seeded runs (mean shown using solid lines and std-error using shaded regions). Higher values of $c$ boost early performance but result in a lower asymptote.}
  \label{fig:et_cagrad}
\end{figure}

Most temporal-difference RL algorithms like Q-learning rely on minimizing Bellman errors to local target estimates ($Q_k$; see Alg.~\ref{alg:targetq}). With an increasing number of outer iterations, these local targets inch closer to the true global target $Q^*$, thereby optimizing the RL objective asymptotically. During early training, stratified mini-batch sampling from event tables could result in local gradients that may conflict with each other. Using a standard average gradient descent approach, like SGD, might not be beneficial for uncommon or difficult-to-learn events as the average gradient would be skewed in the direction of the easier events. For example, in the randomized multi-skill experiment presented in Section~\ref{sec:obstacle} (Figure~\ref{fig:dklg_sec}), the agent takes longer to learn the open-door skill compared to the others. Previous multi-task learning work has proposed several approaches mitigating this problem, albeit in a multi-task pareto-optimal setting. We explore using one such recently proposed multi-objective optimization algorithm called Conflict-Averse Gradient descent (CAGrad)~\citep{Liu2021Conflict} to see if we could boost up learning the difficult open-door skill. 

Figure~\ref{fig:et_cagrad} shows statistical results using CAGrad with \ETAlg for different values for the algorithm hyper-parameter $c \in (0, 1)$, which controls the extent of minimizing conflicts between losses within a local ball centered around the average gradient. Therefore, setting $c = 0$ reduces to the standard average gradient descent. The plots illustrate that higher values of $c$  boost initial learning but result in lowering the asymptotic value. This can be explained by the fact that, as the local value function targets $Q_k$ move closer to the global target $Q^*$, the conflicts between event-section losses minimize making conflict optimization redundant. We hypothesize that scheduling the parameter $c$ from high values to zero during the course of training would improve the sample complexity and reach optimal asymptotic behavior. Exploring different schedules is outside the scope of this paper and we leave this for future work.

\begin{figure}[t]
  \centering
  \includegraphics[width=0.9\columnwidth]{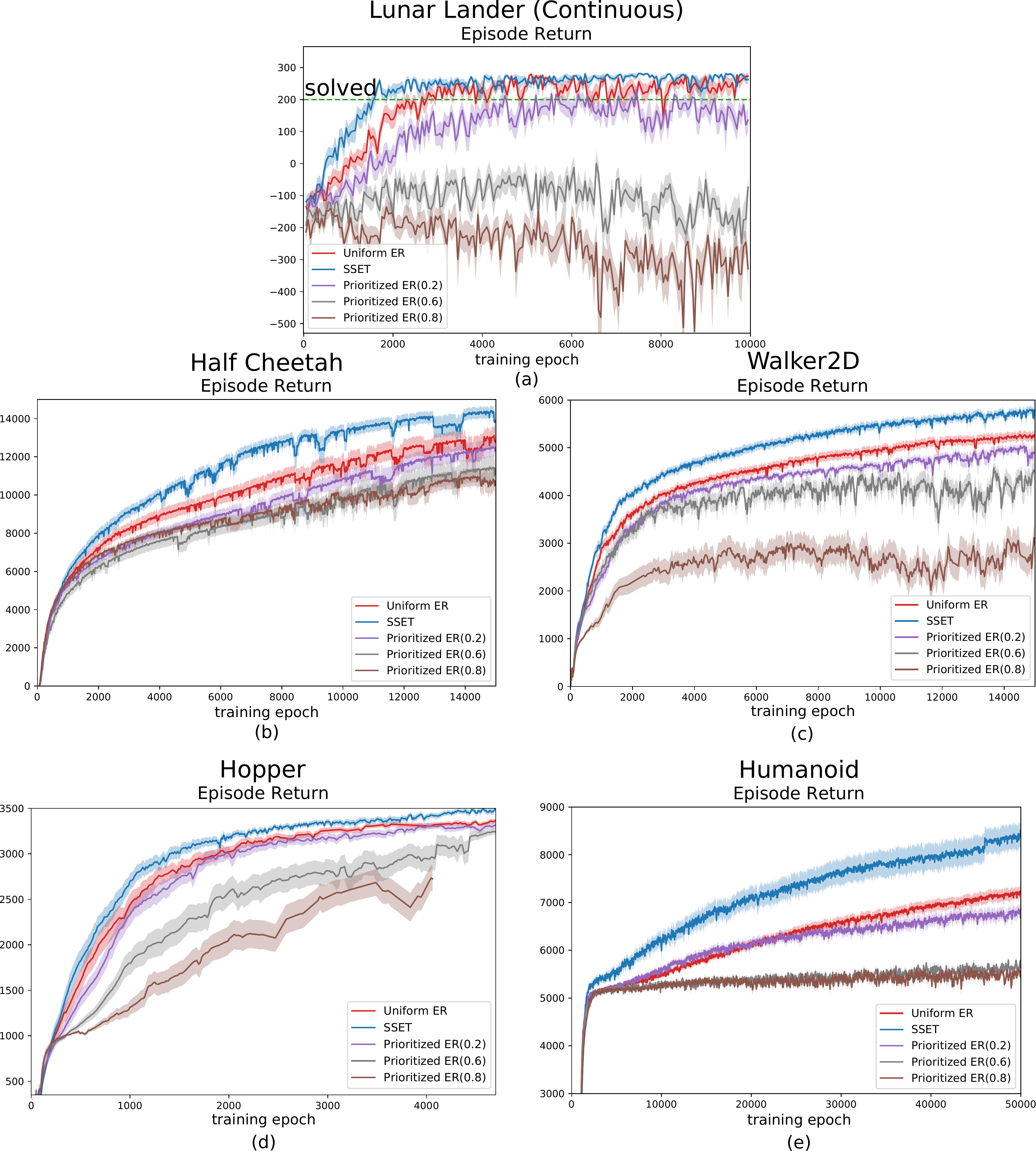}
  \caption{\textbf{Continuous Control Benchmark Tasks} \ETAlg outperforms various parameterizations of PER and Uniform sampling in common RL benchmarks. 
  }
  \label{fig:mujoco}
\end{figure}

\section{Lunar Lander and Mujoco Experiments}
\label{sec:mujoco}

\begin{figure}[t]
  \centering
  \includegraphics[width=0.9\columnwidth]{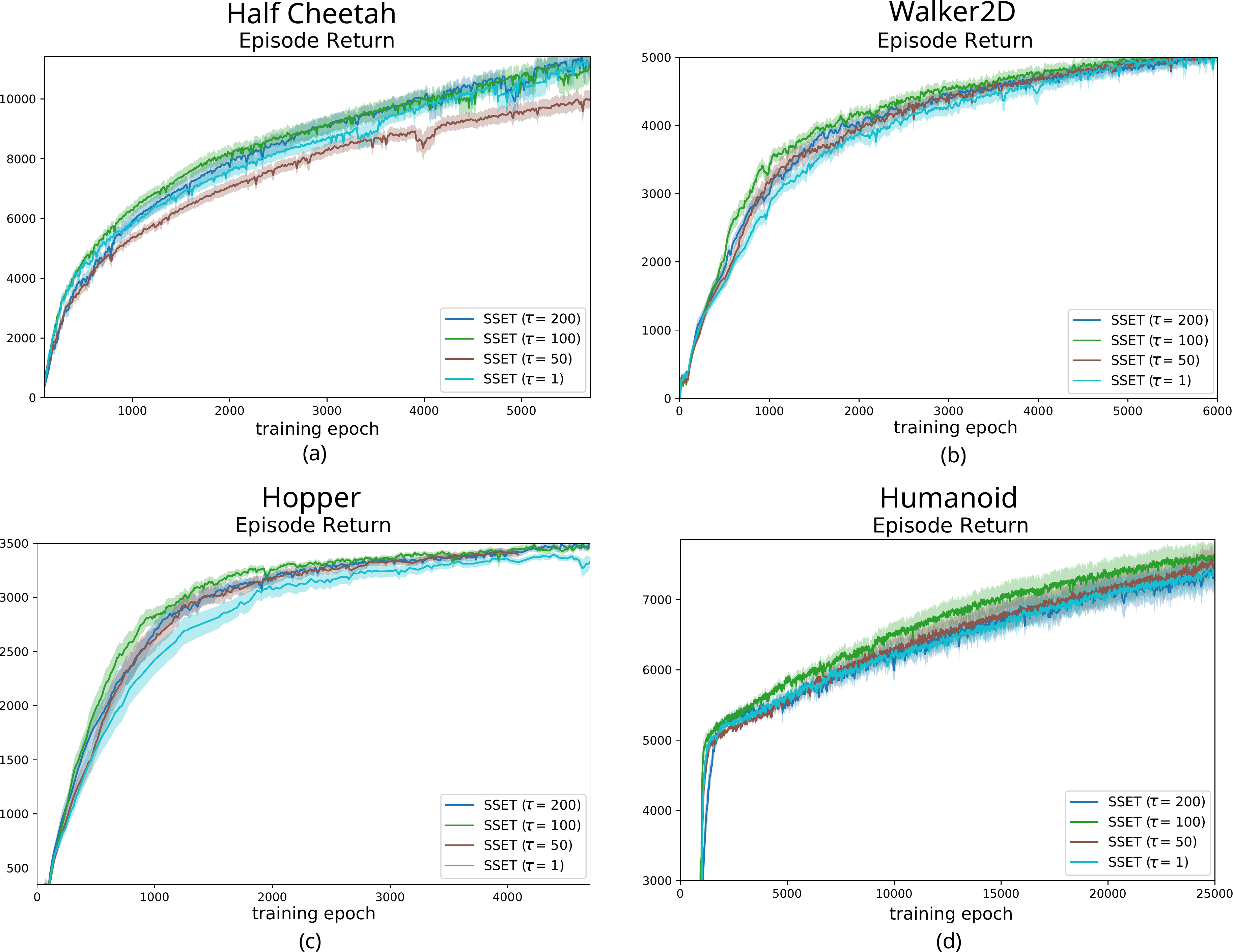}
  \caption{\textbf{Continuous Control History Lengths ($\tau$)} \ETAlg performance in MuJoCo domains at different history lengths. \ETAlg benefits from using sufficiently longer history lengths, which can be tuned for a peak performance. 
  }
  \label{fig:mujoco_hist}
\end{figure}

%Having shown sufficient empirical evidence of improved sample complexity and stability of \ETAlg in grid-world domains, 
We now demonstrate the improved sample complexity and stability of \ETAlg on several continuous control benchmark tasks (LunarLanderContinuous-v3 and MuJoCo~\citep{todorov2012mujoco} suite defined in OpenAI Gym~\citep{brockman2016openai}) that have dense shaping rewards and, in the case of the MuJoCo domains, no pre-defined goal states. For the LunarLander environment, we used two event conditions for \ETAlg with a history length of $200$: one that occurs when both lander's legs make contact between the flags, and another when the lander's position is close to the middle of the flags. For the MuJoCo suite, we used three event conditions that occur when the agent receives rewards greater than certain thresholds (See Appendix Table~\ref{table:mujoco_params}). Each of those events used history lengths of $200$. The thresholds were manually selected for each environment based on the reward bounds. We used the state-of-the-art SAC~\citep{Haarnoja2018Soft} RL algorithm to compare \ETAlg against uniform experience replay and PER at different priority exponents. Figure~\ref{fig:mujoco} shows the statistical mean and standard errors of empirical returns computed from 30 randomly seeded episodes evaluated at different epochs during training. The results definitively 
illustrate \ETAlg improves sample-efficiency (by roughly half the number of epochs) and achieves stable policies by bootstraping the salient rewards more rapidly. All four of the MuJoCo domains and LunarLander show similar patterns for \NoSpaceETAlg. PER performs at-best similar to the uniform experience replay and the performance degrades as we increase the priority exponent. We suspect this is because of the high density of shaping rewards, which makes the TD-Errors volatile, making them a bad match for PER.

Next we show results from an ablation study of changing history lengths in the dense reward MuJoCo domains, similar to the experiments in sparse reward domains in Section~\ref{sec:ser_appendix}. Figure~\ref{fig:mujoco_hist} shows the average episodic return (with std-error shown using the shaded regions) during the course of training computed from 30 random seeded runs. 
In MuJoCo domains, rewards are proportional to the continuous progress made by the agent and with events based on reward thresholds, sample histories for a higher threshold get stored in the tables corresponding to the previous threshold. The results of this study show \ETAlg is again robust to `non-optimal` history lengths,  but still show the benefits from using sufficiently long history lengths.  These results reinforce the use of longer history lengths in the absence of prior knowledge, and also show the history lengths can be tuned for a peak performance.

\section{Simulated Car Racing Experiments}\label{sec:gt}
Gran Turismo\textsuperscript{\texttrademark} Sport is a PlayStation\textsuperscript{\texttrademark} racing simulator that has been previously used as an RL testbed ~\citep{Fuchs2021Super,Song2021Autonomous} and where an RL system recently outraced human e-sports champions~\citep{Wurman2022Outracing}.  The latter work used a multi-table ERB based on different initial state conditions, which in the experiments below is equivalent to the uniform sampling approach.  We show the \ETAlg speeds up convergence when learning to pass another car and mitigates  off-course driving in a time-trial scenario.

The environment, features, and training details (see Section~\ref{sec:gt_appendix}) are the same as Wurman et al.’s except that we focus on smaller scenarios to isolate the specific effects of \NoSpaceET.  All experiments collected data at 10Hz from 21 PlayStations with one of those typically dedicated to evaluation tasks. The state representation includes hundreds of state features covering aspects such as 3-D velocity, steering angle, a representation of the upcoming course points, and a representation of opponent cars including a $[0,1]$ measure of the \emph{slipstream} produced by a car ahead.  The Quantile Regression SAC (QR-SAC) algorithm is used with $2048 \times 4$ feed-forward neural networks for the value functions and policy.
Dropout is used when training the policy network.

%These features are fed to $2048 \times 4$ feed-forward neural networks representing the value functions and policy and trained using Quantile Regression SAC (QR-SAC): an adaptation of SAC where a distribution over possible value functions is learned.  

\subsection{Learning the ``Slingshot'' Pass}

In the first experiment, we demonstrate \ETAlg’s sample complexity benefits in a “slingshot passing” scenario on the Circuit de la Sarthe (Sarthe) track, using a Red Bull X2019 Competition race car, similar to a Formula 1 vehicle.  The environment is a relatively straight 1700 meter section of the course with the RL agent always launched behind (in training between $[10,40]$ meters) one built-in-AI from the game.  To succeed, the agent needs to use the opponent’s slipstream to accelerate beyond its top open-air speed and use the added momentum to slingshot by the opponent and hold it off to the end of the section.  Reward function components incentivize course progress and passing and penalize wall hits, car collisions and off-course driving. Full details are provided in the supplemental material.

\begin{figure}
  \centering
%   \begin{subfigure}[b]{0.49\columnwidth}
%     \includegraphics[width=1.0\columnwidth]{figures/slingshot_07_done_cumulative_win.pdf}
%   \end{subfigure}
%     \begin{subfigure}[b]{0.49\columnwidth}
%     \includegraphics[width=1.0\columnwidth]{figures/slingshot_07_done_winning_margin.pdf}
%   \end{subfigure}
    \includegraphics[width=0.9\columnwidth]{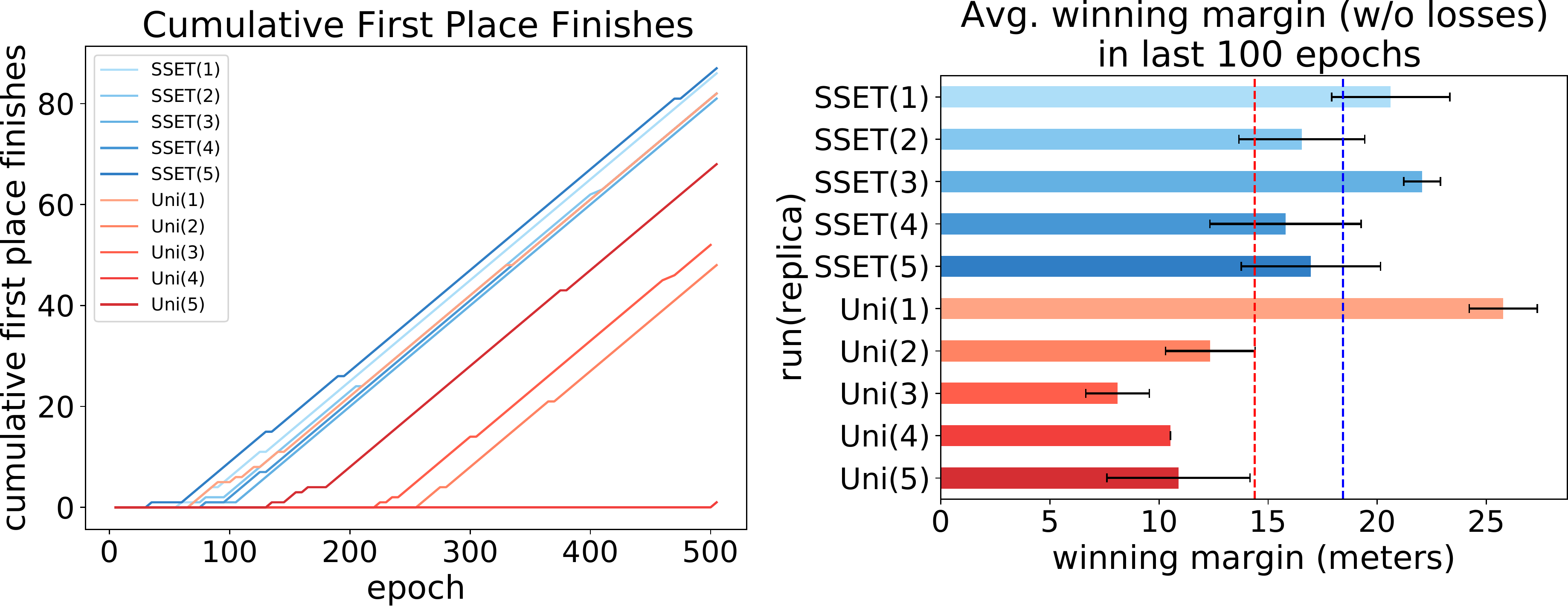}
  \caption{\textbf{Left:} Cumulative wins evaluated every 5 epochs in the slingshot passing test.  Uniform sampling shows high variance while \ETAlg with a slipstream (0.7) event and a ``won'' event consistently has sample complexity on par with the best uniform-sampling run. \textbf{Right:} Average (and std dev) of winning margins (excluding losses) in the last 100 epochs for each run.  While one uniform ERB run did best, on average (dashed lines) \ETAlg has the consistently better performance.}
  \label{fig:slingshot}
\end{figure}

For this task, we introduce two events.  A ``slipstream’’ event (with $\tau=10$s) occurs when the agent's slipstream feature is above a threshold ($0.7$ in this case) and a “won” event (with $\tau=15$s) occurs if the agent ends the section in first place.  Both events use $\kappa = \eta = 10\%$.  Figure \ref{fig:slingshot} reports the cumulative wins for 5 replicas each of \ETAlg versus a monolithic ERB with uniform sampling, both with total capacities ($\sum \kappa_i$) of 2.5 million steps and sharing common seeds. Policies were evaluated after every 5 epochs with the agent started 35 meters behind the opponent. The uniform sampling runs display high variance, taking anywhere from 70 epochs to 505 epochs to start winning.  By contrast the all 5 \ETAlg runs learn to win consistently by epoch 110.  In addition, the \ETAlg runs seem to learn stronger passing skills, with an average winning distance (discarding runs where the agent lost) of $18.4$ meters in the last 100 epochs compared to $14.4$ meters for uniform sampling.  The behavior is somewhat sensitive to the choice of slipstream threshold. 

% In other experiments, lower or higher thresholds expanded the variance by degrading signal or complicating exploration, although the results were still not as varied as uniform sampling. 
% As mentioned in Section~\ref{sec:gt}, we experimented with different thresholds on the ``slipstream'' event to test the sensitivity of \ETAlg to different event specifications.

Figure~\ref{fig:slingshot01} provides results in cases where the slipstream event was triggered by values greater than $0.1$, which occurs very commonly when there is an opponent car ahead of the agent within a 60 meter range.  Under these conditions, the event is not as informative as the $>0.7$ event used in Section~\ref{sec:gt}.  Figure~\ref{fig:slingshot01} shows that under these conditions, \ETAlg still has better average performance and less variance than uniform sampling, but its performance is not as good as the results in Figure~\ref{fig:slingshot}.   

%Finally, we found that both the monolithic ERB and \ETAlg approaches with this setup could experience catastrophic forgetting further on in learning, a topic we cover next with a different kind of event condition.

% \subsection{Additional Slingshot Passing Experiments}

\begin{figure}[t]
  \centering
  \begin{subfigure}[b]{0.495\columnwidth}
    \includegraphics[width=1.0\columnwidth]{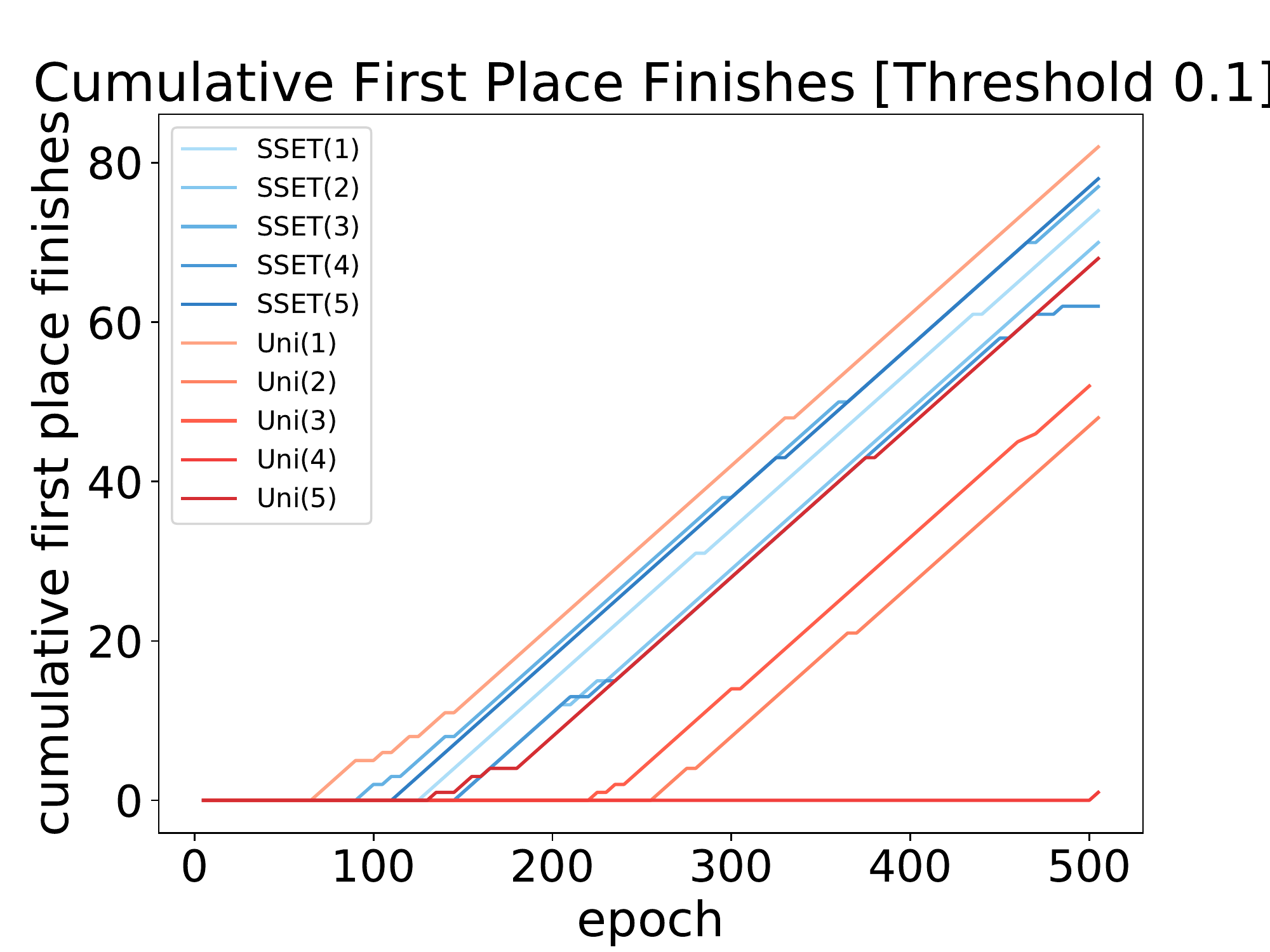}
  \end{subfigure}
    \begin{subfigure}[b]{0.495\columnwidth}
    \includegraphics[width=1.0\columnwidth]{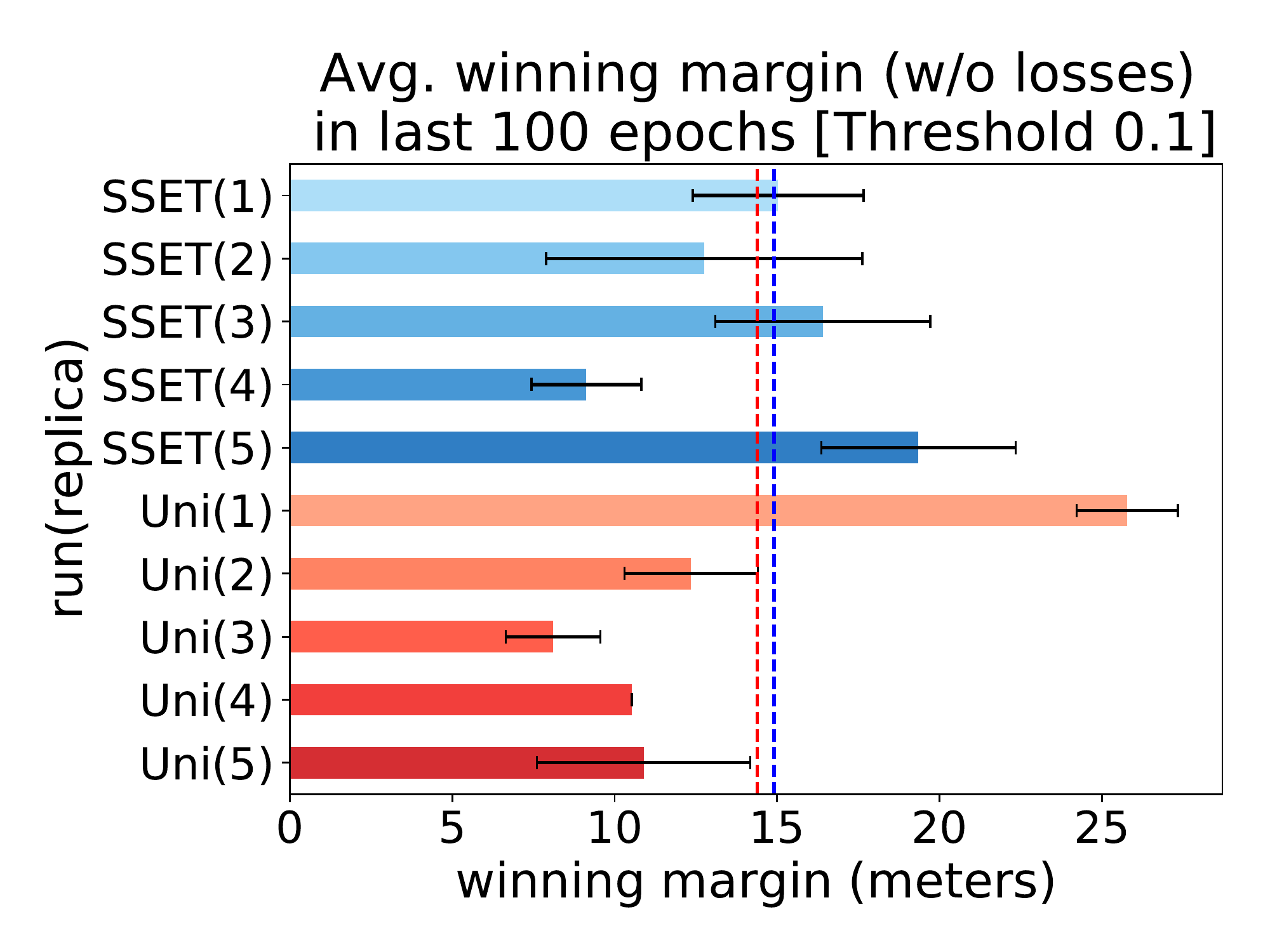}
  \end{subfigure}
  \caption{\textbf{Left:} Cumulative wins evaluated every 5 epochs in the slingshot passing test with a threshold of \textbf{0.1} (almost any slipstream effect).  Uniform sampling shows high variance while \ETAlg with a slipstream (0.1) event and a ``won'' event has better variance but overall performance is not as good as the results in the main text with a 0.7 threshold. \textbf{Right:} Average (and std dev) of winning margins (excluding losses) in the last 100 epochs for each run.}
  \label{fig:slingshot01}
\end{figure}

\begin{figure}[t]
  \centering
  \begin{subfigure}[b]{0.495\columnwidth}
    \includegraphics[width=1.0\columnwidth]{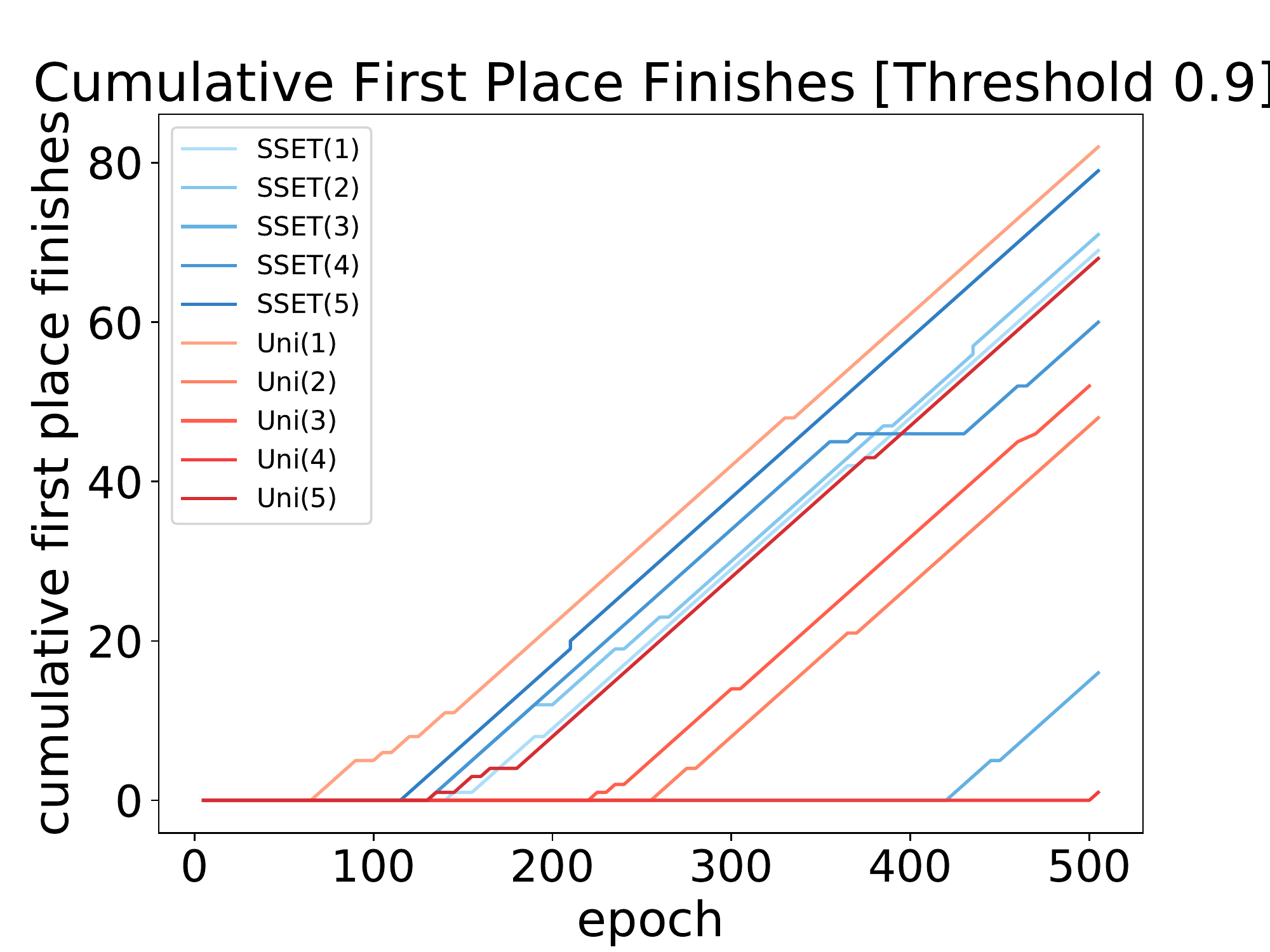}
  \end{subfigure}
    \begin{subfigure}[b]{0.495\columnwidth}
    \includegraphics[width=1.0\columnwidth]{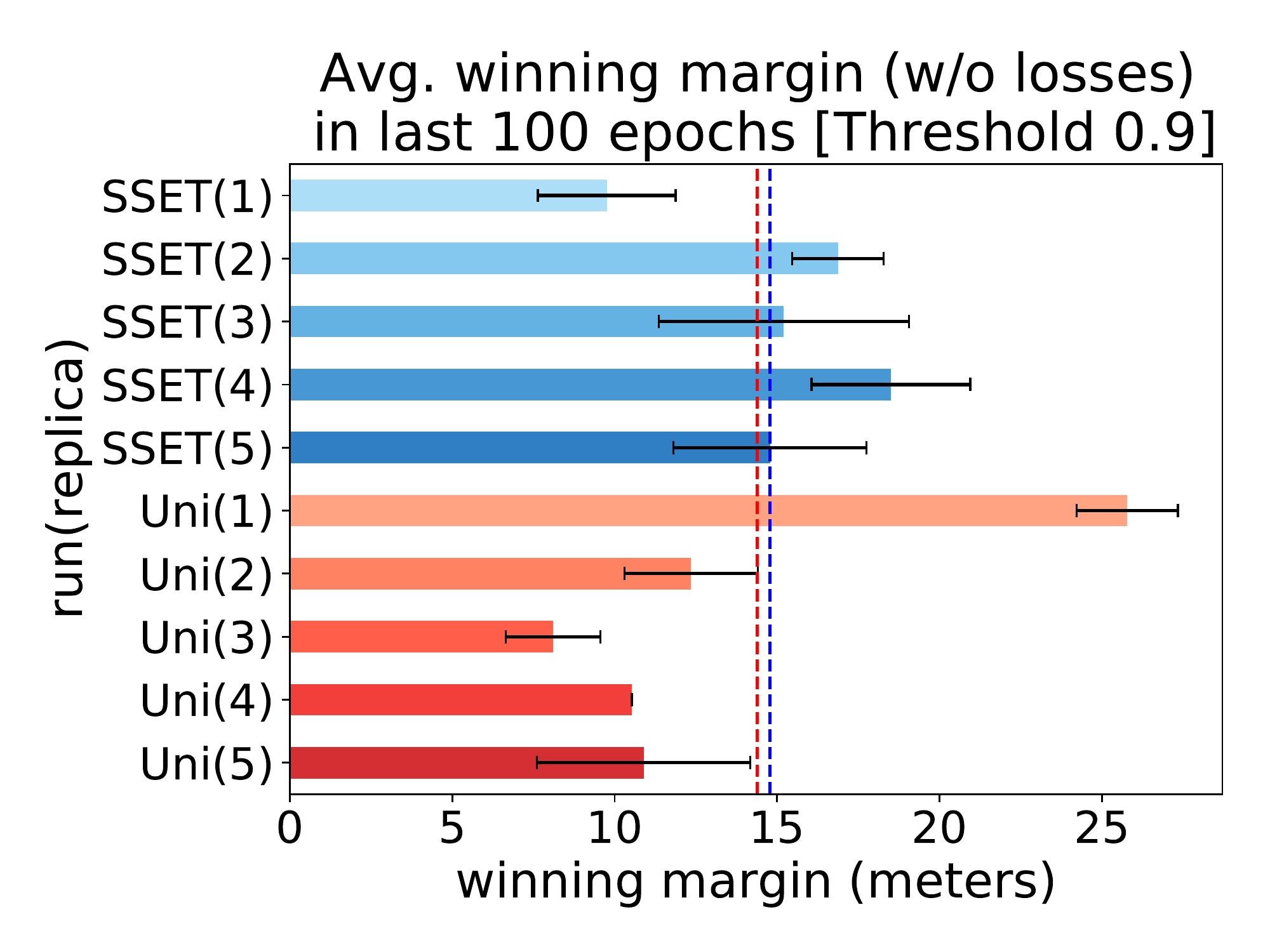}
  \end{subfigure}
  \caption{\textbf{Left:} Cumulative wins evaluated every 5 epochs in the slingshot passing test with a threshold of \textbf{0.9} (only counting slipstream from a very close opponent).  In this case the difficulty of finding states with $> 0.9$ slipstream leads to higher variance, but not as high as uniform sampling. \textbf{Right:} Average (and std dev) of winning margins (excluding losses) in the last 100 epochs for each run.}
  \label{fig:slingshot09}
\end{figure}

Figure~\ref{fig:slingshot09} illustrates results when a value greater than $0.9$ was needed to trigger the event.  This event requires significant exploration to trigger the event early in learning, since the agent must be very close to the car ahead to achieve such a value.  Under these conditions, the variance of \ETAlg increases significantly as the time it takes for the events to aid in learning is highly dependent on early exploration.  However, \ETAlg's variance is still lower than the uniform sampling approach and \ETAlg tends to win earlier and by a slightly larger margin.

Overall these results indicate that \ETAlg is robust to events that happen frequently or (at the other extreme) are hard to find in early learning. \ETAlg fares no worse than uniform sampling in these cases, though not as well as when using better chosen events.

\subsection{Remembering to Stay On Course} \label{sec:oncourse}
We now present an experiment on maintaining multiple skills in a time-trial (solo car) setting on the full Lago Maggiore GP (Maggiore) track using a Porsche 911.  The  experimental details (see Section~\ref{sec:gt_appendix}) follow Wurman et al.’s except that we use only the time-trial reward components. The agent is tasked with running fast laps as well as avoiding off-course penalties.  As learning progresses, off-course events become very scarce compared to the roughly $1200$ steps an agent takes per lap.  Consequently, learned behavior with a monolithic ERB and uniform sampling (red lines in Figure \ref{fig:oncourse}) oscillates:  policies may not go off course for several epochs, then ``forget’’ the potential penalty and shift to a policy that cuts corners, and the cycle repeats. 

\begin{figure}[t]
\centering
% \tiny
%   \begin{subfigure}[b]{0.4\columnwidth}
% %   \begin{adjustbox}{max width=\columnwidth}
%   \begin{tabular}{ |*{3}{p{0.3\columnwidth}|}}
%     \hline
%     \textbf{Metric} & & \\
%     (epochs 1000-2000)& \textbf{\ETAlg} & \textbf{Uni} \\ \hline \hline
%     Min/Avg/Max & $.040$ & $.210$ \\
%     fraction &  $.111$  & $.253$ \\
%     policies left track&$.180$ & $.305$ \\ \hline
%     Avg.$\pm$($\sigma$)  & $114.607$& $114.459$ \\
%     best lap time (s)& $\pm.12$ & $\pm.12$\\ 
%     w/o disqualified policies& &\\ \hline
%     Avg.$\pm$($\sigma$) &$114.618$&$114.496$ \\
%     best lap time (s) & $\pm.15$ & $ \pm .22$\\
%     with disqualified policies &  & \\ \hline
%   \end{tabular}
% %   \end{adjustbox}
%   \end{subfigure}
%   \hfill
%   \begin{subfigure}[b]{0.48\columnwidth}
%     \includegraphics[width=1.0\columnwidth]{figures/reest_70_cumulative_num_times_left_track.pdf}
%   \end{subfigure}
\includegraphics[width=0.9\columnwidth]{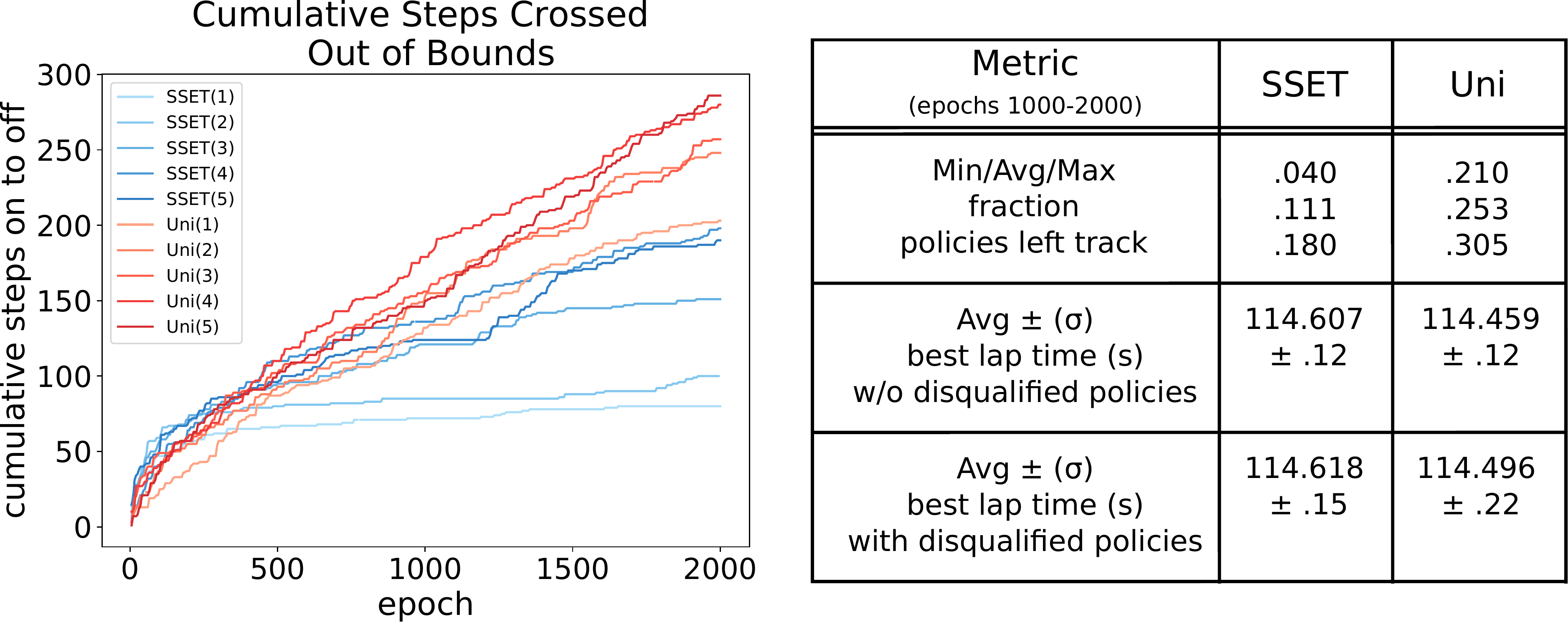}
  \caption{\textbf{Left:} Cumulative times an agent went off-course in 3-lap evaluations performed every $5$ epochs.  Uniform sampling runs (red lines) display catastrophic forgetting, oscillating between steady laps and policies that take risks and go off course.  \ETAlg converges to policies that more consistently stay on course (flatter blue lines). \textbf{Right}: Summary statistics on converged behavior.  The worst \ETAlg run stays on course more than the best uniform one, with comparable lap times.}
  \label{fig:oncourse}
\end{figure}

To retain consistent on-track behavior, we use \ETAlg with a \emph{re-establish} event that occurs if an agent returns to the track for $2$ seconds after having left the course and use a history length of $7$ seconds (roughly the half-life of the agent's horizon) to capture the full sub-trajectory of leaving and returning to the course.  We set $\eta$ and $\kappa$ to $1\%$ and used an ERB of total capacity 10-million.  Note this is an extension of the formal event spec definition since the event condition here is based on a history of states, not just $s'$. The results of this approach (blue lines in Figure~\ref{fig:oncourse})  show the oscillating behavior is replaced by consistent on-course laps, with $88.9\%$ of \ETAlg policies (up from $74.7\%$ of uniform sampling policies) incurring no penalties after epoch $1000$.  Notably the \emph{worst} off-course percentage for the \ETAlg runs is still better than the \emph{best} percentage with uniform sampling.  The minimum (out of 3 in each eval) lap times averaged across later epochs are also within $0.15$ seconds.  A second set of 5 runs using a $1s$ reestablish requirement with $\tau=5s$ achieved virtually the same results ($86.9\%$ of policies on course with similar lap times).  Thus, in a highly realistic driving simulator with a deep neural network, \ETAlg mitigates catastrophic forgetting and balances learning of multiple skills, echoing the results on MiniGrid domains in Sections~\ref{sec:obstacle} and \ref{sec:minigrid_forget}.

\section{Recommendations on How to Pick Helpful Events}
\label{sec:pick_events_appendix}

\ETAlg requires domain knowledge to specify the table partitions and in this regard, is similar to designing potential functions for shaping rewards. In large domains like Mujoco (Section~\ref{sec:mujoco}) and GT (Section~\ref{sec:gt}), there are already dense reward structures built into the canonical versions of the environments, so it is difficult to add “yet another reward term” in a meaningful way, but quite easy to use \ETAlg and gain benefits from domain knowledge. As for the ease of specifying domain knowledge for \NoSpaceETAlg, we provide the following guidelines:
\begin{itemize}
    \item For users that don’t know their domain well, one can use a “goal” event (see mini-grid experiments in Section~\ref{sec:minigrid}) or in a non-goal environment, use reward-threshold events (see Mujoco experiments in Section~\ref{sec:mujoco}) with a fairly long history and reap benefits, even without understanding the true subgoals.
    
    \item Even if the events are poorly chosen, the technique usually does only as badly as uniform sampling (see Figures ~\ref{fig:int_events}, \ref{fig:shaping_events_appendix}, \ref{fig:events_eta}).  The cases where it would actually hinder performance are relatively pathological (i.e.. using the majority of the buffer for incorrect events) and are easily avoidable in practice by setting reasonable caps on the event table sizes (say no more than ~30\% total as indicated in Figure~\ref{fig:events_eta} with table size experiments with bad events).
    
    \item \ET in \ETAlg only require unsigned integer indices pointing to the data that is already stored in the main buffer. The additional memory footprint expands with the number of tables times their sizes. The publicly available Reverb~\citep{Cassirer2021Reverb} package already implements this data structure efficiently.

\end{itemize}

Beyond these guidelines for using user-specified events, the topic of learning a helpful set of events online is an open, but intriguing question.  The difficulty in identifying events with no background knowledge is that by Definition~\ref{def:event_cond} in the Appendix, events correspond to states that are most likely on optimal trajectories, so in the extreme case knowing the perfect set of events means knowing the optimal policy's state distribution.  However, in cases where some model information is known, subgoal discovery methods could be used to propose events.  One could also over-specify a large number of events and attempt to learn the weights ($\eta$ and $\kappa$) online, though the current work does not guarantee the convergence of such a meta-learner.

\section{Conclusions and Future Work}\label{sec:conclusions}
This paper introduced \ET and  \ETAlg algorithm to improve the sample complexity and stability of off-policy RL algorithms. The theoretical results quantify the potential speedups and lend guidance for choosing event conditions.  Experiments in MiniGrid, standard RL benchmarks, and Gran Turismo Sport show the benefits of \ETAlg over monolithic ERBs or other prioritization schemes. Furthermore,  combining TD-error PER or reward shaping on top of \ETAlg led to further improvements in sample complexity and stability.

The current approach relies on domain experts or RL practitioners to designate event conditions.  It may be possible to use subgoal discovery methods \citep{Kulkarni2016Deep,McGovern2001Automatic,kompella2017continual} to automatically identify event conditions.  Future studies may also consider dynamically changing the events or their proportions ($\eta$ and $\kappa$) based on learning progress.  For instance, an event like ``driving forward'' might help early in learning but could be supplanted later by an ``overtake'' table as the agent hones in on more complex skills.  Using such dynamic rules, which might be based on the insertion rates in different tables, could be used to encode a curriculum that would directly change the complexion of mini-batches, not just the data collection tasks.

\bibliography{refs} % Entries are in the refs.bib file
\bibliographystyle{tmlr}

%%%%%%%%%%%%%%%%%%%%%%%%%%%%%%%%%%%%%%%%%%%%%%%%%%%%%%%%%%%%

\clearpage

\appendix

\section{Appendix}

\subsection{Theoretical Analysis}\label{sec:theory_appendix}
In this section we prove sufficient conditions for the improvement of convergence speed using \ET and \NoSpaceETAlg.  Specifically, we show that the more correlated events are with optimal behavior, the larger the expected improvement.  To ground the analysis and make use of existing results, we use \emph{tabular} Q-learning with a target function.  We start from known finite-time convergence results in this setting~\citep{Lee2020Periodic, Li2022Note}.  Specifically, Theorem 1 from Li et al.~\citep{Li2022Note} (a refined version of Theorem 3 from Lee et al.~\citep{Lee2020Periodic}) computes the lower bound of sample complexity ($N$) for learning an $\varepsilon$-optimal solution using an iterative tabular Q-learning algorithm with target networks (Algorithm~\ref{alg:targetq}).
We show that the lower bound of the sample complexity for achieving $\varepsilon$-optimal behavior is reduced by using event tables for sampling (Theorem~\ref{thm:main_appendix}).

The steps taken from the know result in the uniform sampling case to the new theorem are as follows. We define the state probability distribution (density) following a given policy to a finite horizon from an initial state and use that to define the state density disparity to the optimal policy (Definitions~\ref{def:density}-\ref{def:disparity}).  We then formally define \emph{event conditions} (Definition~\ref{def:event_cond}) on the states with low optimal-policy disparity or final states of the optimal policy. We extend this definition to event sections and their corresponding tables that include sufficient history to (on expectation) reach back to a previous event or initial state (Definitions~\ref{def:event_sec}-\ref{def:event_table_appendix}).  We then quantify the over-sampling of experience in the event tables (Lemma~\ref{lem:prob_appendix}) and derive the convergence rate (Prop.~\ref{prop:convergence}) and bias correction procedure (Lemma~\ref{lem:bias_appendix}).  Finally, we show that the resulting convergence bound is an improvement over uniform sampling (Theorem~\ref{thm:main_appendix}).

%To analyze and compare the finite time convergence of Q-learning with target-networks~\citep{Mnih2015Human} using event tables over a traditional uniform experience replay buffer, we rely on the analytical results from~\citep{Lee2020Periodic, Li2022Note} and adapt to our specific problem.     

\begin{algorithm}[!ht]
\setcounter{AlgoLine}{0}
\caption{Target Q-learning using an experience replay buffer $\cB$}
\label{alg:targetq}
\dbc*[l]{\begin{small}input: outer loop iteration number $K$, inner loop iteration number $I$, initialization $Q_0$, step-sizes {$\alpha_t$}. \end{small}}
\dbc*[l]{\begin{small}output: $Q_K$ \end{small}}
\For{$k = 0$ \KwTo $K-1$}
{
  \dbc*[l]{\begin{small} outer loop \end{small}}
  $Q_{k, 0}$ \assign  $Q_k$ \\
  \For{$i = 0$ \KwTo $I - 1$} 
  {
  \dbc*[l]{\begin{small} inner loop \end{small}}
    $(s, a, r, s') \sim \cB$ \dbc*[l]{\begin{small} randomly sample a transition \end{small}}
    $Q_{k,i+1}(s, a) = Q_{k,i}(s, a) + \alpha_i(r + \gamma \max_{a'} Q_k(s', a') - Q_{k, i}(s, a))$ \dbc*[l]{\begin{small} update \end{small}}
  }
  $Q_{k+1}$ \assign  $Q_{k, I}$ \dbc*[l]{\begin{small} refresh target \end{small}}
}
\end{algorithm}

First, we provide a general overview of the base algorithm and the existing results.

\noindent\textbf{Assumptions}: Throughout the analysis we consider a finite discrete episodic MDP  $M = (\cS, \cA, \cP, \cR, T, \gamma)$, where $\cS$ and $\cA$ are finite and discrete state and action spaces, $\cP$ is the state transition probability and $\cR$ is a bounded reward function, \st~ $r(s,a) \in [0, 1],~ \forall (s, a) \in \cS \times \cA$, $T$ is the length of each episode and $\gamma \in (0, 1)$ is the discount factor. We assume for simplicity in notation that any terminal state traps the agent until time step $T$. We assume that the agent takes actions using a fixed stochastic behavior policy $\pi^b: \cS \times \cA \rightarrow [0, 1]$, such that it can visit every state and execute every action with a non-zero probability within the $T$ time-step horizon. We denote $\pi^*: \cS \times \cA \rightarrow[0, 1]$ to be an optimal policy of the MDP~\citep{Puterman2014Markov}, such that its state-action value function is optimal $Q^{\pi^*}(s, a) = \max_{\pi} Q^{\pi}(s, a),~\forall (s,a) \in \cS \times \cA$. Again for simplicity in notation, we assume that the event-tables are sampled uniformly with a total sampling probability denoted by $\eta = \sum_{i=1}^n \eta_i (= \eta / n)$ (see Alg.~\ref{Alg:EvER}). The sampling probability of the default-table is $\eta_0 = 1 - \eta$.

Let $\cT: \R^{|\cS||\cA|} \rightarrow \R^{|\cS||\cA|}$ define the Bellman operator \begin{gather*}
    \cT Q_k(s, a) = \E_{s'\sim P(.|s, a)}\left[r(s, a) + \gamma  \underset{a'}{\max}~Q_k(s', a')\right]. 
\end{gather*}
Q-learning using a target-network $Q_k$ and an experience replay buffer $\cB$ can be viewed as minimizing the following loss function
\begin{align}
    \underset{Q \in \R^{|\cS||\cA|}}{\min} l(Q; Q_k, \cB) = \dfrac{1}{2} \underset{(s, a, r, \cdot) \sim \cB}{\E} \left[ \left( \underset{s' \sim P(.|s, a)}{\E}\left[r(s, a) +  \gamma \underset{a'}{\max}~Q_k(s', a')\right] - Q(s, a) \right)^2 \right]
    \label{eq:qloss}
\end{align}
For a given buffer $\cB$ generated using the fixed behavior policy $\pi^b$, we define constants (following the notation used in ~\citep{Lee2020Periodic, Li2022Note})
\begin{align*}
    c_{\cB}&:=\underset{s \in \cS, a \in \cA}{\min}P((s, a, \cdot, \cdot) \sim \cB) \\ \quad L_{\cB}&:=\underset{s \in \cS, a \in \cA}{\max}P((s, a, \cdot, \cdot) \sim \cB)
\end{align*}
Here, $c_{\cB}$ and $L_{\cB}$ denote the minimum and the maximum probability that a state-action pair $(s, a)$ is sampled from the buffer. From our assumptions, $c_{\cB} > 0$.

Carrying out $N$ steps of SGD optimizing Eq.~\ref{eq:qloss}, we get an approximation of $\cT Q_k$ with a certain error bound $\E\left[\| Q_{k+1} - \cT Q_k \| \right] \leq \varepsilon_{k+1}$, which then accumulates across outer iterations over $k$. 

\begin{prop}{(Theorem 1~\citep{Li2022Note}) Consider an MDP with $Q_0 = 0$, $c_{\cB} > 0$, $\alpha_t = \frac{\alpha}{\lambda + t}$, where $\alpha = 2/c_{\cB}$ and $\lambda = (13 \gamma^2 L_{\cB}) / (2 c_{\cB}^2)$. The minimum number of samples required to achieve an $\varepsilon$-optimal solution $\E[\|Q_K - Q^*\|_{\infty}] \leq \varepsilon$ using Algorithm~\ref{alg:targetq} is given by
\begin{align*}
    N^{\cB, K} = \dfrac{832 \gamma^2}{(1-\gamma)^5 \varepsilon^2} \log\left(\dfrac{4}{(1-\gamma)\varepsilon}\right)\dfrac{L_{\cB}}{c_{\cB}^3}.
\end{align*}
\label{prop:complexity}
}
\end{prop}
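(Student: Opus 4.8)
The plan is to recognize Proposition~\ref{prop:complexity} as the finite-time sample-complexity bound for the target-network Q-learning of Algorithm~\ref{alg:targetq}, and to obtain it by checking that the hypotheses of the refined analysis of Li et al.\ hold for i.i.d.\ sampling from the replay buffer $\cB$. The key observation is that each inner-loop step draws $(s,a,r,s')\sim\cB$ i.i.d., so the distribution over the updated coordinate $(s,a)$ is stationary with per-coordinate mass bounded below by $c_{\cB}$ and above by $L_{\cB}$; the effective mixing time is trivial and $c_{\cB},L_{\cB}$ are exactly the coverage parameters that drive the rate. First I would freeze the target $Q_k$ and read the inner loop as an asynchronous stochastic-approximation scheme estimating the fixed target vector $\cT Q_k$, updating only the sampled coordinate with step-size $\alpha_t=\alpha/(\lambda+t)$.

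Second, I would invoke the finite-time bound for this iteration to control the per-outer-iteration error $\E[\|Q_{k+1}-\cT Q_k\|_\infty]\le\varepsilon_{k+1}$ as a function of the inner sample budget. With the prescribed choices $\alpha=2/c_{\cB}$ and $\lambda=13\gamma^2 L_{\cB}/(2 c_{\cB}^2)$, the slowest-visited coordinate (reached with probability $c_{\cB}$) sets the $1/c_{\cB}$ decay, while the per-step noise, which scales with $L_{\cB}$ and the effective horizon $1/(1-\gamma)$, produces the $L_{\cB}/c_{\cB}^3$ and $(1-\gamma)$ factors of the final expression.

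Third, I would accumulate these errors over the $K$ outer iterations using that $\cT$ is a $\gamma$-contraction in $\|\cdot\|_\infty$ with fixed point $Q^*=\cT Q^*$: the recursion $\|Q_{k+1}-Q^*\|_\infty\le\varepsilon_{k+1}+\gamma\|Q_k-Q^*\|_\infty$ unrolls, and with $Q_0=0$ and $\|Q^*\|_\infty\le 1/(1-\gamma)$ the transient term is $\gamma^K/(1-\gamma)$. Choosing $K$ so this term is at most $\varepsilon/2$ yields the $\log(4/((1-\gamma)\varepsilon))$ factor; balancing the accumulated tail against the remaining $\varepsilon/2$ budget and inverting for the total sample count gives the stated $N^{\cB,K}$.

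I expect the main obstacle to be pinning down the explicit numerical constant and the exact powers of $(1-\gamma)$ and $c_{\cB}$, which is the technical core of the refined stochastic-approximation argument and precisely where the cited Theorem~1 sharpens the earlier Lee et al.\ bound. The cleanest route is therefore to verify that i.i.d.\ replay-buffer sampling meets its assumptions---stationary coordinate distribution, $c_{\cB}>0$, bounded rewards $r(s,a)\in[0,1]$, and the $\alpha_t$ schedule---and then apply it directly rather than re-derive the constants from scratch.
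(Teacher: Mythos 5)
The paper offers no proof of this proposition at all: it is imported verbatim as Theorem~1 of \citet{Li2022Note} (a refinement of Theorem~3 of \citet{Lee2020Periodic}), and your conclusion --- verify that i.i.d.\ sampling from $\cB$ with $c_{\cB}>0$, bounded rewards, and the prescribed $\alpha_t$ schedule satisfies the cited theorem's hypotheses, then apply it directly rather than re-derive the constants --- is exactly what the paper does. Your sketch of the underlying two-level argument (inner-loop stochastic approximation of $\cT Q_k$ plus outer-loop $\gamma$-contraction accumulation) is a faithful summary of the cited analysis, so the proposal matches the paper's approach.
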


We show \ETAlg can decrease this bound when the events are correlated with an optimal policy and histories are sufficiently long.  To quantify these conditions, we use the following definitions of optimal trajectories and their state densities induced by various policies.

\begin{mydef}
\textbf{Trajectory}: We define a trajectory $\Gamma^{\pi}_{s_i, s_j}$ of the MDP \textbf{$M$} as a temporal sequence of transition tuples 
 \begin{align*}
     \Gamma^{\pi}_{s_i, s_j} = \big\{ (s_k, a_k, r_k, s_{k+1}) \suchthat s_{k+1} \sim P(\cdot | s_k, a_k \sim \pi(\cdot | s_k)), r_k \sim \cR_{a_k}^{s_k, s_{k+1}}, \forall k \in [i, j-1]) \big\}.
 \label{def:optimal}
 \end{align*}
 For simplicity, $|\Gamma^{\pi}_{s_i, s_j}|$ denotes the length or the number of transitions of the trajectory. 
%  $\Gamma^{\pi}(0)$ denotes the initial state of the trajectory and $\Gamma^{\pi}(-1)$ denotes the terminal state.
%  We define $\Gamma^*_{s_i, s_j}$ as an optimal trajectory of the MDP $M$ connecting states $s_i$ and $s_j$ such that it maximizes the expected discounted return along the trajectory $\underset{\pi}{\max}\E[\sum \gamma^k R_{a_k}^{s_k, s_{k+1}}, \forall (s_k, a_k, r_k, s_{k+1}) \in \Gamma^{\pi}_{s_i, s_j}]$.   
\end{mydef}

\begin{mydef}
\textbf{State Density}: We define $\rho^{\pi, s_0, K}: \cS \rightarrow [0, 1]$ as the state probability distribution following any policy $\pi$ for the MDP with the initial state $s_0$ over a finite time horizon $K$ 
\begin{align}
    \rho^{\pi, s_0, K}(s) = \dfrac{1}{C}\sum^K_{k=0} P(s_k=s | \pi, s_0).
\end{align}
$C$ enforces the constraint $1^T\overline{\rho}^{\pi, s_0, K} = 1$ to make $\rho^{\pi, s_0, K}$ a probability distribution, where $\overline{\rho}^{\pi, s_0, K} = [\rho^{\pi, s_0, K}(s_1), ..., \rho^{\pi, s_0, K}(s_N)]^T \in \R^N$.
\label{def:density}
\end{mydef}

\begin{mydef}
\label{def:disparity}
\textbf{State Density Disparity from Optimal}: We define $\overset{\sim}{\rho}^{\pi, s_0, K}: \cS \rightarrow [-1, 1]$ as the difference in the state density following any policy $\pi$ compared to an optimal-policy $\pi^*$ for the MDP with the initial state $s_0$ over a finite time horizon $K$ 
\begin{align}
    \overset{\sim}{\rho}^{\pi, s_0, K}(s) = \rho^{\pi^*, s_0, K}(s) - \rho^{\pi, s_0, K}(s).
\end{align}
\end{mydef}

We now formally define an \emph{event condition} $\eCon: \cS \rightarrow \{0, 1\}$, which is the core concept of our \ETAlg algorithm.  An event occurs when an agent enters a state that satisfies the event condition, which implicitly defines a set of event states: $\I_{[s \in \cS^{\eCon_i}]}$. Intuitively, good event conditions should be aligned with the optimal policy and also act as waypoints linking initial and final states visited by the optimal policy. Therefore, final states visited by the optimal policy should also satisfy at least one event condition.
\begin{mydef}
\label{def:event_cond}
\textbf{Event condition}: Let $\cI$ denote the initial state distribution of the episodic MDP with the episode length $T$. For a given threshold $\mu \in (0, 1)$, we define a collection of event sets $\cS^{\eCon} = \{\cS^{\eCon_1},..., \cS^{\eCon_n}\}$,~ \st~$\forall s_i \in \cS^{\eCon_i}~ \exists s_j \in \cI \cup \cS^{\eCon_j}$ where the following conditions are true:
\begin{gather}
\text{either}\quad \quad \overset{\sim}{\rho}^{\pi^b, s_j, T}(s_i) \geq \Delta = (1 - \mu)  \label{Eq:ev_cond1}\\
\text{or} \quad|\Gamma^{\pi^*}_{s \in \cI, s_i}| = T \label{Eq:ev_cond2}
\end{gather}
We define an event condition $\eCon_i: S \rightarrow \{0, 1\}, \forall i \in [1, n]~ \st~ \eCon_i(s) = \I_{[s \in \cS^{\eCon_i}]}$.  
\end{mydef}

Intuitively Condition (\ref{Eq:ev_cond1}) covers states that are significantly (based on $\mu$) more likely under the optimal policy than under $\pi^b$.  Condition (\ref{Eq:ev_cond2}) covers terminal states visited by the optimal policy in case they are not satisfied under Condition (\ref{Eq:ev_cond1}).  Note, the definition above could cover a large amount of the state space in highly stochastic domains.  Therefore, Theorem~\ref{thm:main_appendix} filters the set further to focus on higher probability states without significant degradation to the overall sample complexity.

For simplicity in the notation used in our proofs, we denote an event section $\cE^{\eCon_i}$ as a set of states whose state density disparity from the optimal policy is less than (or equal to) $\Delta$. Intuitively, these are the states from which the agent can easily reach the event states.

\begin{mydef}
\label{def:event_sec}
\textbf{Event Section}: We define an event section $\cE^{\eCon_i}$ as 
\begin{gather*}
\cE^{\eCon_i} = \left\{ s \suchthat
\sup_{s'\in \cS} \left\{\overset{\sim}{\rho}^{\pi^b, s', T}(s_i),  ~\forall s_i \in \cS^{\eCon_i}\right\} = \Delta \right \}.
\end{gather*}
\end{mydef}

\begin{prop}
\label{prop:event_aux}
For a given $\mu$, $\exists \cS^{\eCon} $ \st~ all initial states of the MDP $\cI$, the event states $\cS^{\omega_{\forall i}}$, 
and the optimal terminal states $\{s ~|~ |\Gamma^{\pi^*}_{s_0 \in \cI, s}| = T\})$ belong to at least one event section $\cE^{\omega}$.
\end{prop}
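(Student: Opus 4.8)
The plan is to prove existence by explicitly constructing the event sets as waypoints along sampled optimal trajectories and then verifying the three coverage requirements directly. Following the intuition stated after Definition~\ref{def:event_sec}, I read the event section $\cE^{\eCon_i}$ as the set of states whose density disparity to some event state in $\cS^{\eCon_i}$ is at most $\Delta$ --- the states from which the optimal policy reaches an event ``easily.'' Under this reading every event state trivially lies in its own section (its disparity to itself is $0 \le \Delta$), which immediately covers the event states $\cS^{\eCon_i}$ once they are chosen.

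First I would seed the construction with the optimal terminal states $\{s : |\Gamma^{\pi^*}_{s_0 \in \cI, s}| = T\}$: by Condition~(\ref{Eq:ev_cond2}) each qualifies as an event state, so placing them into an event set both discharges the terminal-state part of the claim and anchors the backward construction. Then, for each initial state $s_0 \in \cI$, I would walk backward along a sampled optimal trajectory $\Gamma^{\pi^*}_{s_0, \cdot}$ from the terminal and insert a new event state each time the running density disparity $\overset{\sim}{\rho}^{\pi^b, \cdot, T}(\cdot)$ to the previous waypoint reaches $\Delta$. Placing waypoints at this spacing makes each new event satisfy Condition~(\ref{Eq:ev_cond1}) with its immediate predecessor as the anchor (disparity exactly $\Delta$), while guaranteeing that the resulting $\Delta$-balls tile the trajectory. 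In particular the first waypoint lies within disparity $\Delta$ of $s_0$, so $s_0$ lies in that waypoint's section, covering $\cI$; and consecutive waypoints are within $\Delta$ of one another, which re-covers the placed event states.

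To finish I would check finiteness: since $\cS$ and $T$ are finite and each disparity is bounded by $1$, a bounded number (on the order of $1/(1-\mu)$) of waypoints is needed per trajectory, and there are finitely many initial states, so the union of the per-trajectory event sets is a finite, valid collection $\cS^{\eCon}$ in which $\cI$, the event states, and the optimal terminals all fall inside some event section.

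The main obstacle is justifying that a waypoint at disparity exactly $\Delta$ can always be placed, i.e.\ a discrete intermediate-value property for $\overset{\sim}{\rho}^{\pi^b, \cdot, T}(\cdot)$ as the reference point slides along the optimal trajectory. Because $\rho$ is a normalized visitation distribution, the disparity can jump by a non-infinitesimal amount between adjacent states, so strictly I would place the waypoint at the first state whose disparity to the predecessor is $\ge \Delta$ and absorb the overshoot --- a state already within $\Delta$ of an existing event needs no new waypoint. Making this tiling rigorous, and invoking the full-support assumption on $\pi^b$ to ensure every disparity is well defined and that the monotone accumulation along the trajectory stays bounded, is where the technical care concentrates; the remaining coverage bookkeeping is then routine.
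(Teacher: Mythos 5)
Your construction is genuinely different from the paper's argument, which builds no waypoints at all: the paper simply splits on whether Condition~(\ref{Eq:ev_cond1}) is satisfiable for the given $\mu$. If it is not, $\cS^{\eCon}$ is taken to be the single terminal event set supplied by Condition~(\ref{Eq:ev_cond2}), and coverage of the initial states is read directly off Definition~\ref{def:event_sec}; if it is, the event states are asserted to fall either in a Condition-(\ref{Eq:ev_cond1}) section or in the terminal section $\cE^{\eCon_{\text{term}}}$. Your explicit chain of $\Delta$-spaced waypoints along optimal trajectories is more informative where it applies (it exhibits the waypoint structure that Lemma~\ref{lem:prob_appendix} implicitly exploits), but it is not a complete proof of this existence statement.

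The concrete gap is the degenerate case: your construction presumes that, moving along an optimal trajectory, the running disparity $\overset{\sim}{\rho}^{\pi^b,\cdot,T}(\cdot)$ actually attains the threshold $\Delta = 1-\mu$ somewhere, so that at least one interior waypoint can be placed. For small $\mu$ this can fail at \emph{every} state --- Condition~(\ref{Eq:ev_cond1}) holds nowhere --- and then your walk places no waypoints, there is no event state within disparity $\Delta$ of $s_0$, and your coverage of $\cI$ collapses. This is precisely the case the paper's proof leads with, and it is resolved there not by tiling but by falling back to the lone terminal event set and the definition of its event section. A second, related issue is your reading of Definition~\ref{def:event_sec} as ``disparity at most $\Delta$,'' under which every state is essentially in every section and the claim trivializes; this conflicts with the stated intuition that an event section contains the states from which the event is easily \emph{reached}. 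Under the intended anchoring reading (disparity at least $\Delta$ to some event state), an event state belongs to a section only because it anchors the \emph{next} waypoint in your chain, so the terminal states --- which anchor nothing --- require the separate treatment via Condition~(\ref{Eq:ev_cond2}) and $\cE^{\eCon_{\text{term}}}$. Adding the degenerate case and routing event-state coverage through anchoring rather than self-membership would close the argument.
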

\begin{proof}
Let us first consider the case where for a given $\mu$, Condition~(\ref{Eq:ev_cond1}) does not hold for any $s \in \cS$. From Def.~\ref{def:event_cond}, $\cS^{\eCon}$ contains only a single event-set that includes all the optimal terminal states $\cS^{\eCon_{\text{term}}} = \{s ~|~ |\Gamma^{\pi^*}_{s_0 \in \cI, s}| = T\})$. Therefore, from Def.~\ref{def:event_sec}, all initial states belong to the event section $\cE^{\omega_{\text{term}}}$, and hence the result is true for this case.

Now for the case where there are non-zero number of event-sets that satisfy Condition~(\ref{Eq:ev_cond1}), it follows that the event states belong to either the event section where Condition~(\ref{Eq:ev_cond1}) is true or the terminal $\cE^{\omega_{\text{term}}}$.
\end{proof}

From the definition of events, we can define an event table that stores these experiences that lead to them.

\begin{mydef}
\label{def:event_table_appendix}
\textbf{Event Table}: An event table $\cB^{\nu_i}$ for event spec $\eSpec_i=\langle\eCon_i,\eHist_i\rangle$, denotes an experience replay buffer, which is a multiset (a set with repeated elements) of transitions from trajectories of a given maximum length $\eHist_i$ \st
\begin{align}
    \cB^{\nu_i} = \bigcup \bigg\{(s, a, r, s') \suchthat (s, a, r, s') \in \Gamma^{\pi^b}_{s_i, s^{\eCon_i}}, 
    |\Gamma^{\pi^b}_{s_i, s^{\eCon_i}}| \leq \eHist_i, \forall s^{\eCon_i} \in \cS^{\eCon_i} \bigg\}.
\end{align}
\end{mydef}

Using the above definitions, we can now analyze the likelihood of sampling any particular state from an ERB that contains both event tables and a default table $\cB^0$ based on fixed sampling probabilities of the event tables (as used in Algorithm~\ref{Alg:EvER}).  The following lemma quantifies the over-sampling of experiences in the event tables.

\begin{lem}
\label{lem:prob_appendix}
Let $\cB^{\nu} = \underset{\forall i \in [1, n]}{\cup} \cB^{\nu_i}$ denote the union events table and $\cB^0$ denote the default table that contains all the transitions collected following a fixed behavior policy $\pi^b$. $s \overset{\eta}{\sim} \cB^{\nu} \cup \cB^0$ denotes a weighted sampling ($0<\eta<1$) of a transition tuple $(s, \cdot, \cdot, \cdot)$ between the event and the default tables. For any event-spec $\nu_i$, 
\begin{gather*}
\text{if}\qquad \eHist_i \leq \frac{(1 - \eta)^m}{(m+1)n\mu}, \\ 
\text{then}\quad P(s \overset{\eta}{\sim} \cB^{\nu}\cup\cB^0) \geq (1-\eta)^{-m}P(s \sim \cB^0),~ \forall (s, \cdot, \cdot , \cdot) \in \cB^{\nu_i}, m \in \Z^{0+}.
\end{gather*}
\end{lem}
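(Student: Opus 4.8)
The plan is to show that the weighted sampling scheme boosts the probability of every event-history state by the required factor, by writing the combined sampling probability as a convex combination and then lower-bounding the contribution of the single relevant table. Using the uniform-across-tables assumption ($\eta_i=\eta/n$), I would first write
\begin{align*}
P\!\left(s \overset{\eta}{\sim} \cB^{\nu}\cup\cB^0\right) = (1-\eta)\,P(s\sim\cB^0) + \frac{\eta}{n}\sum_{j=1}^{n} P(s\sim\cB^{\nu_j}) \;\geq\; (1-\eta)\,P(s\sim\cB^0) + \frac{\eta}{n}\,P(s\sim\cB^{\nu_i}),
\end{align*}
where the inequality simply discards the non-negative terms $j\neq i$. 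Dividing through by $P(s\sim\cB^0)$ (positive since $c_{\cB}>0$), the target $(1-\eta)^{-m}$ reduces to a lower bound on the over-sampling ratio $P(s\sim\cB^{\nu_i})/P(s\sim\cB^0)$.

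The crux, which I expect to be the \textbf{main obstacle}, is establishing
\begin{align*}
\frac{P(s\sim\cB^{\nu_i})}{P(s\sim\cB^0)} \;\geq\; \frac{1}{\mu\,\eHist_i}.
\end{align*}
To obtain this I would bound the event-table size via Def.~\ref{def:event_table_appendix}: each stored history has length at most $\eHist_i$, so $|\cB^{\nu_i}| \leq \eHist_i\cdot(\#\,\text{event-}i\text{ occurrences})$, and the number of occurrences is at most $\mu\,|\cB^0|$ because Defs.~\ref{def:event_cond}--\ref{def:disparity} force the behavior density on event states to be small: the threshold $\Delta=1-\mu$ means $\rho^{\pi^*}-\rho^{\pi^b}\geq 1-\mu$ on those states, hence $\rho^{\pi^b}\leq\mu$. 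This yields $|\cB^{\nu_i}|\leq \mu\,\eHist_i\,|\cB^0|$, i.e.\ a size ratio $|\cB^0|/|\cB^{\nu_i}|\geq 1/(\mu\eHist_i)$. The delicate point is that a state's per-element share also involves its counts: writing $c_i(s)$ for occurrences of $s$ inside event-$i$ windows and $c_0(s)$ for its total occurrences, the ratio carries a factor $c_i(s)/c_0(s)\leq 1$. Equality holds exactly for states whose behavior visits are concentrated inside event windows (the near-event, optimal-trajectory states that the downstream Thm.~\ref{thm:main_appendix} isolates via $\cS^f$), and justifying the count-matching $c_i(s)=c_0(s)$ for the targeted states is where the real work lies; the rest is bookkeeping.

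With the ratio in hand, the remainder is clean algebra. Substituting the hypothesis $\eHist_i \leq \frac{(1-\eta)^m}{(m+1)n\mu}$ gives $\tfrac{1}{\mu\eHist_i}\geq \tfrac{(m+1)n}{(1-\eta)^m}$; the $n$ cancels against the $\eta/n$ weight, leaving
\begin{align*}
\frac{P\!\left(s \overset{\eta}{\sim}\cB^{\nu}\cup\cB^0\right)}{P(s\sim\cB^0)} \;\geq\; (1-\eta) + \frac{\eta(m+1)}{(1-\eta)^m}.
\end{align*}
Finally I would reduce the claim $(1-\eta) + \eta(m+1)(1-\eta)^{-m} \geq (1-\eta)^{-m}$, after multiplying by $(1-\eta)^m>0$, to the scalar inequality $(1-\eta)^{m+1} + (m+1)\eta \geq 1$. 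Setting $x=1-\eta\in(0,1]$ this reads $g(x)=x^{m+1}+(m+1)(1-x)\geq 1$, which follows from $g(1)=1$ together with $g'(x)=(m+1)(x^m-1)\leq 0$ on $(0,1]$ (a Bernoulli-type monotonicity argument), so $g(x)\geq g(1)=1$. This closes the proof for every $m\in\Z^{0+}$, with the density/size bound of the second paragraph being the only non-routine ingredient.
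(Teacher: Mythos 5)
Your proposal follows essentially the same route as the paper: the same convex-combination expansion of $P(s \overset{\eta}{\sim} \cB^{\nu}\cup\cB^0)$, the same over-sampling ratio $P(s\sim\cB^{\nu})/P(s\sim\cB^0)\geq 1/(n\eHist_i\mu)$ obtained from the history-length bound and the density-disparity bound $\rho^{\pi^b}\leq\mu$, and your closing Bernoulli inequality $(1-\eta)^{m+1}+(m+1)\eta\geq 1$ is exactly the paper's geometric-series estimate $\sum_{k=0}^{m}(1-\eta)^k\leq m+1$ in a different order of operations. The ``count-matching'' point you flag as the real remaining work is the same step the paper passes over silently, by writing $P(s\sim\cB^0)$ as if every occurrence of $s$ in the default table factored through an event occurrence within $\eHist_i$ steps.
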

\begin{proof}
Expanding the weighted sampling probability, we have
\begin{align}
P(s &\overset{\eta}{\sim} \cB^{\nu}\cup\cB^0) = \eta P(s \sim \cB^{\nu}) + (1-\eta) P(s \sim \cB^0) \nonumber \\
&= \eta P(s \sim \cB^{\nu}) + (1-\eta) P(s \sim \cB^0) - \dfrac{P(s \sim \cB^0)}{(1-\eta)^m} + \dfrac{P(s \sim \cB^0)}{(1-\eta)^m} \nonumber \\
&= \eta \Bigg(P(s \sim \cB^{\nu}) \nonumber - \dfrac{1 - (1-\eta)^{m+1}}{\eta(1-\eta)^m} P(s \sim \cB^0)\Bigg) + (1-\eta)^{-m}P(s \sim \cB^0) \nonumber \\
&= \eta \left(P(s \sim \cB^{\nu}) - \dfrac{\sum_{k=0}^{m} (1-\eta)^k}{(1-\eta)^m} P(s \sim \cB^0)\right) \nonumber + (1-\eta)^{-m}P(s \sim \cB^0) \nonumber \\
&\geq \eta \left(P(s \sim \cB^{\nu}) - \dfrac{m + 1}{(1-\eta)^m} P(s \sim \cB^0)\right)  + (1-\eta)^{-m}P(s \sim \cB^0) \nonumber \quad (\because (1-\eta) < 1)\\
% &= \eta \left( P(s \sim \cB^{\nu}) -  \dfrac{m + 1}{(1-\eta)^m} P(s \sim \cB^0) \right) + (1-\eta)^{-m}P(s \sim \cB^0) \nonumber \\
&\hspace{-1em}\underset{Def.~\ref{def:event_table_appendix}}{\overset{\because~ (s, \cdot, \cdot, \cdot) \in \cB^{\nu_i}}{=}} \eta \sum_{k=0}^{\eHist_i} P(s_{\eHist_i-k} = s | s_{\eHist_i}=s^{\eCon_i}, \pi^b)  \bigg[ P( s_{\eHist_i}=s^{\eCon_i}|  \cB^{\nu}) - \dfrac{m + 1}{(1-\eta)^m}P( s_{\eHist_i}=s^{\eCon_i} | \cB^0) \bigg] \nonumber\\ 
&\hspace{17.8em} + (1-\eta)^{-m}P(s \sim \cB^0), \quad ~ s^{\eCon_i} \in \cS^{\eCon_i}. \label{eq:pdiff1}
\end{align}
$\cB^{\nu_i}$ contains only trajectories of length $\leq \eHist_i$ that all lead to $s^{\eCon_i} \in \cS^{\eCon_i}$. Therefore $P( s_{\eHist_i}=s^{\eCon_i}|  \cB^{\nu_i}) \geq \dfrac{1}{\eHist_i}$. Given the event-table sampling algorithm described in Sec.~\ref{sec:event_tables} (with the assumption that a table is uniformly sampled), we have $P( s_{\eHist_i}=s^{\eCon_i}|  \cB^{\nu}) = P(\eCon_i \sim \{\eCon_i, \forall i\})P( s_{\eHist_i}=s^{\eCon_i}|  \cB^{\nu_i}) \geq \dfrac{1}{n\eHist_i}$. For the event section $\cE^{\eCon_i}$ from Def.~\ref{def:event_cond} and $\rho^{\pi^*, \cdot, T}(s^{\eCon_i}) \leq 1$, therefore $P( s_{\eHist_i}=s^{\eCon_i}|  \cB^{\nu}) \leq \mu$. Substituting in Eq.~\ref{eq:pdiff1}, we get
\begin{align*}
P(s \overset{\eta}{\sim} \cB^{\nu}\cup\cB^0) &\geq \eta \sum_{k=0}^{\eHist_i} P(s_{\eHist_i-k} = s | s_{\eHist_i}=s^{\eCon_i}, \pi^b) \bigg[ \dfrac{1}{n\eHist_i} - \dfrac{m + 1}{(1-\eta)^m}\mu \bigg] + (1-\eta)^{-m}P(s \sim \cB^0) \\
& \geq (1-\eta)^{-m}P(s \sim \cB^0) \quad \because \dfrac{1}{n\eHist_i} \geq \dfrac{m + 1}{(1-\eta)^m}\mu
\end{align*}
\end{proof}
Typically, the threshold $\mu$ is very small as the state density following the behavior policy drops exponentially with the number of forward time steps. With small values of $\mu$, next we show that $m > 1$, thereby quantifying the over-sampling of experiences in positive powers of $\frac{1}{(1-\eta)}$.

\begin{prop}
$m$ is asymptotically convergent to $ -\ln(\eHist_i\mu) - \ln\ln(\eHist_i\mu) + o(1)$ as $\mu \rightarrow 0.$
\label{prop:convergence}
\end{prop}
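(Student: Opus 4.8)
The plan is to read the Proposition as identifying the largest exponent $m$ admissible in Lemma~\ref{lem:prob_appendix}, namely the largest $m$ for which the hypothesis $\eHist_i \le \frac{(1-\eta)^m}{(m+1)n\mu}$ still holds, and to extract its asymptotics as $\mu\to0$. First I would work at the boundary (equality), where $(1-\eta)^m = (m+1)\,n\,\eHist_i\,\mu$, and take logarithms to obtain
\begin{equation*}
-m\ln(1-\eta) + \ln(m+1) = -\ln\!\big(n\,\eHist_i\,\mu\big).
\end{equation*}
Writing $L := -\ln(\eHist_i\mu)$, which $\to+\infty$ as $\mu\to0$, this is a single transcendental relation in $m$ with a large right-hand side $L-\ln n$.

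Next I would solve this relation asymptotically. The cleanest route recasts it through the Lambert $W$ function: with $u=m+1$ and $a:=-\ln(1-\eta)>0$, the relation $(1-\eta)^m = (m+1)n\eHist_i\mu$ becomes $v\,e^{v}=z$ for $v=a\,u$ and $z = a/\big((1-\eta)\,n\,\eHist_i\,\mu\big)\to\infty$, so that $m = W(z)/a - 1$ exactly. The standard large-argument expansion $W(z)=\ln z - \ln\ln z + o(1)$ then delivers the two leading terms. A more elementary and equally convincing alternative is a bootstrapping argument on $a\,m = L - \ln(m+1) + O(1)$: the zeroth-order estimate $m\sim L/a$, reinserted into the logarithm, gives $m = \tfrac1a\big(L-\ln L\big)+o(1)$, and one further substitution confirms that the remainder past $L-\ln L$ is $o(1)$. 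Specializing to the normalization under which the Proposition is stated collapses this to $m = -\ln(\eHist_i\mu) - \ln\!\big(-\ln(\eHist_i\mu)\big) + o(1)$, the notation ``$\ln\ln(\eHist_i\mu)$'' standing for $\ln(-\ln(\eHist_i\mu))$, which is well defined since $\eHist_i\mu<1$ in the regime of interest.

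The difficulty is entirely in the bookkeeping of lower-order terms rather than in any single idea. I must verify that, after subtracting $L$ and $\ln L$, the residual genuinely collapses to $o(1)$, and I must carry the fixed parameters $a=-\ln(1-\eta)$ and $n$ carefully, since they set the leading coefficient $1/a$ and an additive $O(1)$ shift through $z$; matching the displayed coefficient-one shape is exactly where the statement's normalization of these parameters is used, and this is the main obstacle. A final, minor caveat is integrality: since $m\in\Z^{0+}$ the exponent actually used in Lemma~\ref{lem:prob_appendix} is the floor of the real solution, so I would prove the $o(1)$ expansion for the real threshold solving the boundary equality, the integer rounding being an $O(1)$ effect that does not bear on the use made of the Proposition (that $m$ grows like $\ln(1/\mu)$ and in particular exceeds $1$ for small $\mu$).
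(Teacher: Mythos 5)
Your proposal follows essentially the same route as the paper: both reduce the boundary equality $(1-\eta)^m=(m+1)n\eHist_i\mu$ to $ye^y=z$ with $z\to\infty$ as $\mu\to 0$, solve exactly via the Lambert $W_0$ function as $m=W_0(z)/(-\ln(1-\eta))-1$, and apply the standard expansion $W_0(z)=\ln z-\ln\ln z+o(1)$. Your additional bookkeeping is in fact more careful than the paper's: you correctly flag that the leading coefficient is $1/(-\ln(1-\eta))$ and that $n$, $\eta$ contribute an additive $O(1)$ shift, which the paper's final displayed form silently absorbs (its $o(1)$ is really $O(1)$ unless $1-\eta=1/e$), and you also rightly read $\ln\ln(\eHist_i\mu)$ as $\ln(-\ln(\eHist_i\mu))$.
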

\begin{proof}
Rearranging the condition in Lemma~\ref{lem:prob_appendix} for equality
\begin{gather*}
    e^{(m+1)\ln(1-\eta)} = \frac{(1-\eta)n\eHist_i\mu}{\ln(1-\eta)}(m+1)\ln(1-\eta)
\end{gather*}
Setting $a=1/(1-\eta)$, $b=|nu|\eHist_i\mu/a$, $x=(m+1)$ and $y=(x\ln a)$, the equation reduces to $ye^y = \frac{\ln a}{b}$. The solution to this equation is the Lambert W function $W_0(\frac{\ln a}{b})$, since $\frac{\ln a}{b} > 0$
\begin{gather}
    m = \frac{1}{\ln(a)}W_0\left( \frac{a\ln(a)}{n\eHist_i\mu} \right) - 1.
    \label{Eq:m_w}
\end{gather}
$W_0(x)$ is asymptotic to $\ln x - \ln\ln x + o(1)$ for large values of $x$~\citep{Corless1996Lambertw}. With small values of $\mu$, we get
\begin{gather}
    m = -\ln(\eHist_i\mu) - \ln\ln(\eHist_i\mu) + o(1).
    \label{Eq:m}
\end{gather}
\end{proof}

Sampling using event-tables may introduce a bias as it can change the expected value of the stochastic Bellman operator $\cT Q_k(s, a) = \E_{s' \sim P(.|s, a)}\left[r(s, a) + \gamma  \underset{a'}{\max}~Q_k(s', a')\right]$ for the $(s, a)$ pair whose next state has a finite probability to either belong to the event-tables or not. We can correct this bias by computing weights for weighted importance sampling~\citep{Mahmood2014Weighted} similarly to PER~\citep{Schaul2016Prioritized}.

\begin{lem}
\label{lem:bias_appendix}
Bias introduced by the weighted sampling $s \overset{\eta}{\sim} \cB^{\nu} \cup \cB^0$ between the event and the default tables is given by
\begin{align*}
    w(s, a) = \begin{cases}
      1 - \eta \underset{s' \in \cS}{\sum}P((s, a, \cdot, s') \notin \cB^{\nu} |~ s, a), \\
      \hspace{9em} \text{if}~ (s, a, \cdot, \cdot) \in \cB^{\nu} \\
      \frac{1}{1 - \eta}, \hspace{2em} \text{otherwise}.
    \end{cases} 
\end{align*}
\end{lem}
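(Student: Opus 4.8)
The plan is to treat $w(s,a)$ as an importance-sampling reweighting that makes the weighted empirical Bellman target an unbiased estimate of $\cT Q_k(s,a)$, identical to what sampling from the default table $\cB^0$ alone would produce. I would start from the observation that $\cB^0$ stores \emph{every} collected transition, so conditional on a pair $(s,a)$ its next states are distributed according to the true kernel $P(\cdot \mid s,a)$; hence sampling from $\cB^0$ yields an unbiased target, and it suffices to reweight the combined sampler $s\overset{\eta}{\sim}\cB^{\nu}\cup\cB^0$ so that its induced conditional next-state law again agrees with $P(\cdot\mid s,a)$.

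Fixing $(s,a)$, I would write the combined draw as the mixture that takes the default table with probability $1-\eta$ and the event table with probability $\eta$, and split the next states into those whose transition is stored in $\cB^{\nu}$ (``event'' outcomes) and those that are not. Because every event outcome is present in both tables while every non-event outcome is present only in $\cB^0$, the event outcomes are over-represented and the non-event outcomes under-represented under the mixture. Writing $q(s,a)=P((s,a,\cdot,\cdot)\in\cB^{\nu}\mid s,a)=1-\sum_{s'\in\cS}P((s,a,\cdot,s')\notin\cB^{\nu}\mid s,a)$ for the probability that a transition out of $(s,a)$ is routed to an event table, I would compute the induced conditional next-state density $\tilde P(s'\mid s,a)$ of the mixture and then solve the per-outcome matching equation $w(s,a)\,\tilde P(s'\mid s,a)=P(s'\mid s,a)$. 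The non-event branch gives $\tilde P=(1-\eta)P$ and hence the up-weight $w=\tfrac{1}{1-\eta}$, while the event branch carries the additional $\eta$-mass and yields the down-weight $w(s,a)=1-\eta\big(1-q(s,a)\big)=1-\eta\sum_{s'\in\cS}P((s,a,\cdot,s')\notin\cB^{\nu}\mid s,a)$, which are exactly the two cases in the statement.

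To finish, I would substitute these weights back into the weighted target $\E_{s'}[\,w(s,a)\,(r+\gamma\max_{a'}Q_k(s',a'))\mid s,a]$ and confirm that it reproduces $\cT Q_k(s,a)$, so the reweighted update keeps the same fixed point and optimality is preserved, paralleling potential-based shaping. Two sanity checks fall out immediately: in a deterministic MDP every $(s,a)$ has a single outcome, so $q\in\{0,1\}$ forces $w\equiv 1$ and no correction is needed (as claimed in Section~\ref{sec:event_tables}); and when the event condition is triggered on all outcomes of $(s,a)$, $q=1$ and the event weight collapses to $1$.

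The main obstacle is modelling the event table's conditional law correctly. Whether a transition $(s,a,\cdot,s')$ is stored in $\cB^{\nu}$ is not a function of $s'$ alone: under Def.~\ref{def:event_table_appendix} it depends on whether an event occurs within the following $\eHist$ steps, i.e.\ on the future of the trajectory. Thus $P((s,a,\cdot,s')\notin\cB^{\nu}\mid s,a)$ must be read as a marginal over that downstream randomness rather than a deterministic indicator, and the over-representation of event outcomes has to be accumulated consistently across the $n$ tables without double counting transitions that belong to several event histories. The remaining work is bookkeeping: checking the weights are computable in practice, by counting in discrete domains how often a given $(s,a)$ was \emph{not} part of any event history, and via a priority sum-tree in continuous domains, which is exactly the implementation referenced at line~\ref{line:bias} of Algorithm~\ref{Alg:EvER}.
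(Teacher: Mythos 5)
Your proposal matches the paper's proof in approach: both fix $(s,a)$, decompose the $\eta$-mixture over the default and event tables into event and non-event next-state outcomes, observe that non-event outcomes are reached only through $\cB^0$ with probability $1-\eta$ (giving the $\tfrac{1}{1-\eta}$ up-weight) while the displaced mass $\eta\sum_{s'\in\cS}P((s,a,\cdot,s')\notin\cB^{\nu}\mid s,a)$ is absorbed by the event outcomes (giving the stated down-weight). Your extra remarks --- that membership in $\cB^{\nu}$ depends on downstream trajectory randomness rather than on $s'$ alone, and the deterministic-MDP sanity check --- go beyond the paper's equally informal argument but do not change the route.
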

\begin{proof}
Event tables construction (see Algorithm~\ref{Alg:EvER}) prioritizes transitions that are in either of the event buffers over the ones that are out. Given an $(s, a)$ pair, the transitions that are not present in any of the event tables are sampled from the default table with a probability of $1-\eta$. Therefore, the remainder $\eta \sum_{s' \in \cS} P(s' \notin \cB^{\nu} |~ s, a)$ is adjusted among the probabilities of transitions in the event-buffers. The correction weight that is needed is $1 - \eta \sum_{s' \in \cS} P(s' \notin \cB^{\nu} |~ s, a)$.
For the transitions that are not in the event buffers, only a scaled correction of $(1-\eta)$ is required.
\end{proof}

Now that we have quantified the oversampling and bias-corrections of event histories, we state our main theorem showing that with sufficient history and events that are correlated with optimal behavior, the convergence speed of Q-learning to an optimal policy is improved over the monolithic ERB ($\cB = \cB^0$) sample complexity ($N^{\cB, K}$). 

\begin{thm_app}
\label{thm:main_appendix}
Let $\cS^f$ denote the set of states \st~ the sampled optimal trajectories starting from those states, are contained in the combined event-buffer with a probability greater than $\bar{p} \in (0, 1]$
\begin{align*}
 \cS^f = \{s \suchthat P(\Gamma^{\pi^*}_{s, s'} \subset \cB^{\nu}) \geq \bar{p}\}.
\end{align*}
Under the conditions of Prop.~\ref{prop:complexity} and if ${\displaystyle \eHist_{\forall i \in [1, n]} \leq \frac{(1 - \eta)^m}{(m+1)n\mu}}$, then
\begin{gather*}
P\left(N^{\cB^{\nu}\cup\cB, K} \leq (1-\eta)^{2m} N^{\cB, K}\right) \geq \bar{p}, ~\forall s \in \cS^f, m \in \Z^{0+}.
\end{gather*}
\end{thm_app}
\begin{proof}
Let $M^{\cS^f} = (\cS^f, \cA, \cP, \cR)$ be a reduced MDP of $M$ with the initial, event and terminal states in $\cS^f$ (Prop.~\ref{prop:event_aux}). With the bias correction applied from Lemma~\ref{lem:bias_appendix}, $\pi^{*, M^{\cS^f}}(s) = \pi^{*, M}(s), \forall s \in \cS^f$. With $c^{\cB^{\nu}\cup\cB^0}:=\underset{s \in \cS^f, a \in \cA}{\min}P((s, a, \cdot, \cdot) \sim  \cB^{\nu}\cup\cB^0)$ and $L^{\cB^{\nu}\cup\cB^0}:=\underset{s \in \cS^f, a \in \cA}{\max}P((s, a, \cdot, \cdot) \sim \cB^{\nu}\cup\cB^0)$ as constants the result of the theorem then follows from using Lemma~\ref{lem:prob_appendix} in Prop.~\ref{prop:complexity} for MDP $M^{\cS^f}$ using event-tables for experience replay.
% From Prop. 
% Since for each event section, $p^i \geq \E\left[|\Gamma^*_{s_0 \in \cI^{\nu_i}, s_j \in \cS^{\nu_i}}|\right]$, we have $\cS^f_0 = \cI \underset{\forall i}{\bigcup} \cI^{\nu_i} = \cI \underset{\forall i}{\bigcup} \cS^{\nu_i} \subseteq \cS^f$. Therefore, $\cS^f$ can be split into two spaces $\cS^f_0$ and $\cS^f_{-} = \cS^f \setminus \cS^f_0$. $\pi^*: \cS^f_0 \times \cA$ defines the optimal behavior from the initial states of the MDP, while $\pi^*: \cS^f_{-} \times \cA$ defines the optimal behavior starting from the remaining states in the event-buffers, \st~ $\Gamma^*_{s \in \cS^f_{-}, \cdot} \subseteq \cB^{\nu}$. Using Lemma~\ref{lem:tau} for the two spaces $(\cS^f_0, \cS^f_-)$, we have for a reduced MDP $M^{\cS^f} = (\cS^f, \cA, \cP, \cR)$
% \begin{align}
%     \pi^{*, M^{\cS^f}}(s) = \pi^{*, M}(s), \forall s \in \cS^f.
% \label{Eq:pi_red}
% \end{align}
% The result of the theorem then follows from using Lemma~\ref{lem:prob} in Prop.~\ref{prop:complexity} for MDP $M^{\cS^f}$.
\end{proof}

% \commenttw{Let's add an intuition sentence here and also define $m$ in the corollary statement itself.}
% \begin{tcorr}
% For the given weighted sampling strategy, the improved performance also extends to a set larger than $\cS^f$ parameterized by $m$.
% \end{tcorr}
% \begin{proof}
% \end{proof}

\begin{rem}
Applying the results of Theorem~\ref{thm:main_appendix} and Lemma~\ref{lem:prob_appendix} in the loss function of Q-learning (Eq.~\ref{eq:qloss}), the speed-up $(1-\eta)^{-2m}$ compounds across multiple outer iterations of target Q-learning (Alg.~\ref{alg:targetq}). Therefore, the transitions that are not in the event buffers are also benefited in expectation, countering the lower sampling rate of ($1-\eta$).
\end{rem}

% \subsection{Appendix: Details of MiniGrid Experiments}\label{sec:minigrid_sup}
% This section provides details on the parameters of the MiniGrid environments and the algorithms used in the experiments from Section~\ref{sec:minigrid}.  Further experiments and extensions are also detailed below showing an example of \ETAlg avoiding catastrophic forgetting in the obstacle course domain, advantages of using intermediate events, and a combination of \ETAlg with the CAGrad~\citep{Liu2021Conflict} algorithm, which avoids conflicts in gradient updates stemming from the different event tables.

\subsection{Learning Parameters and Resources Used in MiniGrid and Continuous Control Experiments}
Table \ref{table:mini_grid_params} lists the parameters used in our MiniGrid experiments.  Event conditions are true when the agent interacts with an object, the goal, or crosses between rooms.  History lengths are shorter in the obstacle course where the events are denser (since there are more objects), but we use a larger buffer there to accommodate the more diverse skills and larger number of event tables. For all the MiniGrid experiments, we used two asynchronous rollout workers each with 1.5 CPUs and 2GB memory to collect and transmit experience data to a replay buffer efficiently implemented using Reverb\citep{Cassirer2021Reverb}. The training was conducted using two virtual CPUs and 3 GB of memory at a rate of 40 training steps per sec. For the continuous control tasks (LunarLanderContinuous and MuJoCo domains), we used a single rollout worker with 1 CPU and 2GB memory, and the training was conducted asynchronously using 7.7 CPUs with 8GB of memory. Table \ref{table:mujoco_params} lists all the algorithm parameters used for these benchmark experiments.

\begin{table}[htbp]
  % \footnotesize
  \caption{Parameters in the MiniGrid domain}
  \label{table:mini_grid_params}
  \renewcommand{\arraystretch}{1.7} 
    \begin{adjustbox}{max width=\columnwidth}

  \begin{tabular}{| m{12em} | m{20em}| m{20em} |}
    \hline
    \textbf{Parameter} & \textbf{Three Room Grid World} & \textbf{Obstacle Course}\\
    \hline

    Event conditions $(\omega_i)$ & at-gap, done & at-spike, at-lava, at-gap, pickup-key, at-door, done\\
    \hline

    Event history length $(\tau)$ & 200 & 50\\
    \hline

    Event sampling probabilities $(\eta_i)$ & Default: 0.5, at-gap: 0.2, done: 0.3 & Default: 0.5, at-spike: 0.1, at-lava: 0.0625, at-gap: 0.0625, pickup-key: 0.0625, at-door: 0.1125, done: 0.1\\
    \hline
    
    Max buffer capacity & 20000 & 100000 \\
    \hline

    Value function networks & 1 hidden layer of 256  ReLU units & 2 hidden layers of 256  ReLU units each\\
    \hline

    Learning rate &
    1e-3 & 5e-4\\
    \hline 
    
    Table capacity sizes $(\kappa_i)$ & \multicolumn{2}{l|}{max buffer capacity * $\eta_i$} \\
    \hline

    batch-size & \multicolumn{2}{l|}{32}\\
    \hline

    epsilon-greedy $(\epsilon)$ & \multicolumn{2}{l|}{0.3}\\
    \hline

    TD-Priority exponent &
    \multicolumn{2}{l|}{0.65} \\
    
    \hline 
    
    Stale network refresh rate & \multicolumn{2}{l|}{0.01}\\
    \hline
    
    Discount factor & \multicolumn{2}{l|}{0.99}\\
    \hline
  \end{tabular}
  \end{adjustbox}
\end{table}

\begin{table}[t]
  % \footnotesize
  \caption{Parameters in the Continuous Control Benchmark Tasks}
  \label{table:mujoco_params}
  \renewcommand{\arraystretch}{1.7} 
    \begin{adjustbox}{max width=\columnwidth}

  \begin{tabular}{| m{12em} | m{20em}| m{20em} |}
    \hline
    \textbf{Parameter} & \textbf{LunarLanderContinuous} & \textbf{MuJoCo}\\
    \hline

    Event conditions $(\omega_i)$ & land-between-flags (lf), land-near-middle (lm) & Salient reward thresholds \newline Half-Cheetah: ($r>8$, $r>12$, $r>16$) \newline Hopper: ($r>2.5$, $r>3$, $r>3.5$) \newline Walker2D: ($r>4$, $r>5$, $r>6$, $r>7$)\newline Humanoid: ($r>5$, $r>7$, $r>10$)\\
    \hline

    Event sampling probabilities $(\eta_i)$ & Default: 0.7, lf: 0.1, lm: 0.2 & Default: 0.6, \newline Half-Cheetah: ($r_8$: 0.2, $r_{12}$: 0.1, $r_{16}$: 0.1) \newline Hopper: ($r_{2.5}$: 0.2, $r_3$: 0.1, $r_{3.5}$: 0.1) \newline Walker2D: ($r_4$: 0.2, $r_5$: 0.1, $r_6$: 0.05, $r_7$: 0.05)\newline Humanoid: ($r_5$: 0.2, $r_7$: 0.1, $r_{10}$: 0.1)\\
    \hline
    
    Max buffer capacity & 20000 & 1000000 \\
    \hline

    batch-size & 32 & 256\\
    \hline
    
    Learning rates &    \multicolumn{2}{l|}{Actor: 0.0003, Critic: 0.0003}\\
    \hline 
    
    SAC networks & \multicolumn{2}{l|}{2 hidden layer of 256 ReLU units each, Gaussian Policy}\\
    \hline

    Target entropy \newline (optimized entropy) & -2.0 & Half-Cheetah: -6.0 \newline Hopper: -3.0 \newline Walker2D: -6.0 \newline Humanoid: -17.0\\
    \hline

    Table capacity sizes $(\kappa_i)$ & \multicolumn{2}{l|}{max buffer capacity * $\eta_i$} \\
    \hline

    Event history length $(\tau)$ & \multicolumn{2}{l|}{200}\\
    \hline

    Stale network refresh rate & \multicolumn{2}{l|}{0.005}\\
    \hline
    
    Discount factor & \multicolumn{2}{l|}{0.99}\\
    \hline
  \end{tabular}
  \end{adjustbox}
\end{table}

\subsection{Appendix: Details of Gran Turismo Experiments}
\label{sec:gt_appendix}

\begin{figure}[t]
\centering
\includegraphics[width=0.99\columnwidth]{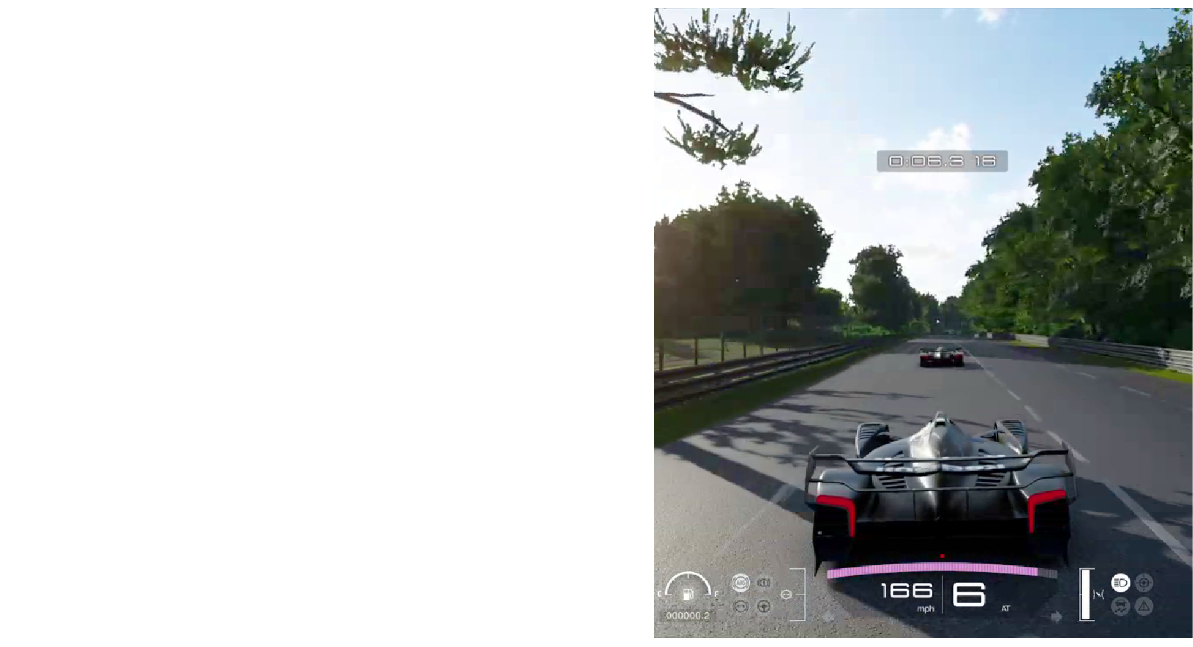}
  \caption{The section of the Sarthe track used for slingshot passing and a screenshot from the experiment.}
  \label{fig:sarthe}
\end{figure}

\begin{figure}[t]
  \centering
    \includegraphics[width=0.99\columnwidth]{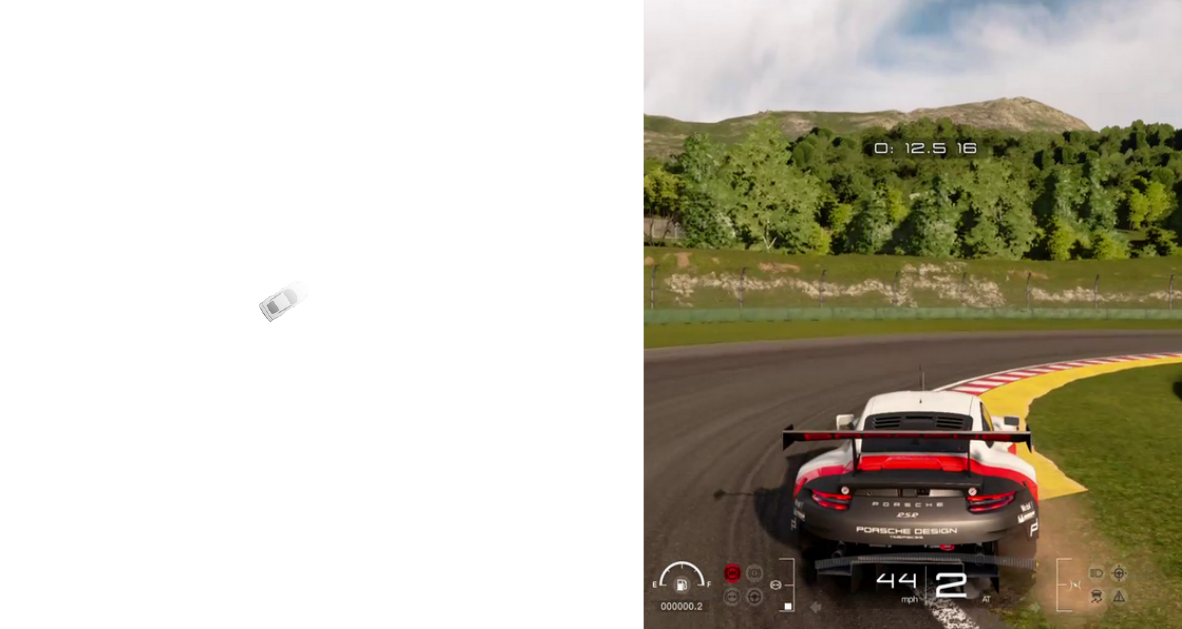}
  \caption{The Maggiore track used for Time Trial training and a screenshot from the experiment.}
  \label{fig:maggiore}
\end{figure}

The Gran Turismo\textsuperscript{\texttrademark} (GT) Sport (\url{https://www.gran-turismo.com/us/}) racing simulator for PlayStation\textsuperscript{\texttrademark} 4 allows an agent to race automobiles with highly realistic dynamics.  The environment was recently used in \citep{Wurman2022Outracing} to demonstrate an RL system that learned to beat human e-Sports champions on 3 different tracks in 4v4 (4 humans and 4 RL agent) competitions.  Our experiments investigate smaller scenarios, either on sections of a track or without opponents, in order to isolate the specific effects of \ET.  Almost all of the parameters of the representation and learning algorithms are the same as those reported in \citep{Wurman2022Outracing} for the selected track / car combinations.  We provide a summary of these settings below and note the small deviations for our particular scenarios.

The first experiment investigated is a slingshot passing scenario on a 1700m. straightaway at the Circuit de la Sarthe (Sarthe) track, using a Red Bull X2019 Competition race car, similar to a Formula 1 vehicle (see Figure \ref{fig:sarthe}).  Training is performed in a one-on-one race against a built-in AI opponent from the game with the RL agent always starting behind.  To succeed, the RL agent needs to use the opponent’s slipstream to accelerate beyond its top speed in open air and use the added momentum to ``slingshot’’ past the opponent and hold it off to the end of the section.
The second setting is a time trial competition on the full Lago Maggiore GP (Maggiore) track using a Porsche 911 from the FIA GT3 class of cars (see Figure \ref{fig:maggiore}).  There the experiment focuses on driving fast lap times without going off course.

% \begin{figure}[t]
%   \centering
%   \begin{subfigure}[b]{0.9\columnwidth}
%     \includegraphics[width=\columnwidth]{figures/sarthe_track_section.pdf}
%   \end{subfigure}
%   \begin{subfigure}[b]{0.9\columnwidth}
%     \includegraphics[width=\columnwidth]{figures/sarthe_screenshot.pdf}
%   \end{subfigure}  
%   \caption{The section of the Sarthe track used for slingshot passing and a screenshot from the experiment.}
%   \label{fig:sarthe}
% \end{figure}

% \begin{figure}[t]
%   \centering
%   \begin{subfigure}[b]{0.6\columnwidth}
%     \includegraphics[width=0.9\columnwidth]{figures/maggiore_track.pdf}
%   \end{subfigure}
%   \begin{subfigure}[b]{0.39\columnwidth}
%     \includegraphics[width=0.9\columnwidth]{figures/maggiore_screenshot.pdf}
%   \end{subfigure}  
%   \caption{The Maggiore track used for Time Trial training and a screenshot from the experiment.}
%   \label{fig:maggiore}
% \end{figure}

We used the same state features as the prior work in GT.  State features include information about the agent’s velocity, acceleration, tire slip, most recent actions (steering, throttle, and brake), position on the track, and the ``course points’’ outlining the shape the upcoming track.  Indicator features capture collisions with the walls or other cars as well as going off course by more than two tires (the game’s definition of leaving the track).  A $[0,1]$ \emph{slipstream} feature provided by the game measures the draft from car(s) ahead is also passed in as a state feature. Opponent cars were represented by two lists, one for opponents ahead, and one for opponents behind, with opponents masked out if they are more than 20 meters behind or 75 m. ahead.  Each opponent in view was represented with their agent-centered relative position, velocity, and acceleration.  The agent controlled the car by sending actions for the the steering wheel and a combined throttle/brake dimension.

The previous work with Gran Turismo used different reward functions on different tracks and different scenarios (such as Time Trial racing and head-to-head competition).  We used the reward functions from their investigation that best fit our scenarios. In our slingshot passing tests at the Sarthe track, we used the reward components previously used in 4v4 competitive racing.  These components and their weights are specified in Table \ref{tab:rewards} and included a reward for forward progress, penalties for hitting the wall, and a penalty for going off course that was linearly proportional to the agent’s velocity.  A passing reward provided reward for approaching an opponent from behind or pulling away from ahead (and penalized the opposites).  For car collisions, three different components were used to penalize different aspects of collisions, including hitting cars from behind, at-fault collisions, and a penalty for any collision at all.  Because the Slingshot experiment focused on a section without difficult curves we did not use the ``mistake learning'' spin-out replays that prior work used on the Sarthe track.  

\begin{table}[t]
  \caption{Reward components used in the two experiments in the paper, slingshot passing and time trial training.  The reward weights are all the same as \citep{Wurman2022Outracing}’s results but the car collision and passing components are dropped in the time trial scenario. }
  \centering
    \begin{adjustbox}{max width=\columnwidth}
  \begin{tabular}{ | c | c | c |}
    \hline
    \textbf{Reward} & \textbf{weight in } & \textbf{weight in } \\
    \textbf{Component} & \textbf{slingshot (Sarthe)} & \textbf{off-course TT (Maggiore)} \\ \hline \hline
     Progress & $1$ & $1$ \\ \hline
     Off-course$^2$ & $0$ & $0.01$ \\ \hline
 Off-course-linear & $5$ & $0$ \\ \hline
wall penalty & $0.1$ & $0.1$\\ \hline
tire slip (per tire) & $0$ & $0.25$ \\ \hline
passing bonus & $0.5$ & $0$ \\ \hline
car collision (any) & $4$ & $0$ \\ \hline
car collision (rear) & $0.1$ & $0$ \\ \hline
car collision (unsportsmanlike) & $5$ & $0$ \\ \hline
   \end{tabular}
  \label{tab:rewards}
  \end{adjustbox}
\end{table}

 For the off-course experiments at Maggiore, we used the reward components from previous work at that track but dropped the reward terms pertaining to other cars (such as collision or passing).  These components include a wheel-slip penalty and an off-course penalty based on the square of the agent’s velocity, which replaces the linear off-course penalty from Sarthe.

The base RL algorithm and all of its parameters were kept the same as Wurman et al.'s experiments.  The base RL algorithm was Quantile Regression Soft Actor Critic (QR-SAC), a version of SAC where the critic networks represent the quantiles of the value function rather than just their mean.  The neural networks used for value function and policy networks were all feed-forward networks with 2048 ReLU units in each of 4 hidden layers.  Mini-batches of size $1024$ were used with $6000$ mini-batches per epoch.  These samples were pulled from an ERB with total capacity ($\sum \kappa_i$) of 2.5 million (slingshot test on a 1.7 kilometer segment) or 10 million (time trial test on a nearly 6 kilometer track). Each table was blocked from sampling until it had at least 1024 experiences in the time trial test or 5000 experience tuples in the slingshot test (avoiding over-fitting). The Adam optimizer was used to optimize the weights with learning rates of $5 \times 10^{-5}$ for the $Q$-networks and $2.5 \times 10^{-5}$ for the policy network.  A discount factor of $0.9896$ was used and the SAC entropy temperature controlling exploration was $0.01$.  Dropout ($0.1$) was used when learning the policy network.  

Following the hardware setup from the prior work, both of our experiments used $21$ PlayStations in parallel during training with $1$ of those PlayStations typically devoted to evaluations.   In the slingshot experiments on a small segment of the track, each PlayStation had only one agent racing against an opponent.  The competitor in this scenario was the game’s own built-in AI with randomization controlling the spacing of the agents (uniformly drawn in $[10, 40]$ meters) and lateral spacing. The learning agent always starting behind the opponent.  To provide extra diversity, the \emph{Balance of Power} on the opponent, that is its horsepower and weight, were randomly increased or decreased up to $25\%$ in each training episode.  Training episodes were capped at 60 seconds although less than 25 seconds were usually sufficient to complete the section (also ending the training episode). 

For the time-trial training scenario at Maggiore, no opponents were needed but we utilized the parallel collection strategy from \citep{Fuchs2021Super,Wurman2022Outracing} to spawn $20$ different agents uniformly around the track and collect data from each of these, yielding roughly $400$ experiences per time step.  Training episodes in these scenarios lasted $150$ seconds.  Notice that in both experiments there is only one training scenario with some minor randomization on exact locations, so we did not need to employ a task sampling scheme like the one needed to master the full racing scenario.

Using the same setup as prior work, agents on the PlayStations were controlled by rollout workers using two virtual CPUs and 3.3 GB of memory at a frequency of 10Hz.  Policies were sent from the trainer to the rollout workers at the beginning of each training episode and kept static until the next episode.  Actions and observations were sent between the rollout worker and the PlayStation through a restricted API .  Experience was periodically streamed back to a trainer and stored in an ERB implemented via Reverb \citep{Cassirer2021Reverb}.  Training was conducted using one NVIDIA V100 or half of an NVIDIA A100, $\sim$8 virtual CPUs and 55 GB of RAM.

\end{document}